\newtheorem{prop}{Proposition}
\newtheorem{lemma}{Lemma}
\def\dodref#1#2!#3\relax{#1{#2}\ref{#2!#3}}
\long\def\comment#1{} 
\DeclareMathOperator*{\argmin}{arg\!\min}
\title{\bf Robust Geodesic Regression
\medskip
}
\author{
\sc  Ha-Young Shin and Hee-Seok Oh\\ 
Department of Statistics\\
Seoul National University\\
Seoul 08826, Korea \\
\\
}
\begin{document}
\maketitle
	
\begin{abstract}

\noindent
This paper studies robust regression for data on Riemannian manifolds. Geodesic regression is the generalization of linear regression to a setting with a manifold-valued dependent variable and one or more real-valued independent variables. The existing work on geodesic regression uses the sum-of-squared errors to find the solution, but as in the classical Euclidean case, the least-squares method is highly sensitive to outliers. In this paper, we use M-type estimators, including the $L_1$, Huber and Tukey biweight estimators, to perform robust geodesic regression, and describe how to calculate the tuning parameters for the latter two. We show that, on compact symmetric spaces, all M-type estimators are maximum likelihood estimators, and argue in favor of a general preference for the $L_1$ estimator over the $L_2$ and Huber estimators on high-dimensional spaces. A derivation of the Riemannian normal distribution on $S^n$ and $\mathbb{H}^n$ is also included. Results from numerical examples, including analysis of real neuroimaging data, demonstrate the promising empirical properties of the proposed approach.
\vspace{\baselineskip}

\noindent
\textbf{Keywords}: Geodesic regression;  Manifold statistics;  M-type estimators;  Riemannian manifolds; Robust statistics. 
 
\end{abstract}

\section{Introduction} \label{intro}

\pagenumbering{arabic}

Much work has been done to generalize classical statistical methods for Euclidean data to manifold-valued data. Examples include principal geodesic analysis \citep{Fletcher2004}, analogous to principal component analysis, and geodesic regression \citep{Fletcher2013}, analogous to linear regression.

It is possible to conceptualize many types of data as lying on manifolds. Directional data in $\mathbb{R}^3$ can be visualized as lying on $S^2$; three-dimensional rotations can be represented as unit quaternions on $S^3$. Diffusion in the brain can be modeled by orientation distribution functions on $S^\infty$, which is approximated by $S^n$ for a high value of $n$. Hyperbolic space is well-suited for encoding hierarchical structures like graphs and trees. The space of symmetric positive-definite (SPD) matrices has many useful applications: In neuroimaging, diffusion tensor imaging data can be modeled as $3\times3$ SPD matrices \citep{Kim2014,Zhang2019}, and in computer vision, covariance matrices, which are SPD matrices, are used in appearance tracking \citep{Cheng2013}. For shape analysis, two-dimensional shape data can be represented as points on the complex projective space \citep{Fletcher2013,Cornea2017}, and the medial manifolds, $M(n)=(\mathbb{R}^3\times\mathbb{R}^+\times S^2\times S^2)^n$, provide models for the shapes of organs, such as the hippocampus \citep{Fletcher2004}.

Geodesic regression, which generalizes linear regression to manifolds, has been studied in recent years \citep{Fletcher2013,Kim2014,Cornea2017}. In this study, we explore a new robust approach to geodesic regression that accounts for potential outliers by using M-type estimators, such as the $L_1$, Huber, and Tukey biweight estimators. The key step of implementing robust geodesic regression is to solve the score (estimating) equations to estimate parameters in the regression model. We propose a gradient descent algorithm to carry out robust regression on Riemannian manifolds, calculating the gradients by considering Jacobi fields for simple regression and parallel transport for multiple regression. We further show that M-type estimators are equivalent to maximum likelihood estimators on certain manifolds as a theoretical justification for the proposed method. Thus, the proposed method can be considered as an extension of M-type estimators in Euclidean space to Riemannian manifolds. In addition, we provide the theoretical values of the cutoff parameters for Huber's and Tukey's biweight functions under certain situations. \cite{Zhang2019} addressed the issue of regression on manifolds in the presence of grossly corrupted data, but using a different method: explicitly modeling gross errors in the data and removing them to produce a corrected data set.

Beyond the aforementioned works, many other approaches to regression on manifolds have been proposed in the literature. \cite{Hinkle2014} developed a framework for polynomial regression on Riemannian manifolds that provides greater flexibility than geodesics. \cite{Du2014} studied geodesic regression on orientation distribution functions as elements of a Riemannian manifold. \cite{Hong2016} addressed the problem of intrinsic parametric regression on the Grassmannian manifold. As for nonparametric approaches to regression on manifolds, \cite{Davis2010} developed a regression analysis method that generalizes the conventional Nadaraya-Watson kernel method to manifold-valued data using the Fr\'echet expectation. \cite{Banerjee2016} presented a novel non-linear kernel-based nonparametric regression method for manifold-valued data with applications to real data collected from patients with Alzheimer's disease and movement disorders. \cite{Steinke2008}, \cite{Hein2009}, and \cite{Steinke2010} studied nonparametric regression between Riemannian manifolds. Of particular relevance to the current study is \cite{Hein2009}, who proposed a family of robust nonparametric kernel-smoothing estimators with metric space-valued outputs including a robust median-type estimator and the classical Fr\'echet mean. 

The rest of this paper is organized as follows. Section 2 briefly reviews the required background knowledge of differential geometry and geodesic regression. Section 3 presents the proposed methods for robust geodesic regression and a practical algorithm. A theoretical property of M-type estimators, their cutoff parameters, and some advantages of the $L_1$ estimator on high-dimensional spaces are also discussed. In Section 4, numerical experiments are presented, including simulation studies and a real data analysis of the shape of the corpus callosum in females with Alzheimer's disease. A summary and possible avenues for future research are provided in Section 5. Appendix \ref{appena} contains proofs and derivations, including the details of calculating the cutoff parameter for the Huber and Tukey biweight estimators, the efficiency of the $L_1$ estimator and the Riemannian normal distribution, including random generation, on $S^n$ and $\mathbb{H}^n$. Appendix \ref{appenb} provides details on the sphere, hyperbolic space, and Kendall's two-dimensional shape space. The data and R code used for the experiments are available at \url{https://github.com/hayoungshin1/Robust-Geodesic-Regression}.

This paper is based on the master's thesis \citep{Shin2020} of one of the authors, completed under the supervision of the other.

\section{Background}

\subsection{Differential Geometry Preliminaries} \label{preliminaries}

For a smooth manifold $M$ and a point $p\in M$, the tangent space $T_pM$ is the subspace consisting of all vectors tangent to $M$ at $p$. The elements of the tangent bundle of $M$, $TM$, take the form $(p,v)\in M\times T_pM$, so $TM$ is the disjoint union of the tangent spaces of $M$. A Riemannian manifold $M$ is a smooth manifold with a \textit{Riemannian metric}; that is, a family of inner products on the tangent spaces that smoothly vary with $p$. This metric can be used to measure lengths on $M$. A geodesic between two points on $M$ is the shortest length curve on $M$ that connects them; in Euclidean space, geodesics are straight lines. The \textit{geodesic} (or \textit{Riemannian}) \textit{distance} between two points is the length of this geodesic segment.

A geodesic $\gamma$ is defined by its initial point, $p=\gamma(0)\in M$ and velocity, $v=\gamma^\prime(0)\in T_{\gamma(0)}M$, where $\gamma^\prime(t)=d\gamma(t)/dt$. Then the \textit{exponential maps}, $\mathrm{Exp}_p:T_pM\rightarrow M$, are defined by $\mathrm{Exp}_p(v)=\gamma(1)$, and the \textit{logarithmic maps}, $\mathrm{Log}_p$, are the inverses of the exponential maps. The exponential and logarithmic maps are analogous to vector addition and subtraction in $\mathbb{R}^n$.  If $p_2$ is in the domain of $\mathrm{Log}_{p_1}$, then the geodesic distance between $p_1$ and $p_2$ is defined as $d(p_1,p_2)=\lVert\mathrm{Log}_{p_1}(p_2)\rVert$. It will useful to denote $\mathrm{Exp}_p(v)$ and $\mathrm{Log}_{p_1}(p_2)$ as $\mathrm{Exp}(p,v)$ and $\mathrm{Log}(p_1,q_2)$, respectively, taking $\mathrm{Exp}$ and $\mathrm{Log}$ as bivariate functions.

Take a differentiable curve $\gamma:[a,b]\rightarrow M$, not necessarily a geodesic, and a tangent vector $v\in T_{\gamma(a)}M$. The unique vector field $X$ along $\gamma$ that satisfies $X(a)=v$ and $\nabla_{\gamma^\prime}X=0$, where $\nabla$ is the Levi-Civita connection, is called the \textit{parallel transport} of $v$ along $\gamma$. For $p_1, p_2 \in M$, if there exists a uniquely minimizing connecting geodesic, we will denote parallel transport of the tangent vector $v \in T_{p_1}M$ to $T_{p_2}M$ along this geodesic by $\Gamma_{p_1 \rightarrow q_2}(v)$.

Given a family of geodesics $\{\gamma_s\}$, parametrized by and varying smoothly with respect to $s\in \mathbb{R}$, a Jacobi field is a vector field along the geodesic $\gamma_0$ describing how the geodesic family varies at each point $\gamma_0(t)$ of $\gamma_0$ with respect to $s$:
\vspace{-3mm}
\begin{equation}
J(t) = \frac{\partial \gamma_s(t)}{\partial s}\Big\rvert_{s=0}. \notag
\end{equation} 
The Jacobi field $J$ satisfies a second-order differential equation called the Jacobi equation:
\vspace{-3mm}
\begin{equation}
\frac{D^2}{dt^2}J(t)+R(J(t), \gamma^\prime(t))\gamma^\prime(t)=0,\notag
\end{equation}
where $R$ is the Riemann curvature tensor. Jacobi fields are important in the context of geodesic regression because they can be used to calculate the derivative of the exponential map on symmetric spaces. For details on Jacobi fields and their relation to the derivative of the exponential map, refer to \cite{doCarmo1992} and \cite{Fletcher2013}. 

\subsection{Geodesic Regression}

Given a dependent variable $y$ on a Riemannian manifold $M$ and an independent variable $x\in \mathbb{R}$, the simple geodesic regression model of \cite{Fletcher2013} is 
\begin{equation} \label{model1}
y=\mathrm{Exp}\big(\mathrm{Exp}(p,xv),\epsilon\big), 
\end{equation}
where $p\in M, v\in T_pM$, and $\epsilon\in T_{\mathrm{Exp}(p,xv)}M$. \cite{Kim2014} extended the simple model of (\ref{model1}) to a multiple regression model with several independent variables $x^1,\ldots,x^k\in \mathbb{R}$,
\begin{equation} \label{model2}
y=\mathrm{Exp}\big(\mathrm{Exp}(p,\sum_{j=1}^{k}x^jv^j),\epsilon\big),
\end{equation}
where $v^1,\ldots,v^k\in T_pM$ and $\epsilon$ is in the tangent space at $\mathrm{Exp}(p,\sum_{j=1}^{k}x^jv^j)$ (the superscripts are indices, not exponents). For convenience, let $V=(v^1,\ldots,v^k)$ and $Vx:=\sum_{j=1}^{k}x^jv^j$. Note that we follow the notations of \cite{Fletcher2013} and \cite{Kim2014}.  

Now given $N$ data points $(x_i,y_i)\in \mathbb{R}^k\times M$, we define the squared loss function $L$ by 
\begin{equation}
\label{loss}
L(p,V)=\sum_{i=1}^{N} \frac{1}{2} d\big(\mathrm{Exp}(p,Vx_i),y_i\big)^2, 
\end{equation}
where $d$ is the geodesic distance between points on $M$. Then the least-squares, or $L_2$, estimator $(\hat{p},\hat{V})\in M\times (T_pM)^k$ is
\begin{equation} \label{lsestimator}
(\hat{p},\hat{V})=\argmin_{(p,V) \in M\times \textcolor{blue}{(}T_pM\textcolor{blue}{)}^k} L(p,V).
\end{equation}
Unlike in the Euclidean case, the $L_2$ estimator of (\ref{lsestimator}) is generally obtained by a gradient descent algorithm because an analytical solution is typically not available. Letting $V=0$ in (\ref{lsestimator}), the resulting $\bar{y}$ is called the (sample) \textit{intrinsic} (or \textit{Karcher}) mean of the data points $\{y_i\}$, and their (sample) \textit{Fr\'echet} variance $s_y^2$ is defined as the corresponding loss in (\ref{loss}) at $p=\bar{y}$, multiplied by $2/N$, 
\begin{equation} \label{frechetvariance}
s_y^2=\frac{1}{N}\sum_{i=1}^{N} d\big(\bar{y},y_i\big)^2.
\end{equation}
Differentiating $L$ with respect to $p$ and each $v^j$ yields
\[
\nabla_pL=-\sum_{i=1}^Nd_p\mathrm{Exp}(p,Vx_i)^\dag e_i,~~\mbox{and}~~
\nabla_{v^j}L=-\sum_{i=1}^Nx_i^jd_{\textcolor{blue}{v}}\mathrm{Exp}(p,Vx_i)^\dag e_i
\]
for $j=1,\ldots,k$ and $e_i=\mathrm{Log}(\hat{y}_i,y_i)$. Here $\hat{y_i}=\mathrm{Exp}(\hat{p},\hat{V}x)$, $d_p\mathrm{Exp}(p,v):T_pM\rightarrow T_{\mathrm{Exp}(p,v)}M$ is the derivative of the exponential map with respect to the first argument, $p$, and $d_p\mathrm{Exp}(p,v)^\dag:T_{\mathrm{Exp}(p,v)}M\rightarrow T_pM$ represents the adjoint of this derivative with respect to the Riemannian metric; that is, $\langle d_p\mathrm{Exp}(p,v)(u),w\rangle_{p}=\langle u,d_p\mathrm{Exp}(p,v)^\dag(w)\rangle_{\mathrm{Exp}(p,v)}$ for $u\in T_pM$ and $w\in T_{\mathrm{Exp}(p,v)}M$. $d_v\mathrm{Exp}(p,v):T_pM\rightarrow T_{\mathrm{Exp}(p,v)}M$ and $d_v\mathrm{Exp}(p,v)^\dag:T_{\mathrm{Exp}(p,v)}M\rightarrow T_pM$ are analogously defined as the derivative of the exponential map with respect to the second argument, $v$, and its adjoint. On Riemannian symmetric spaces  (see Section \ref{symm}), these operators can be calculated explicitly using Jacobi fields, as in \cite{Fletcher2013}. For manifolds on which it is intractable to obtain the exact values of the adjoint derivatives in $\nabla_pL$ and $\nabla_{v_j}L$, parallel transports provide a practical alternative. \cite{Kim2014} and \cite{Zhang2019} approximate the gradients by
\[
\nabla_pL\approx-\sum_{i=1}^N\Gamma_{\hat{y}_i\rightarrow p}e_i~~\mbox{and}~~
\nabla_{v_j}L\approx-\sum_{i=1}^Nx_i^j\Gamma_{\hat{y}_i\rightarrow p}e_i, 
\]
using the notation for parallel transports introduced in Section \ref{preliminaries}. For manifolds on which parallel transports also have no analytic expression, they themselves can be approximated.

\section{M-type Estimators on Riemannian Manifolds}

We consider the classical linear regression model 
$
y=\beta_0+\beta_1x^1+\cdots+\beta_dx^k+\epsilon, 
$
where $y\in\mathbb{R}$, and $\beta_0$ and $\beta=(\beta_1,\ldots,\beta_k)^T\in \mathbb{R}^k$ take the roles of $p$ and $V$, respectively. The distribution of the errors $\epsilon$ can potentially be heavy-tailed, motivating the need for a robust estimator. It is well known that the $L_2$ estimator for $\beta_0$ and $\beta$ is sensitive to the presence of outliers. 

To avoid this problem, one can replace the least-squares criterion by a robust M-type criterion. The robust estimate of $(\beta_0$,$\beta)$ is defined as 
\[
(\hat{\beta_0},\hat{\beta})=\argmin_{(\beta_0,\beta)}\sum_{i=1}^{N} \rho(y_i-\beta_0-x_i^T\beta) 
\]
for $x_i=(x_i^1,\ldots,x_i^k)^T$, which for differentiable $\rho$ can be found by solving
\vspace{-3mm}
\[
\sum_{i=1}^{N} x_i\psi(y_i-\beta_0-x_i^T\beta)=0,
\]
where $\psi:=\rho^\prime$. The function $\rho(t)$ is typically convex and symmetric about 0, quadratic in the neighborhood of 0 and increasing at a rate slower than $t^2$ for large $t$. The robustness comes from the fact that, compared to the squared loss, $\rho(t)$ downweights extreme residuals. A common choice of $\rho$ is Huber's loss function which is a continuous function constructed piecewise from quadratic and linear segments,
\begin{displaymath}
\rho_H(t)=\left\{\begin{array}{ll} \frac{1}{2}t^2 &~~\mbox{if $|t|<c$}\\
c(|t|-\frac{1}{2}c) &~~ \mbox{otherwise}.\end{array} \right.
\end{displaymath}
Another popular loss function, Tukey's biweight function, is defined as
\[
\rho_T(t)=\left\{\begin{array}{ll}\frac{c^2}{6}\Big\{1-\big[1-(\frac{t}{c})^2\big]^3\Big\} &~~ \mbox{if $|t|<c$}\\
\frac{c^2}{6} &~~  \mbox{otherwise}. \end{array}  \right.
\]

To account for possible outliers, we now consider the use of M-type estimators to estimate $p$ and $V$. Generalizing from the above Euclidean setting to the manifold setting, we define a robust loss $L_\rho$ by
\begin{equation} \label{rholoss}
L_\rho(p,V)=\sum_{i=1}^{N} \rho\big(d(\mathrm{Exp}(p,Vx_i),y_i)\big).
\end{equation}
Then the M-type estimator is defined as the minimizer of (\ref{rholoss}), that is, 
\vspace{-3mm}
\begin{equation} \label{mestimator}
(\hat{p}_\rho,\hat{V}_\rho)=\argmin_{(p,V)\in M\times (T_pM)^k}L_\rho(p,V).
\end{equation}
For a fixed point $y\in M$, the gradient is expressed as 
\vspace{-3mm}
\begin{equation}
\nabla_p\rho(d(y,p))=-\frac{\rho^\prime(\lVert \mathrm{Log}(p,y)\rVert)}{\lVert \mathrm{Log}(p,y)\rVert}\mathrm{Log}(p,y), \notag
\end{equation}
so the M-type estimator is a solution to
\vspace{-3mm}
\begin{eqnarray*}
\nabla_pL_\rho&=&-\sum_{i=1}^N\frac{\rho^\prime(\lVert e_i\rVert)}{\lVert e_i\rVert}d_p\mathrm{Exp}(p,Vx_i)^\dag e_i=0,\\
\nabla_{v^j}L_\rho&=&-\sum_{i=1}^Nx_i^j\frac{\rho^\prime(\lVert e_i\rVert)}{\lVert e_i\rVert}d_{\textcolor{blue}{v}}\mathrm{Exp}(p,Vx_i)^\dag e_i=0
\end{eqnarray*}
for $j=1,\ldots,k$ and $e_i=\mathrm{Log}(\hat{y}_i,y_i)$. As in the least-squares case, gradients can be approximated using parallel transport:
\[
\nabla_pL_\rho\approx-\sum_{i=1}^N\frac{\rho^\prime(\lVert e_i\rVert)}{\lVert e_i\rVert}\Gamma_{\hat{y}_i\rightarrow p}e_i,~~\mbox{and}~~
\nabla_{v^j}L_\rho\approx-\sum_{i=1}^Nx_i^j\frac{\rho^\prime(\lVert e_i\rVert)}{\lVert e_i\rVert}\Gamma_{\hat{y}_i\rightarrow p}e_i.
\]

In this study, we consider the $L_1$ estimator with $\rho_{L_1}(t)=\lvert t\rvert$, Huber's estimator, and Tukey's biweight estimator as robust alternatives to the least squares estimator. Four loss functions that we consider are shown in Figure \ref{fig:losses}. For Huber's and Tukey's biweight estimators, it is necessary to determine the cutoff parameter $c$. The discussion of this topic is continued in Section \ref{calculating}.  

\begin{figure}[h!]
	\centering
	\begin{subfigure}[b]{0.24\linewidth}
		\includegraphics[width=\linewidth]{./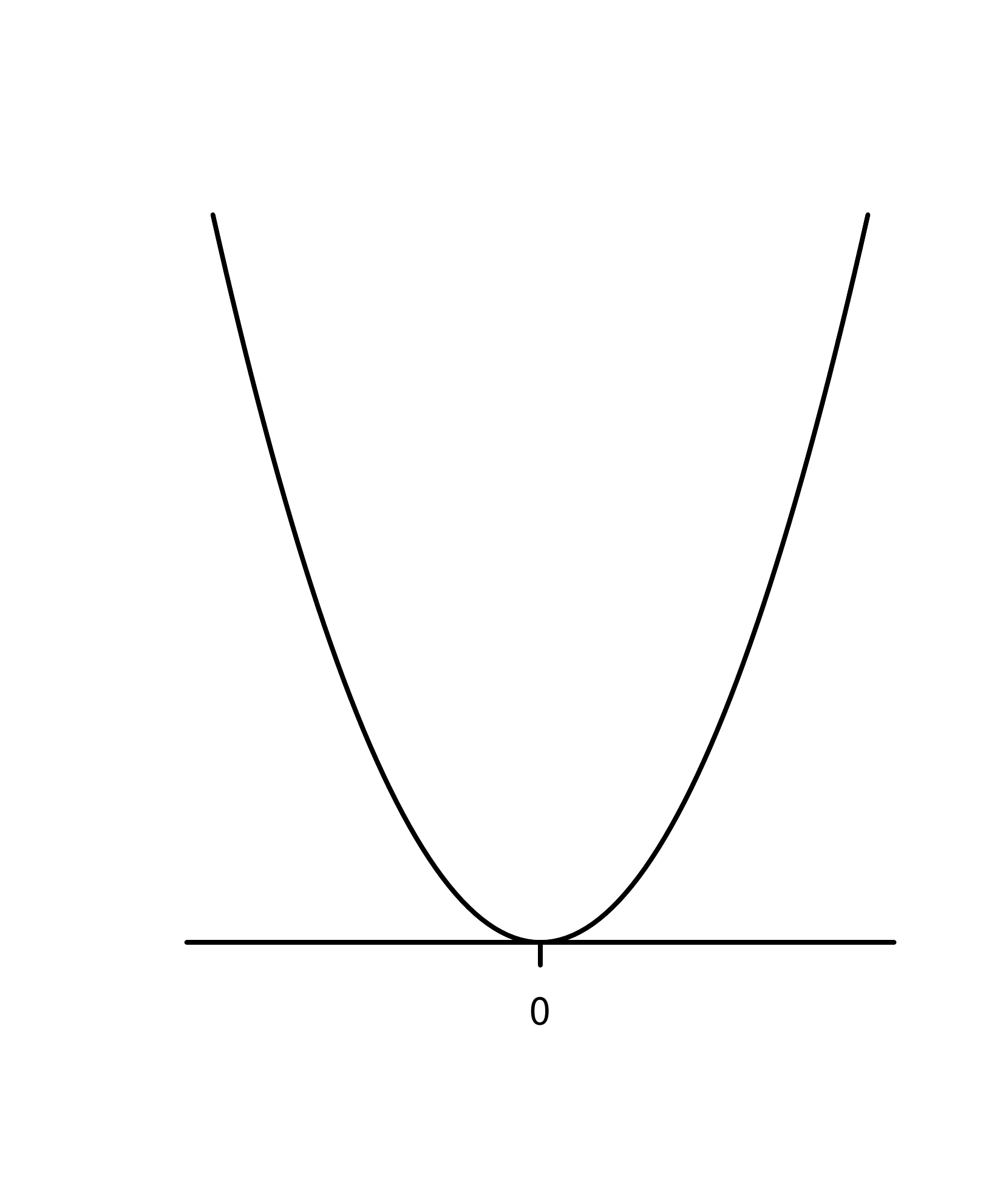}
		\caption{$L_2$ loss}
		\label{fig:l2loss}
	\end{subfigure}
	\begin{subfigure}[b]{0.24\linewidth}
		\includegraphics[width=\linewidth]{./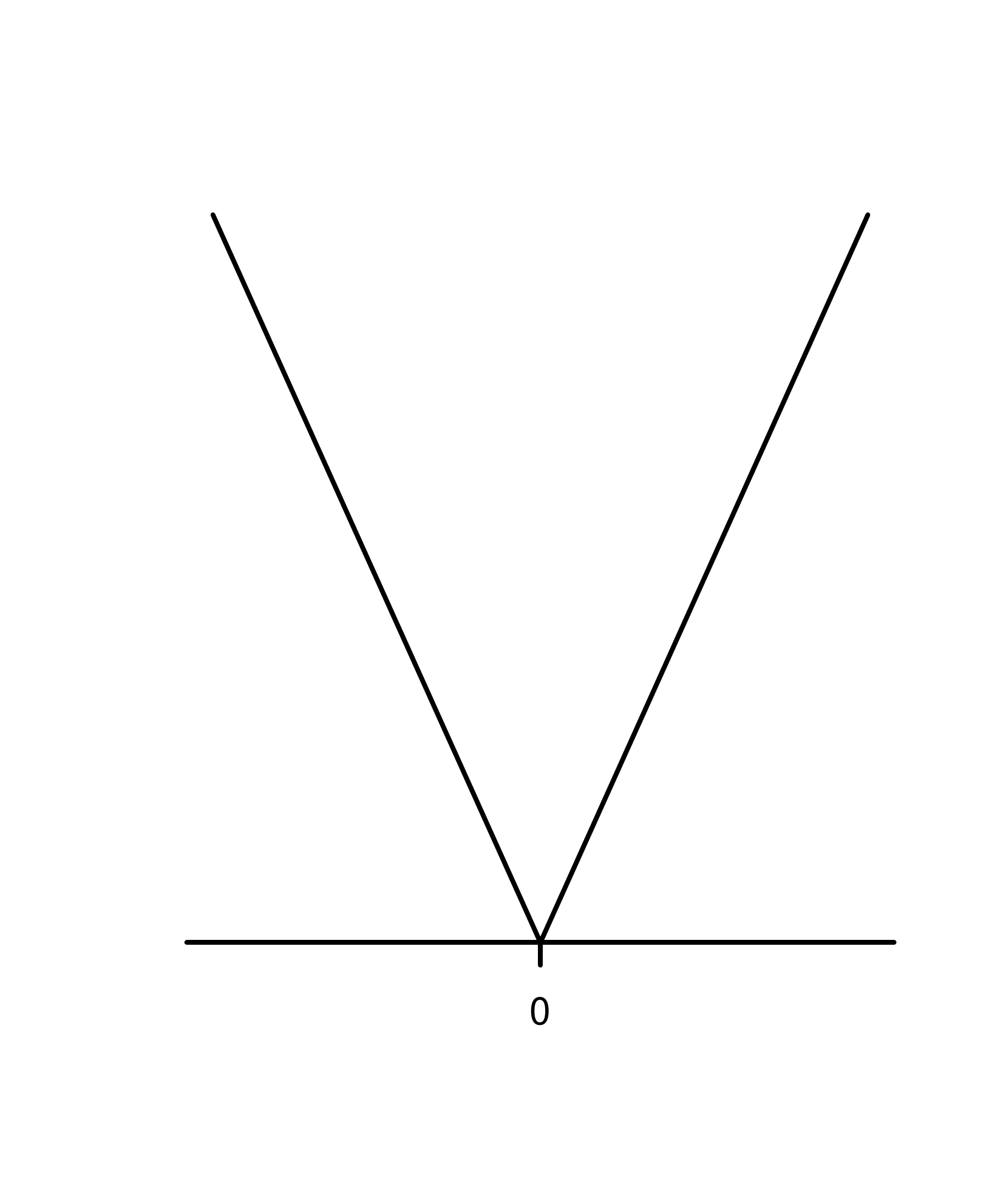}
		\caption{$L_1$ loss}
		\label{fig:l1loss}
	\end{subfigure}
	\begin{subfigure}[b]{0.24\linewidth}
		\includegraphics[width=\linewidth]{./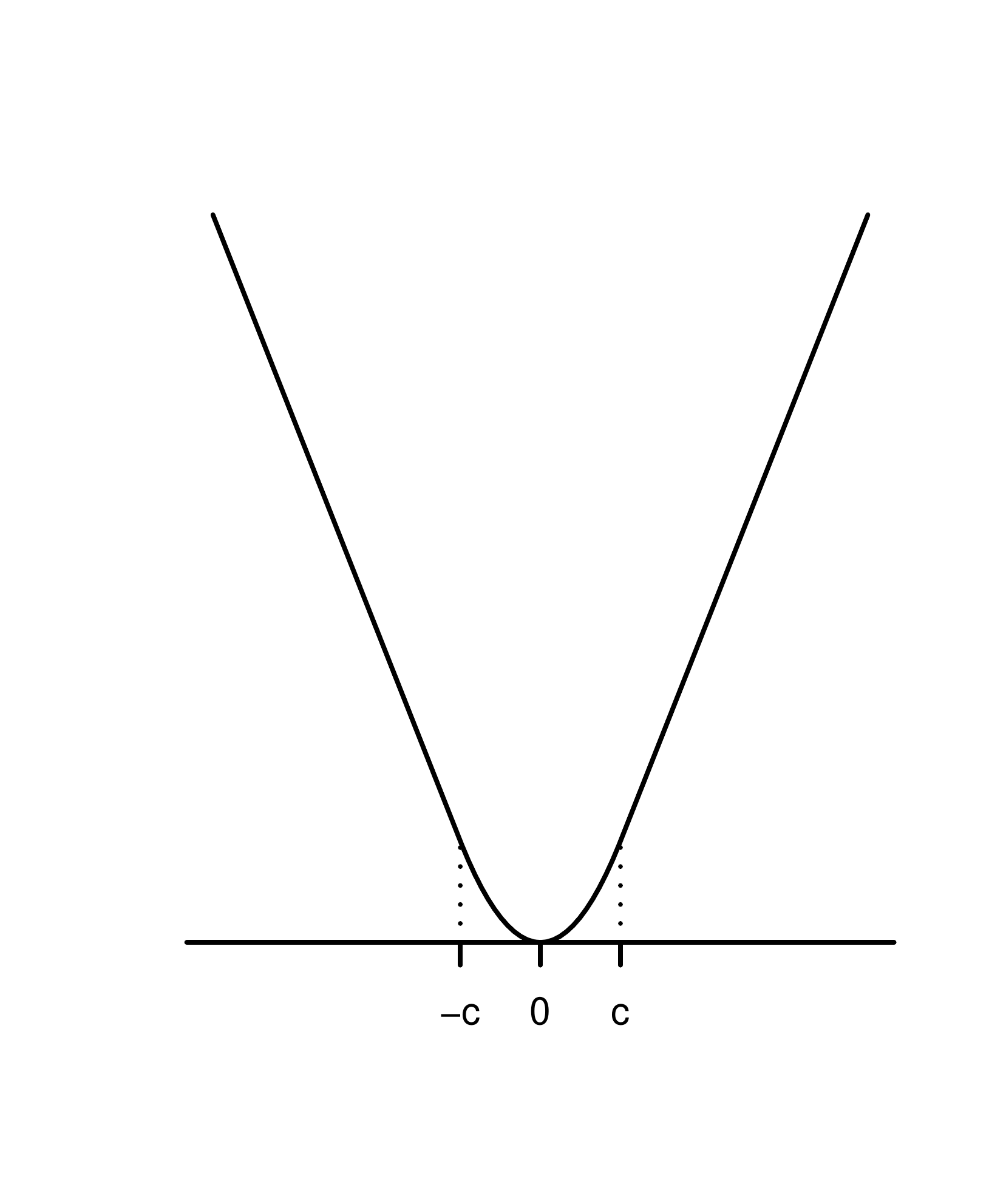}
		\caption{Huber's loss}
		\label{fig:huberloss}
	\end{subfigure}
	\begin{subfigure}[b]{0.24\linewidth}
		\includegraphics[width=\linewidth]{./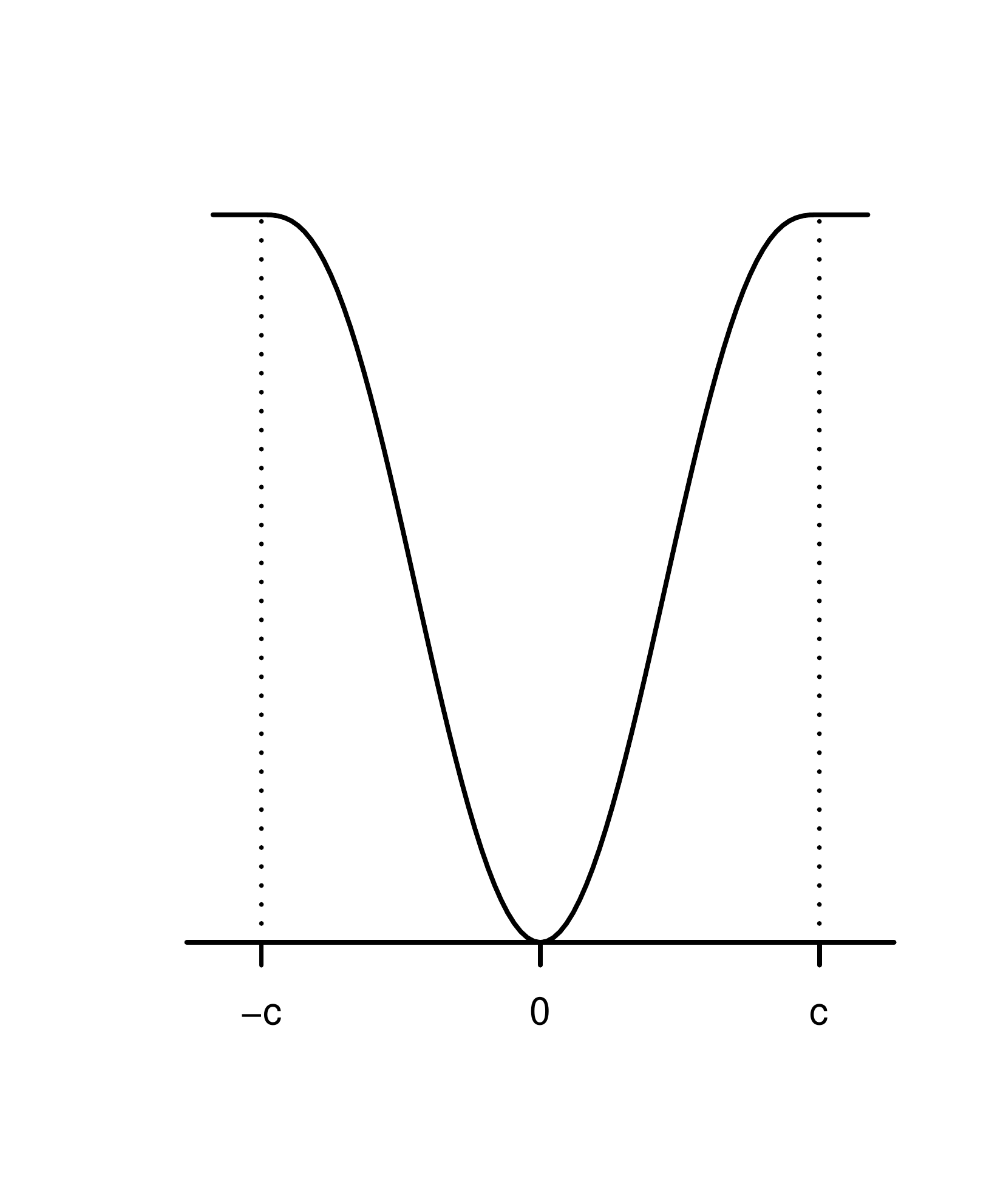}
		\caption{Tukey's biweight loss}
		\label{fig:tukeyloss}
	\end{subfigure}
	\caption{Graphs of the four loss functions.}
	\label{fig:losses}
\end{figure}

\subsection{M-type Estimators on Symmetric Spaces} \label{symm}

A symmetric space is a Riemannian manifold $M$ such that, for all $p\in M$, there exists an involutive isometry that fixes $p$ and reverses the geodesics that pass through $p$. Here, an isometry is a diffeomorphism that preserves the Riemannian distance, and an involutive isometry is an isometry that is its own inverse. The diameter of a manifold $M$ is defined as $\mathrm{diam}(M)=\sup_{p_1,p_2\in M} d(p_1,p_2)$. One of the properties of symmetric spaces is completeness, and it is a consequence of the Hopf-Rinow theorem that a complete manifold is compact if and only if it has finite diameter.

Important examples of symmetric spaces are the complete, simply-connected Riemannian manifolds of constant sectional curvature (spheres $S^n$, Euclidean spaces $\mathbb{R}^n$, and hyperbolic spaces $\mathbb{H}^n$) and the spaces of symmetric positive-definite matrices. Examples of compact symmetric spaces include the spheres $S^n$, compact Lie groups, Grassmanians, and Kendall's two-dimensional shape spaces $\Sigma_2^K$, which are equivalent to the complex projective spaces $\mathbb{C}P^{K-2}$.

For ordinary Euclidean data, some M-type estimators, such as the $L_1$ and Huber estimators, can be expressed as maximum likelihood (ML) estimators under a certain distribution for the errors, but others, including Tukey's biweight estimator and other estimators like Hampel's M-estimator, the truncated quadratic estimator, and Andrew's estimator, cannot. The best known example is the $L_2$ estimator, which is the ML estimator when the errors have a Gaussian distribution. On the other hand, on compact symmetric spaces, it can be shown that all M-type estimators of the geodesic regression model are ML estimators.

\begin{prop} \label{symmspaceprop}
	Let $M$ be a compact symmetric space with $x_1,\ldots,x_N\in\mathbb{R}^k$ and $y_1,\ldots,y_N\in M$. Take an M-type estimator whose loss function $\rho$ is bounded below, as any reasonable loss function would be. All solutions to (\ref{mestimator}), if any exist, are also maximum likelihood estimates for the geodesic regression model in (\ref{model2}) with $y_1,\ldots,y_N$ independent and conditionally distributed according to
	\begin{equation*}
	p(y_{i}|X=x_{\textcolor{blue}{i}})=f(y_{i};\mathrm{Exp}(p,Vx_{i}),b,\rho) 
	\end{equation*}
	for $i=1,\ldots,N$ and any $b>0$, where
	\vspace{-3mm}
	\begin{equation} \label{density}
	f(y_{i};\mu,b,\rho)=\frac{1}{D_{M}(\mu,b,\rho)}\mathrm{exp}\Bigg(-\frac{\rho(d(\mu,y_{\textcolor{blue}{i}}))}b\Bigg),
	\end{equation}
	with
	\begin{equation} \label{factor}
	D_M(\mu,b,\rho)=\int_M\mathrm{exp}\Bigg(-\frac{\rho(d(\mu,y))}{b}\Bigg)dy.
	\end{equation}
\end{prop}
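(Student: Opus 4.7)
The plan is to reduce the maximum likelihood problem to the M-type minimization in (\ref{mestimator}) by proving that the normalizing factor $D_{M}(\mu,b,\rho)$ is independent of $\mu$ on a compact symmetric space. Once this is established, the $\mu$-dependent part of the log-likelihood coincides, up to a positive multiplicative constant, with $-L_\rho(p,V)$, and the equivalence follows immediately.

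First I would check that $D_{M}$ is finite, so that $f(\,\cdot\,;\mu,b,\rho)$ is a legitimate probability density. Since $\rho$ is bounded below, say by $-C$, and $M$ is compact with finite Riemannian volume, the integrand in (\ref{factor}) is bounded by $e^{C/b}$, giving $D_{M}(\mu,b,\rho)\le e^{C/b}\,\mathrm{vol}(M)<\infty$. Both compactness and the lower bound on $\rho$ are used here, and this is really the only place they enter.

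The crux is showing $D_{M}(\mu_1,b,\rho)=D_{M}(\mu_2,b,\rho)$ for any $\mu_1,\mu_2\in M$. Every connected Riemannian symmetric space is homogeneous: its isometry group acts transitively, as one sees by composing the geodesic symmetry at $\mu_1$ with the geodesic symmetry at the midpoint of a minimizing geodesic from $\mu_1$ to $\mu_2$ (which exists since $M$ is compact and hence complete), producing an isometry $\phi$ with $\phi(\mu_1)=\mu_2$. Because $\phi$ preserves both the Riemannian distance and the Riemannian volume form, the change of variables $y\mapsto\phi^{-1}(y)$ yields
\begin{equation*}
D_{M}(\mu_2,b,\rho)=\int_M\exp\!\Bigl(-\tfrac{1}{b}\rho(d(\phi(\mu_1),y))\Bigr)dy=\int_M\exp\!\Bigl(-\tfrac{1}{b}\rho(d(\mu_1,\phi^{-1}(y)))\Bigr)dy=D_{M}(\mu_1,b,\rho).
\end{equation*}
Thus $D_{M}$ depends only on $b$ and $\rho$.

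Feeding this into the joint density of $y_1,\dots,y_N$ gives log-likelihood
\begin{equation*}
\ell(p,V)=-\frac{1}{b}\sum_{i=1}^{N}\rho\bigl(d(\mathrm{Exp}(p,Vx_i),y_i)\bigr)-N\log D_{M},
\end{equation*}
so maximizing $\ell$ over $(p,V)\in M\times(T_pM)^k$ is equivalent to minimizing $L_\rho(p,V)$, identifying any solution of (\ref{mestimator}) as an MLE. The main (mild) obstacle is justifying the transitive action of the isometry group invoked in the homogeneity step; I would either cite the standard fact that a Riemannian symmetric space (as defined via a point-fixing involutive isometry at every point) is automatically a homogeneous Riemannian manifold, or briefly write out the transvection argument sketched above.
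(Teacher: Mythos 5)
Your proposal is correct and follows essentially the same route as the paper's proof: establish finiteness of $D_M$ from the lower bound on $\rho$ and the finite volume of the compact manifold, use homogeneity of the symmetric space to show $D_M$ is independent of $\mu$, and conclude that the log-likelihood equals $-L_\rho/b$ plus a constant. The only difference is that you spell out the transvection argument for homogeneity, whereas the paper simply cites the fact that symmetric spaces are homogeneous.
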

The $L_2$, $L_1$, and Huber estimators can also be shown to be equivalent to ML estimators on complete and simply-connected Riemannian manifolds of constant sectional curvature; that is, spheres, Euclidean spaces, and hyperbolic spaces.

\begin{prop} \label{constcurvspaceprop}
	Let $M$ be a complete and simply-connected Riemannian manifold of constant sectional curvature with $x_1,\ldots,x_N\in\mathbb{R}^k$ and $y_1,\ldots,y_N\in M$. Let $\rho$ be either the $L_2$, $L_1$, or Huber loss function. All solutions to (\ref{mestimator}), if any exist, are also maximum likelihood estimates for the geodesic regression model in (\ref{model2}) with $y_1,\ldots,y_N$ independent and conditionally distributed according to
	\[
	p(y_i|X=x_i)=f(y_i;\mathrm{Exp}(p,Vx_i),b,\rho) 
	\]
	for $i=1,\ldots,N$ and some $b>0$, where $f(y_i;\mu,b,\rho)$ is defined as in Proposition \ref{symmspaceprop}.
\end{prop}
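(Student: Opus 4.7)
The plan is to follow the template of Proposition \ref{symmspaceprop}. Under the hypothesized conditional model, the likelihood of $(p,V)$ factors as
\[
\prod_{i=1}^N f(y_i;\mathrm{Exp}(p,Vx_i),b,\rho)=\frac{1}{\prod_{i=1}^N D_M(\mathrm{Exp}(p,Vx_i),b,\rho)}\exp\!\left(-\frac{L_\rho(p,V)}{b}\right),
\]
so it will be enough to establish that $D_M(\mu,b,\rho)$ is (i) finite and (ii) independent of $\mu$. Once these hold, the normalizing product becomes a positive constant in $(p,V)$, and since $b>0$, maximizing the likelihood reduces to minimizing $L_\rho$, i.e.\ to solving (\ref{mestimator}). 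The restriction to the $L_2$, $L_1$, and Huber losses, and the weakening of ``any $b>0$'' to ``some $b>0$'', is imposed precisely so that (i) can be arranged in the non-compact cases.

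For (ii) I would use homogeneity. Each of $S^n$, $\mathbb{R}^n$, and $\mathbb{H}^n$ has an isometry group that acts transitively, so given $\mu_1,\mu_2\in M$ I can pick an isometry $\phi$ with $\phi(\mu_1)=\mu_2$; since $\phi$ preserves both geodesic distance and the Riemannian volume form, the change of variables $y\mapsto\phi(y)$ in (\ref{factor}) yields $D_M(\mu_1,b,\rho)=D_M(\mu_2,b,\rho)$.

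For (i) I would pass to geodesic polar coordinates centered at $\mu$. The sphere is compact and the integrand is bounded, so $D_M$ is finite for every $b>0$. On $\mathbb{R}^n$ the radial volume element is a constant multiple of $r^{n-1}$, while on $\mathbb{H}^n$ it is a constant multiple of $\sinh(r)^{n-1}$, which grows like $e^{(n-1)r}$ as $r\to\infty$. For $L_2$ the integrand decays like $e^{-r^2/(2b)}$, which beats any exponential, so $D_M$ is finite for every $b>0$ in all three spaces. For $L_1$ the integrand decays like $e^{-r/b}$, which is integrable against $r^{n-1}\,dr$ on $\mathbb{R}^n$ for every $b>0$ and against $\sinh(r)^{n-1}\,dr$ on $\mathbb{H}^n$ whenever $b<1/(n-1)$. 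Huber's loss agrees with $c(|t|-c/2)$ outside $[-c,c]$, so its integrand decays like $e^{-cr/b}$ at infinity, giving integrability on $\mathbb{R}^n$ for every $b>0$ and on $\mathbb{H}^n$ whenever $b<c/(n-1)$. In every case an admissible $b>0$ exists.

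The main obstacle is the $\mathbb{H}^n$ case with the $L_1$ or Huber loss, where the exponential volume growth competes head-on with the exponential decay of the integrand; the threshold on $b$ is sharp, and above it the integral genuinely diverges. This is precisely why the conclusion is only ``for some $b>0$'' rather than ``for any $b>0$'' as in Proposition \ref{symmspaceprop}, and also why Tukey's biweight loss is omitted from the statement: it is bounded, so in the non-compact spaces the integrand fails to decay at infinity and $D_M=\infty$ for every $b$.
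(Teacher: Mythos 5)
Your proposal is correct and follows essentially the same route as the paper: reduce to the three model spaces $S^n$, $\mathbb{R}^n$, $\mathbb{H}^n$ via the Killing--Hopf classification, get $\mu$-independence of $D_M$ from transitivity of the isometry group, and check finiteness of the normalizing constant in geodesic polar coordinates against the volume growth $r^{n-1}$ or $\sinh^{n-1}(r)$. Your asymptotic comparison is a tidier packaging of the paper's explicit estimates (the paper bounds $r^{n-1}$ by $e^{r/2b}$ on $\mathbb{R}^n$ and does a binomial expansion of $\sinh^{n-1}$ on $\mathbb{H}^n$, arriving at the same threshold $b<1/(n-1)$ for $L_1$), and your direct decay rate $e^{-cr/b}$ for Huber is consistent with the paper's remark that the Huber case follows from the $L_2$ and $L_1$ cases.
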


Proofs of Propositions \ref{symmspaceprop} and \ref{constcurvspaceprop} are provided in Appendix \ref{symmspacepropsproof}. In ($\ref{density}$), $b$ plays the role of a scale parameter. For example, $\rho_{L_2}(x)=\frac{1}{2}x^2$ and $b=\sigma^2$ for the $L_2$ estimator, so the estimator is equivalent to the ML estimator of the geodesic regression model with normal errors as defined in (\ref{normal}). We remark that Proposition \ref{symmspaceprop} is true for any manifold with finite volume that is homogeneous. Also, recall that the breakdown point of an estimator, which is the proportion of the data that can be arbitrarily changed without giving an arbitrarily wrong estimate, is a commonly used tool to quantify robustness. We note that the concept of the breakdown point is not meaningful on compact manifolds as distances between points on the manifold are bounded from above, so outliers cannot be made to be arbitrarily distant.

\subsection{Cutoff Parameters for the Huber and Tukey Estimators, and Efficiency of the $L_1$ Estimator} \label{calculating}
For univariate Euclidean data, the cutoff parameters for the Huber and Tukey biweight estimators are typically chosen to be $1.345\hat{\sigma}$ and $4.685\hat{\sigma}$, where $\hat{\sigma}=MAD/0.6745$, $MAD=\mathrm{Median}(\lvert e_1\rvert,\ldots,\lvert e_N\rvert)$ is the median absolute deviation, and $e_i=y_i-\hat{y}_i$. Here the value of 0.6745 is chosen because, for $Y \sim N(\mu,\sigma^2)$, $Pr(\lvert Y-\mu\rvert<0.6745\sigma)=1/2$, and the values of 1.345 and 4.685 are chosen so that, given i.i.d $Y_i \sim N(\mu,\sigma^2),~i=1,\ldots,N$, the asymptotic relative efficiency (ARE) of the sample M-type estimator for $\mu$, $\hat{Y}$, to the least-squares estimator, the sample mean $\bar{Y}$, is 95\% (i.e., $\lim_{N\to\infty}[\mathrm{Var}(\bar{Y})/\mathrm{Var}(\hat{Y})]=0.95$). By analogy, determining the cutoff parameter $c$ for the Huber and Tukey biweight estimators on a Riemannian manifold also requires two steps: (a) estimating $\sigma$ by $MAD/\xi$, and (b) finding the multiple of $\sigma$ that would give an ARE of the M-type estimator of location to the sample intrinsic mean of 95\% under a normal distribution. In the manifold case, $MAD=\mathrm{Median}(\lVert e_1\rVert,\ldots,\lVert e_N\rVert)$, with $e_i=\mathrm{Log}(\mathrm{Exp}(p,x_iv),y_i)$, and we have defined the variance of a manifold-valued random variable as in (\ref{frechetvariance}) and the relative efficiency as the ratio of two variances, as in the univariate Euclidean case. In deriving these parameters, we will use tangent space approximations, as is common in the literature on geometric statistics \citep{Fletcher2004, Kim2014}. To the best of our knowledge, the results and formulae in this section and Appendix A related to the approximate efficiencies of our robust estimators on manifolds have not appeared elsewhere in the literature.

The Riemannian normal distribution, as defined in \cite{Fletcher2020}, on a $k$-dimensional connected manifold $M$ has the following density:
\begin{equation}
\label{normal} 
f(y;\mu,\sigma^2)=\frac{1}{C_{M}(\mu,\sigma^2)}\mathrm{exp}\bigg(-\frac{d(y,\mu)^2}{2\sigma^2}\bigg),
\end{equation}
where
\begin{equation*}
C_{M}(\mu,\sigma^2)=\int_{M} \mathrm{exp}\bigg(-\frac{d(y,\mu)^2}{2\sigma^2}\bigg)dy.
\end{equation*}
This distribution exists and is well-defined only if $C_{M}(\mu, \sigma^2)$ is finite.

Given i.i.d $Y_i~(i=1,\ldots,N)$ distributed according to (\ref{normal}), we approximate the M-type estimator $\hat{Y}$ on the manifold by $\mathrm{Exp}(\mu,\hat{Y}^*)$, where $\hat{Y}^*$ is the M-type estimator for the points $Y_i^*:=\mathrm{Log}(\mu,Y_i)$ in the tangent space at $\mu$. As the tangent space is isomorphic to $\mathbb{R}^n$, we treat these points as belonging to $\mathbb{R}^n$ and consider the $Y_i^*$ to be distributed according to an isotropic multivariate Gaussian distribution with mean $0$ and variance $\sigma^2I_n$. Then letting $\sigma=1$  without loss of generality by choosing an appropriate length scale, the density of $Y_i^*$ is given by 
$f(y)=\phi_n(y)$ 
for $y\in\mathbb{R}^n$, where $Z_n\sim N_n(0,I_n)$ is the standard $n$-variate Gaussian random variable and $\phi_n=(2\pi)^{-\frac{n}{2}}\mathrm{exp}(-\frac{1}{2}\sum_{j=1}^n(y^j)^2)$ is its density. Here $y^j$ denotes the $j$th coordinate of $y$, not the $j$th power of $y$. These approximations are particularly reasonable for distributions with relatively little dispersion; how little will be examined in Section \ref{syntheticexperiments}. We will also assume $n\geq2$; the numbers when $n=1$, provided in Table \ref{table}, are already well known.

The calculations involved in determining the $c$ values are very tedious and lengthy; for details, refer to Appendix \ref{cutoffderivations}. Because of the aforementioned tangent space approximation, these results are exact when $M=\mathbb{R}^n$. Ultimately, the value of the constant $\xi$ in $MAD/\xi$ is 
\begin{equation}\label{xi}
\xi=\sqrt{2P^{-1}\Big(\frac{n}{2},\frac{1}{2}\Big)}, 
\end{equation}
where $P^{-1}(a,z)$ is the inverse of the lower regularized gamma function $P(a,z):=\gamma(a,z)/\Gamma(z)$, $\Gamma(z)$ is the gamma function, and $\gamma(a,z)$ is the lower incomplete gamma function.  In addition, the approximate AREs of the sample Huber and Tukey biweight estimators to the sample mean are, respectively,
\begin{equation} \label{huberare}
\mbox{ARE}_{H,L_2}(c,n)\approx A_H(c,n):=\frac{\Big\{\frac{n}{2}\gamma\big(\frac{n}{2},\frac{c^2}{2}\big) +2^{-\frac{3}{2}}c(n-1)\Gamma\big(\frac{n-1}{2},\frac{c^2}{2}\big)\Big\}^2}{\Gamma\big(\frac{n+2}{2}\big)\Big\{\gamma\big(\frac{n+2}{2},\frac{c^2}{2}\big)+\frac{c^2}{2} \Gamma\big(\frac{n}{2},\frac{c^2}{2}\big)\Big\}},
\end{equation}
and 
\begin{equation} \label{tukeyare}
\mbox{ARE}_{T,L_2}(c,n)\approx A_T(c,n):=\frac{\Big\{\frac{2(n+4)}{c^4}\gamma\big(\frac{n+4}{2},\frac{c^2}{2}\big)-\frac{2(n+2)}{c^2}\gamma\big(\frac{n+2}{2},\frac{c^2}{2}\big)+\frac{n}{2}\gamma\big(\frac{n}{2},\frac{c^2}{2}\big)\Big\}^2}{\splitfrac{\Gamma\big(\frac{n+2}{2}\big)\Big\{\gamma\big(\frac{n+2}{2},\frac{c^2}{2}\big)-\frac{8}{c^2}\gamma\big(\frac{n+4}{2},\frac{c^2}{2}\big) +\frac{24}{c^4}\gamma\big(\frac{n+6}{2},\frac{c^2}{2}\big)}{-\frac{32}{c^6}\gamma\big(\frac{n+8}{2},\frac{c^2}{2}\big)+\frac{16}{c^8}\gamma\big(\frac{n+10}{2},\frac{c^2}{2}\big)\Big\}}},
\end{equation}
where $c$ is the cutoff parameter and $\Gamma(a,z)$ is the upper incomplete gamma function. Note that these two equations assume without loss of generality that $\sigma=1$. Finally, we calculate the partial derivatives of (\ref{huberare}) and (\ref{tukeyare}) with respect to $c$, and then use the Newton-Raphson method to find $c_H$ and $c_T$, the values of $c$ for which the approximate $\mbox{ARE}_{H,L_2}$ and $\mbox{ARE}_{T,L_2}$, respectively, are 95\%.

Several nice properties of $A_{L_1}$, the approximate ARE of the $L_1$ estimator to the $L_2$ estimator calculated by letting $c\rightarrow 0$ for $A_H$ in (\ref{huberare}), are given in the following proposition.

\begin{prop} \label{l1prop}
(a) Defining $A_H(c,n)$ as in (\ref{huberare}),
		\begin{equation} \label{l1are}
		A_{L_1}(n):=\lim_{c\rightarrow 0}A_H(c,n)=\frac{\Gamma^2\big(\frac{n+1}{2}\big)}{\Gamma\big(\frac{n}{2}\big)\Gamma\big(\frac{n+2}{2}\big)}.
		\end{equation}
(b) $A_{L_1}(n)$, as defined in (\ref{l1are}), is increasing in $n\in \mathbb{Z}^+$. (c) $\lim_{n \rightarrow \infty}A_{L_1}(n)=1$.
\end{prop}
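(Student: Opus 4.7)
My plan is to handle the three parts separately, since each depends on a different analytic fact about the gamma and digamma functions. Throughout I assume $n\geq 2$, in keeping with the standing assumption in this section.

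For part (a), I would substitute the small-argument asymptotics $\gamma(a,z)=z^a/a+O(z^{a+1})$ and $\Gamma(a,z)=\Gamma(a)-\gamma(a,z)\to\Gamma(a)$, valid for $a>0$, into $A_H(c,n)$ with $z=c^2/2$ and track dominant terms as $c\to 0^+$. In the numerator of (\ref{huberare}) the first summand is $O(c^n)$ while the second is $O(c)$, so the latter dominates and the numerator is asymptotically $\tfrac{1}{8}c^2(n-1)^2\Gamma^2\bigl(\tfrac{n-1}{2}\bigr)$; applying $(n-1)\Gamma\bigl(\tfrac{n-1}{2}\bigr)=2\Gamma\bigl(\tfrac{n+1}{2}\bigr)$, which is just the functional equation $\Gamma(x+1)=x\Gamma(x)$, converts this to $\tfrac{c^2}{2}\Gamma^2\bigl(\tfrac{n+1}{2}\bigr)$. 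In the denominator, the $\gamma((n+2)/2,c^2/2)$ term is $O(c^{n+2})$ while $(c^2/2)\Gamma(n/2,c^2/2)$ is $O(c^2)$, so the denominator is asymptotically $\tfrac{c^2}{2}\Gamma\bigl(\tfrac{n+2}{2}\bigr)\Gamma\bigl(\tfrac{n}{2}\bigr)$, and the quotient gives the claimed formula.

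For part (b), I would extend $A_{L_1}$ to a function of real $n>0$ and show that its logarithmic derivative $\tfrac{d}{dn}\log A_{L_1}(n)=\psi\bigl(\tfrac{n+1}{2}\bigr)-\tfrac{1}{2}\psi\bigl(\tfrac{n}{2}\bigr)-\tfrac{1}{2}\psi\bigl(\tfrac{n+2}{2}\bigr)$ is strictly positive, where $\psi$ is the digamma function. Since $\psi''(x)=-2\sum_{k=0}^\infty (x+k)^{-3}<0$ on $(0,\infty)$, $\psi$ is strictly concave there; because $(n+1)/2$ is the midpoint of $n/2$ and $(n+2)/2$, strict concavity makes the displayed derivative strictly positive. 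Hence $A_{L_1}$ is strictly increasing on $(0,\infty)$, and in particular on $\mathbb{Z}^+$.

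For part (c), I would invoke the standard consequence of Stirling's formula that $\Gamma(x+a)/\Gamma(x+b)\sim x^{a-b}$ as $x\to\infty$. Factoring $A_{L_1}(n)=\bigl[\Gamma\bigl(\tfrac{n+1}{2}\bigr)/\Gamma\bigl(\tfrac{n}{2}\bigr)\bigr]\cdot\bigl[\Gamma\bigl(\tfrac{n+1}{2}\bigr)/\Gamma\bigl(\tfrac{n+2}{2}\bigr)\bigr]$ and applying this asymptotic to each factor (with $x=n/2$) yields $A_{L_1}(n)\sim\sqrt{n/2}\cdot(n/2)^{-1/2}=1$. The main delicacy lies in part (a), where one has to identify the dominant summand in both numerator and denominator of $A_H(c,n)$ as $c\to 0^+$ and then invoke the $\Gamma$ functional equation to match the given closed form; parts (b) and (c) then reduce quickly to well-known monotonicity and asymptotic properties of $\Gamma$ and $\psi$.
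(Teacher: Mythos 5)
Your proposal is correct, and on all three parts it takes a genuinely different route from the paper. For (a), the paper applies L'H\^{o}pital's rule twice to the ratio $H_1^2/(\Gamma(\tfrac{n+2}{2})H_2)$, reusing the derivatives $H_3=\partial_c H_1$ and $H_4=\partial_c H_2$ it had already computed for the Newton--Raphson step; you instead read off the leading small-$z$ behaviour $\gamma(a,z)\sim z^a/a$ and $\Gamma(a,z)\to\Gamma(a)$ directly, which isolates the dominant $O(c)$ and $O(c^2)$ terms without any differentiation. Both are sound (your identification of the dominant summands is right for $n\ge 2$, since $O(c^n)=o(c)$ and $O(c^{n+2})=o(c^2)$), and your route is arguably more transparent about \emph{why} the limit takes the stated form. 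For (b), the paper proves the auxiliary Lemma 1, $A_{L_1}(n)<\sqrt{n/(n+1)}$, via Mortici's refinement of the Wallis-ratio inequality, and then exploits the recursion $A_{L_1}(n+1)=\tfrac{n}{n+1}A_{L_1}(n)^{-1}$; your observation that
\begin{equation*}
\frac{d}{dn}\log A_{L_1}(n)=\psi\Big(\tfrac{n+1}{2}\Big)-\tfrac{1}{2}\psi\Big(\tfrac{n}{2}\Big)-\tfrac{1}{2}\psi\Big(\tfrac{n+2}{2}\Big)>0
\end{equation*}
by strict concavity of $\psi$ (since $\psi''<0$ on $(0,\infty)$ and $\tfrac{n+1}{2}$ is the midpoint of $\tfrac{n}{2}$ and $\tfrac{n+2}{2}$) is shorter, avoids the external inequality entirely, and proves the stronger statement that $A_{L_1}$ is increasing as a function of a real argument. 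For (c), the paper sandwiches $A_{L_1}(n)$ between two explicit algebraic expressions derived from the same Mortici bounds, which has the side benefit of quantifying the convergence rate; your appeal to $\Gamma(x+a)/\Gamma(x+b)\sim x^{a-b}$ applied to the factorization $A_{L_1}(n)=\bigl[\Gamma(\tfrac{n+1}{2})/\Gamma(\tfrac{n}{2})\bigr]\cdot\bigl[\Gamma(\tfrac{n+1}{2})/\Gamma(\tfrac{n+2}{2})\bigr]$ gets the limit faster but without the explicit error bounds. In short: your argument is more self-contained and elementary in its dependencies, while the paper's yields quantitative two-sided bounds on $A_{L_1}(n)$ as a by-product.
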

A proof of Proposition \ref{l1prop} is provided in Appendix \ref{l1propproof}. When $n=10$, the approximate $\mbox{ARE}_{L_1,L_2}$ is 0.95131, over 95\%. So in higher dimensions, the Huber estimator becomes unnecessary as the $L_1$ estimator is sufficiently efficient, and in very high-dimensional cases, even the $L_2$ estimator becomes unnecessary. The usual reasons for favoring the $L_2$ in the univariate Euclidean case are efficiency and ease of computation, but as Proposition \ref{l1prop} shows, on high-dimensional spaces the improvement in efficiency from using the $L_2$ over the $L_1$ estimator is negligible even with normal errors. For example, the approximate ARE $A_{L_1}(50)=0.99005$, over 99\%. Regarding computation, the geodesic regression problem is solved with a gradient descent algorithm regardless of choice of estimator, so this disadvantage of the $L_1$ estimator is also mitigated. On the other hand, the $L_1$ estimator is clearly more robust than the $L_2$ estimator. We thus argue that in general, the $L_1$ estimator is preferable to the $L_2$ estimator on high-dimensional spaces. Similarly, given its superior efficiency and considerable robustness, the $L_1$ estimator also has advantages over Tukey's estimator on high-dimensional spaces. 

Table \ref{table} gives the values of $\xi$ of (\ref{xi}), and the cutoff parameters for the Huber and Tukey biweight estimators $c_H$ and $c_T$, which are the multiples of $\hat{\sigma}$ for these estimators, respectively, for $n=1,2,3,4,5,6$ in $\mathbb{R}^k$. We also include the approximate $\mathrm{ARE}_{L_1,L_2}$, which in lower dimensions rapidly improves as $k$ increases.
\begin{table}[!h]
	\centering
	\caption{$\xi$, $c_H$ and $c_{\textcolor{blue}{T}}$ according to $n=1,\ldots,6$.}
	{\small
		\begin{tabular}{|c||c|c|c|c|c|c|}\hline
			$n$ & $1$ & $2$ & $3$ & $4$ & $5$ & $6$\\\hhline {|=||=|=|=|=|=|=|}
			$\xi$ & 0.67449 & 1.17741 & 1.53817 & 1.83213 & 2.08601 & 2.31260 \\\hline
			$c_H$ & 1.34500 & 1.50114 & 1.62799 & 1.73107 & 1.81202 & 1.86934 \\\hline
			$c_T$ & 4.68506 & 5.12299 & 5.49025 & 5.81032 & 6.09627 & 6.35622 \\\hline
			$A_{L_1}$ & 0.63662 & 0.78540 & 0.84883 & 0.88357 & 0.90541 & 0.92039 \\\hline 
		\end{tabular}
	}
	\label{table}
\end{table}

Even though the formulae in this section are calculated under the assumption that $n\geq2$, (\ref{xi}), (\ref{tukeyare}), and (\ref{l1are}) happen to still be valid when $n=1$, producing the figures in the $n=1$ column of Table \ref{table}, as are Proposition \ref{l1prop}(a) and \ref{l1prop}(b). The approximate ARE in (\ref{huberare}) can also be adjusted to work by removing the second summand in the curly brackets of the numerator.

\subsection{Implementation for M-type Estimators on Riemannian Manifolds}  

Here we discuss the implementation of the proposed M-type estimator on Riemannian manifolds. The gradient descent algorithm to find the solution of the robust geodesic regression problem in (\ref{mestimator}) is outlined in Algorithm 1 below. The purpose of $\lambda_{\max}/\rVert \nabla_pL_{\rho}\lVert$ in lines 14 and 28 are to prevent the steps for $p$, $-\lambda\nabla_pL_{\rho}$, from getting too large.

\begin{algorithm}[!h]
	\caption{Gradient descent algorithm for geodesic regression}
	\begin{algorithmic}[1]
		\State Input: $x_1,\ldots,x_N\in \mathbb{R}^k$, $y_1,\ldots,y_N\in M$ for $n$-dimensional $M$ and $\rho:\mathbb{R}\rightarrow\mathbb{R}^+$.
		\State Output: $p\in M, V\in \textcolor{blue}{(}T_pM\textcolor{blue}{)}^k$
		\State Initialize $p$ as the intrinsic mean of $\{y_1,\ldots,y_N\}$, $V$ as 0, and $\lambda_{max}$, and center $x$.
		\If {$\rho=\rho_H$ or $\rho_{T}$}
		\State Calculate $\xi$ using (\ref{xi}).
		\For {i in 1 to N}
		\State $e_i=\mathrm{Log}(\mathrm{Exp}(p,Vx_i),y_i)$
		\EndFor
		\State $MAD=\mathrm{Median}(\lVert e_1\rVert,\ldots,\lVert e_N\rVert)$
		\State Calculate $c_H$ or $c_T$ using Newton-Raphson's method on (\ref{huberare}) or (\ref{tukeyare}), respectively.
		\State $\hat{\sigma}=MAD/\xi$
		\State $c=c_H\hat{\sigma}$ or $c=c_{T}\hat{\sigma}$.
		\EndIf
		\State $\lambda = \mathrm{min}(0.1,\lambda_{max}/\rVert \nabla_pL_{\rho}\lVert)$ 
		\While{termination condition}
		\State $p_{new} = \mathrm{Exp}(p,-\lambda\nabla_pL_{\rho})$
		\State $V_{new} = \Gamma_{p\rightarrow p_{new}}(V-\lambda\nabla_VL_{\rho})$
		\If {$E_{\rho}(p,V) \geq E_{\rho}(p_{new},V_{new})$}
		\State $p=p_{new}$ and $V=V_{new}$
		\If {$\rho=\rho_H$ or $\rho_{T}$}
		\For {i in 1 to N}
		\State $e_i=\mathrm{Log}(\mathrm{Exp}(p,Vx_i),y_i)$
		\EndFor
		\State $MAD=\mathrm{Median}(\lVert e_1\rVert,\ldots,\lVert e_N\rVert)$
		\State $\hat{\sigma}=MAD/\xi$
		\State $c=c_H\hat{\sigma}$ or $c=c_T\hat{\sigma}$
		\EndIf
		\State $\lambda=\mathrm{min}(2\lambda,\lambda_{max}/\rVert \nabla_pL_{\rho}\lVert)$
		\Else
		\State $\lambda=\lambda/2$
		\EndIf
		\EndWhile
	\end{algorithmic}
\end{algorithm}

\section{Numerical Experiments} \label{experiments}

\subsection{Simulations on $S^n$ and $\mathbb{H}^n$} \label{syntheticexperiments}

The $n$-sphere $S^n$ is a space of constant positive sectional curvature, and hyperbolic space $\mathbb{H}^n$ is the analogous space of constant negative sectional curvature. Unlike $S^n$, it is not compact, and the exponential growth of the surface of a sphere in $\mathbb{H}^n$ with respect to its radius provides the basis for many interesting applications involving hierarchical data and tree-like structures, in which the number of nodes increases exponentially with depth. Our calculations on $\mathbb{H}^n$ were done using the hyperboloid model. Expressions for the the exponential maps and their derivatives, the logarithmic maps, parallel transports and other information about these manifolds can be found in Appendices \ref{appenbsphere} and \ref{appenbhyp}.

Our goal here is to evaluate the efficacy of the proposed M-type estimators on both positively and negatively curved spaces by performing simple geodesic regression on $S^2$ and $\mathbb{H}^2$ and multiple geodesic regression on $S^3$ and $\mathbb{H}^3$ using simulated data. The experimental setup was similar, but not identical, to the one used in \cite{Fletcher2013}. One difference was that we generated data points using the exact Riemannian normal distribution of (\ref{normal}) on $S^n$ and $\mathbb{H}^n$ as opposed to an approximation using an isotropic multivariate Gaussian distribution in the tangent space of the mean. The derivation of these exact distribution\textcolor{blue}{s} and how to generate random data from them are explained in Appendix \ref{spherehypnormal}. The parameters for the simple regression simulations on the two-dimensional spaces were set to $p=(1,0,0)$, $v=(0,\pi/4,0)$. For the multiple regression experiments on $S^3$ and $\mathbb{H}^3$, the parameters were set to $p=(1,0,0,0)$, $v^1=(0,\pi/4,0,0)$, and $v^2=(0,0,0,-\pi/6)$. Several different sample sizes were considered: $N=2^h$ for $h=2,3,\ldots,8$. The $x_i$ were generated from the uniform distribution on $[-1/2,1/2]$. The following types of noise were considered: 
\begin{itemize}
\item N: a Riemannian normal distribution with $\sigma=\pi/8$, 
\item T: a multivariate $t$-distribution in the tangent space with $\Sigma=(\pi/16)^2I_n$ and $\nu=4$, and
\item C: a contaminated Riemannian normal mixture distribution, that is, a mixture of two normal distributions, one with $\sigma=\pi/24$ and a probability of $0.9$, the other with $\sigma=\pi/6$ and a probability of $0.1$. 
\end{itemize}

The distributions in scenarios T and C are useful for examining robustness as they have heavier tails than the normal distribution, producing outliers.

We set the $\xi$, $c_H$, and $c_T$ values from Section \ref{calculating}, calculated to give an asymptotic efficiency of 95\% relative to the $L_2$ estimator, to 1.17741, 1.50114, and 5.12299, respectively, on $S_2$. On $S_3$, we used 1.53817, 1.62799, and 5.49025, respectively. For each $h$, $L=1024$ datasets were simulated. Then for each simulated set, four regression estimates were obtained by applying the $L_2$, $L_1$, Huber, and Tukey biweight estimators. For evaluation, we utilized the mean squared errors (MSE) for $\hat{p}$ and each $\hat{v}^j$, defined as
\begin{equation}\label{mses}
\mathrm{MSE}(\hat{p}):=\frac{1}{L}\sum_{\ell=1}^Ld(\hat{p}_\ell,p)^2, ~~\mbox{and}~~\mathrm{MSE}(\hat{v}^j):=\frac{1}{L}\sum_{\ell=1}^L\lVert\Gamma_{\hat{p}_\ell\rightarrow p}(\hat{v}_\ell^j)-v^j\rVert^2,
\end{equation}
where $\hat{p}_\ell$ and $\hat{v}_\ell^j$ are the estimates for $p$ and $v^j$ from the $\ell$th trial. These equations were taken, with modification, from the definitions of $\mathrm{MSE}(\hat{p})$ and $\mathrm{MSE}(\hat{v})$ in Section 5.1.2 of \cite{Fletcher2013}. 

\begin{figure}[p]
	\centering
	\includegraphics[width=0.95\linewidth]{./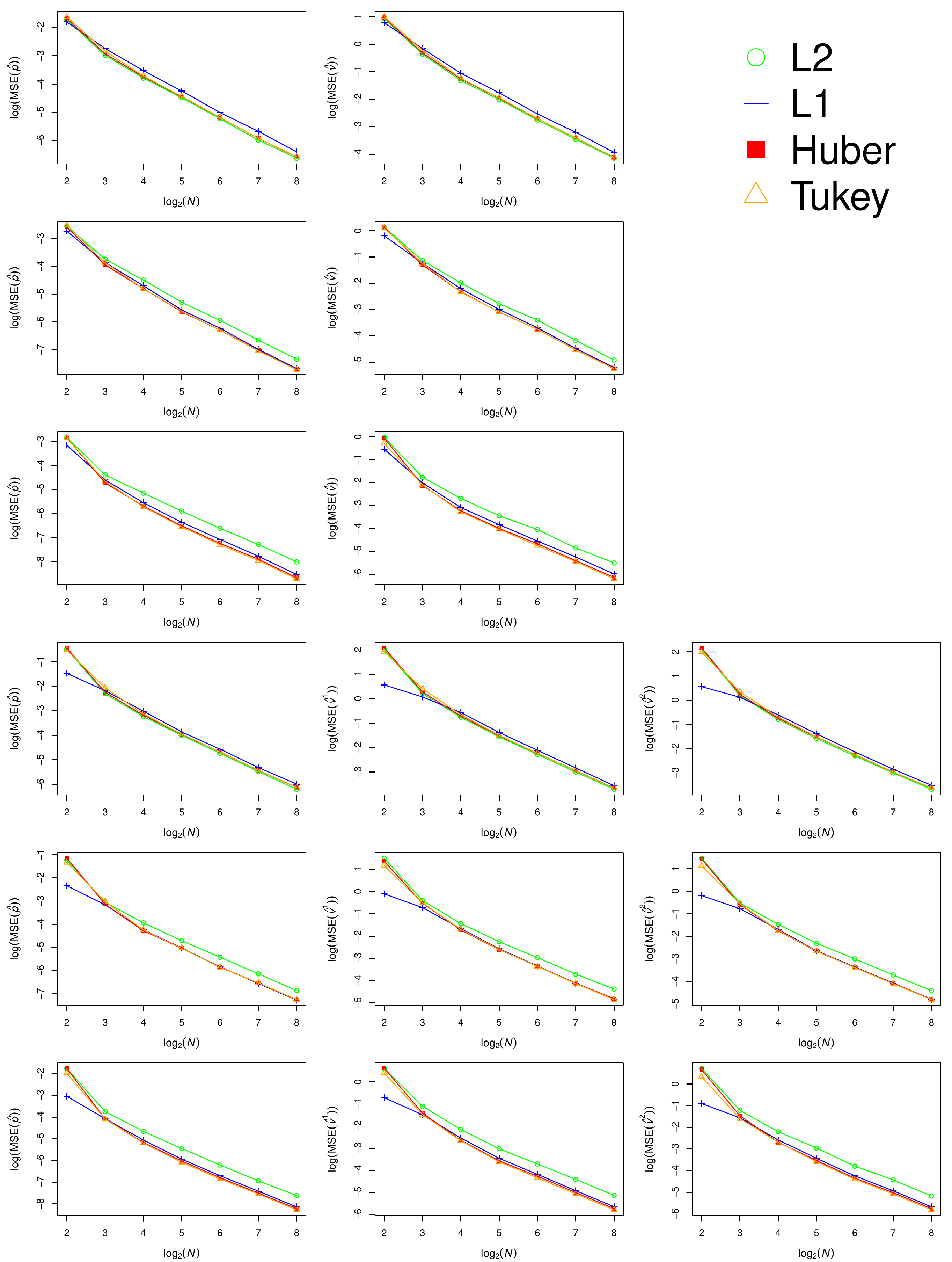}
	\caption{The effect of sample size, $N$, on various MSEs estimated from synthetic data. Both axes use logarithmic scales. The first three rows show the results on $S^2$; the last three rows show the results on $S^3$. The errors are type N in the first and fourth rows, type T in the second and fifth and type C in the third and sixth.}
	\label{fig:spheregraphs}
\end{figure}

\begin{figure}[p]
	\centering
	\includegraphics[width=0.95\linewidth]{./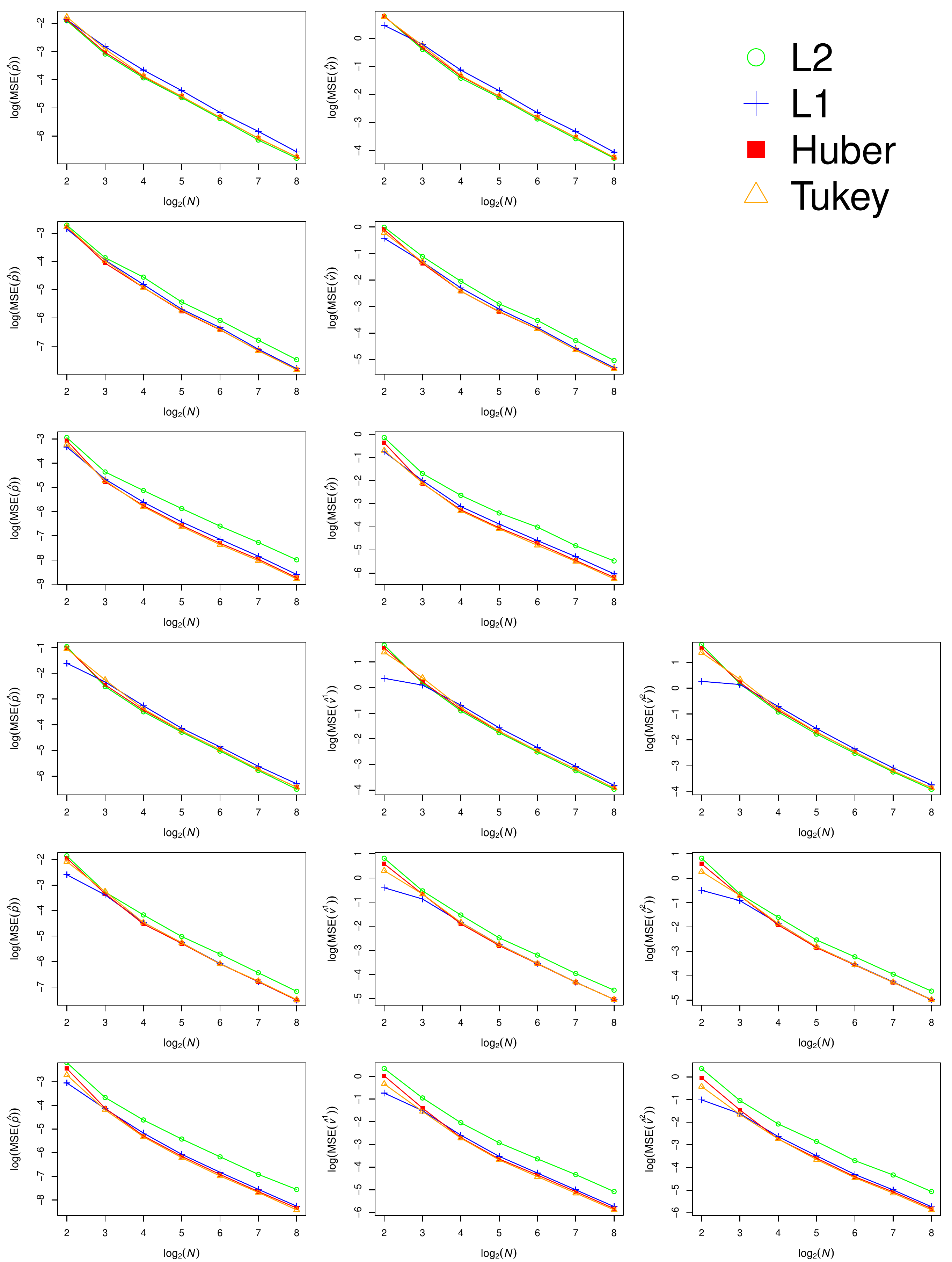}
	\caption{The effect of sample size, $N$, on various MSEs estimated from synthetic data. Both axes use logarithmic scales. The first three rows show the results on $\mathbb{H}^2$; the last three rows show the results on $\mathbb{H}^3$. The errors are type N in the first and fourth rows, type T in the second and fifth and type C in the third and sixth.}
	\label{fig:hyperbolicgraphs}
\end{figure}

Figures \ref{fig:spheregraphs} and \ref{fig:hyperbolicgraphs} show the results on the spheres and hyperbolic spaces, respectively. In every case, the MSEs all approached zero as sample size increases. We focus on the experiments in which the sample size is reasonably large (at least $2^3=8$ or $2^4=16$). The least-squares $L_2$ estimator performed the best for the normal type N errors, but the Huber and Tukey biweight estimators are almost as good. On the three-dimensional manifolds, even the $L_1$ estimator does not perform significantly worse than the other three estimators. For the noisy type T data, the $L_2$ estimator performs very poorly, while Huber's and Tukey's estimators have almost identical MSE values in both 2D and 3D cases and the $L_1$ estimator is slightly worse than these two in the 2D cases. For the contaminated mixture case C, the estimators, in order from worst to best, are the $L_2$, $L_1$, Huber, and Tukey biweight estimators, with the latter two having very similar results and the $L_2$ estimator being completely outclassed. When $N$ was small ($N=2^2=4$ or $2^3=8$), the $L_1$ estimator outperforms the others, significantly so in the 3D manifold-based experiments, regardless of the distribution of the errors.

As our robust estimators and their asymptotic relative efficiencies were derived using tangent space approximations (see Section \ref{calculating}), one might ask how well these approximations hold up and how they are affected by curvature. One way to test this empirically is by calculating the sample relative efficiencies of the robust estimators of location to the $L_2$ estimator, which is simply the Fr\'echet mean, with a large number $N$ of data points that are generated from the exact Riemannian normal distributions in Appendix \ref{spherehypnormal} for various values of $\sigma$; we will use $\pi/32$, $\pi/16$, $\pi/8$, $\pi/4$, and $\pi/2$. As $\sigma$ increases, the errors get larger and the influence of curvature becomes more pronounced. For example, a normal distribution with $\sigma = \pi/4$ on the unit sphere $S^n$, which has a constant sectional curvature of 1, is equivalent to a normal distribution with $\sigma = \pi/8$ on a sphere of radius $1/2$ with a constant sectional curvature of 4.

We generated data points from the normal distribution with $\mu=(1,0,0,0)$ on $S^3$ and $\mathbb{H}^3$ and $\sigma$ assuming the values of $\pi/32$, $\pi/16$, $\pi/8$, $\pi/4$, and $\pi/2$. The parameters are estimated using $N=2^8=256$ data points and this simulation is repeated $L=1024$ times as before to calculate sample variance. We used (\ref{frechetvariance}) to calculate the sample variances $s_{y,L_2}^2$, $s_{y,L_1}^2$, $s_{y,H}^2$ and $s_{y,T}^2$ for the $L_2$, $L_1$, Huber and Tukey biweight estimators, respectively, and calculated the relevant sample relative efficiencies by taking the appropriate ratios. Table \ref{efficiencytable} displays these results. Up to $\sigma=\pi/4$, these figures match closely with our expectations of an ARE to the $L_2$ estimator of 95\% for the Huber and Tukey biweight estimators and 84.88\% for the $L_1$ estimator, as listed in Table \ref{table}. However, when $\sigma=\pi/2$, the robust estimators are  all slightly more efficient than expected, with Huber's and Tukey's estimators approaching parity with the $L_2$ estimator, suggesting that if anything, the efficiencies of the robust estimators increase with more curvature, whether positive or negative. 

	\begin{table}
		\centering
		\caption{Relative efficiencies of the three robust estimators to the $L_2$ estimator when errors are normal.}
		{\small
			\begin{tabular}{|c|c|c|c|c|c|c|}\cline{3-7}
				\multicolumn{2}{c|}{\multirow{2}{*}{}} & \multicolumn{5}{c|}{$\sigma$} \\\cline{3-7}
				\multicolumn{2}{c|}{} & $\pi/32$ & $\pi/16$ & $\pi/8$ & $\pi/4$ & $\pi/2$ \\\hhline {|=|=|=|=|=|=|=|}
				\multirow{3}*{$S^3$} & $s_{y,L_2}^2/s_{y,L_1}^2$ & 0.8408682 & 0.8346334 & 0.8502927 & 0.8490783 & 0.9502378 \\\cline{2-7}
				& $s_{y,L_2}^2/s_{y,H}^2$ & 0.9431518 & 0.9495724 & 0.9471038 & 0.9612558 & 0.9839542 \\\cline{2-7}
				& $s_{y,L_2}^2/s_{y,T}^2$ & 0.9454626 & 0.9487272 & 0.9456605 & 0.9637054 & 1.0052173 \\ \hhline {|=|=|=|=|=|=|=|}
				\multirow{3}*{$\mathbb{H}^3$} & $s_{y,L_2}^2/s_{y,L_1}^2$ & 0.8408384 & 0.8347415 & 0.8487818 & 0.8508665 & 0.9126134 \\\cline{2-7}
				& $s_{y,L_2}^2/s_{y,H}^2$ & 0.9431112 & 0.9495070 & 0.9458978 & 0.9643541 & 0.9757066 \\\cline{2-7}
				& $s_{y,L_2}^2/s_{y,T}^2$ & 0.9454040 & 0.9487923 & 0.9449126 & 0.9654057 & 0.9833246 \\\hline
			\end{tabular}
		}
		\label{efficiencytable}
	\end{table}
\vskip 3mm

\begin{figure}[!h]
	\centering
	\begin{subfigure}[b]{0.32\linewidth}
		\includegraphics[width=\linewidth]{./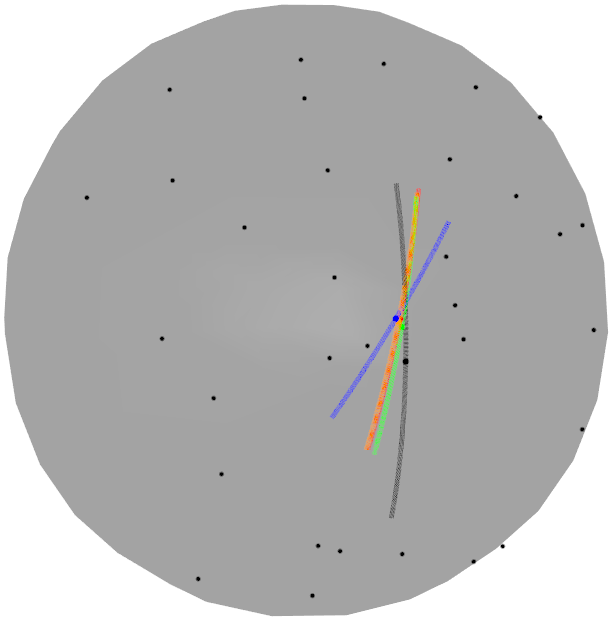}
		\caption{Type N errors on $S^2$}
		\label{fig:spheres1}
	\end{subfigure}
	\begin{subfigure}[b]{0.32\linewidth}
		\includegraphics[width=\linewidth]{./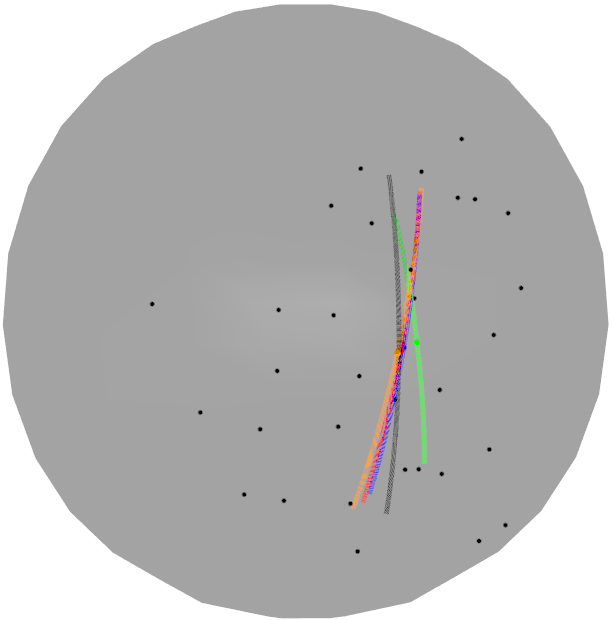}
		\caption{Type T errors on $S^2$}
		\label{fig:spheres2}
	\end{subfigure}
	\begin{subfigure}[b]{0.32\linewidth}
		\includegraphics[width=\linewidth]{./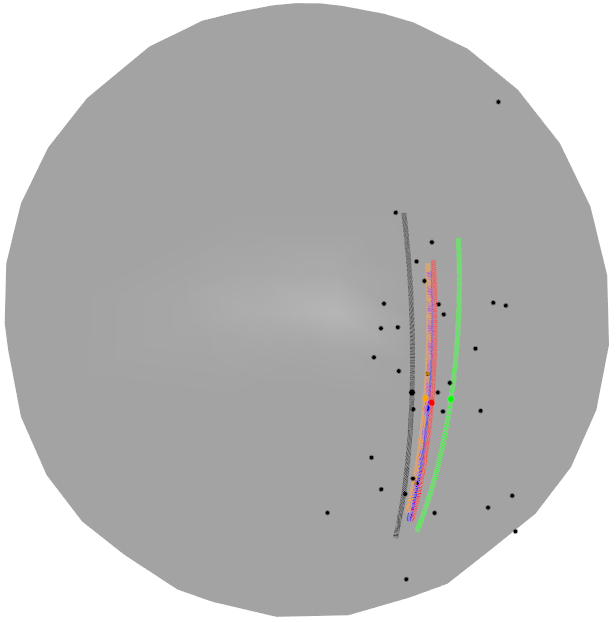}
		\caption{Type C errors on $S^2$}
		\label{fig:spheres3}
	\end{subfigure}

	\begin{subfigure}[b]{0.32\linewidth}
	    \includegraphics[width=\linewidth]{./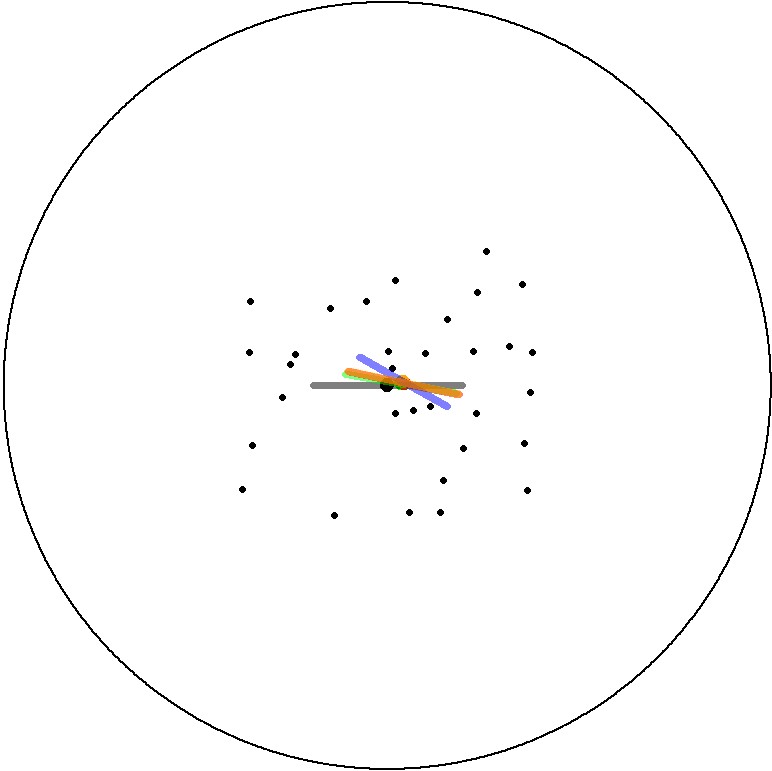}
	    \caption{Type N errors on $\mathbb{H}^2$}
	    \label{fig:hyperbolics1}
    \end{subfigure}
	    \begin{subfigure}[b]{0.32\linewidth}
		\includegraphics[width=\linewidth]{./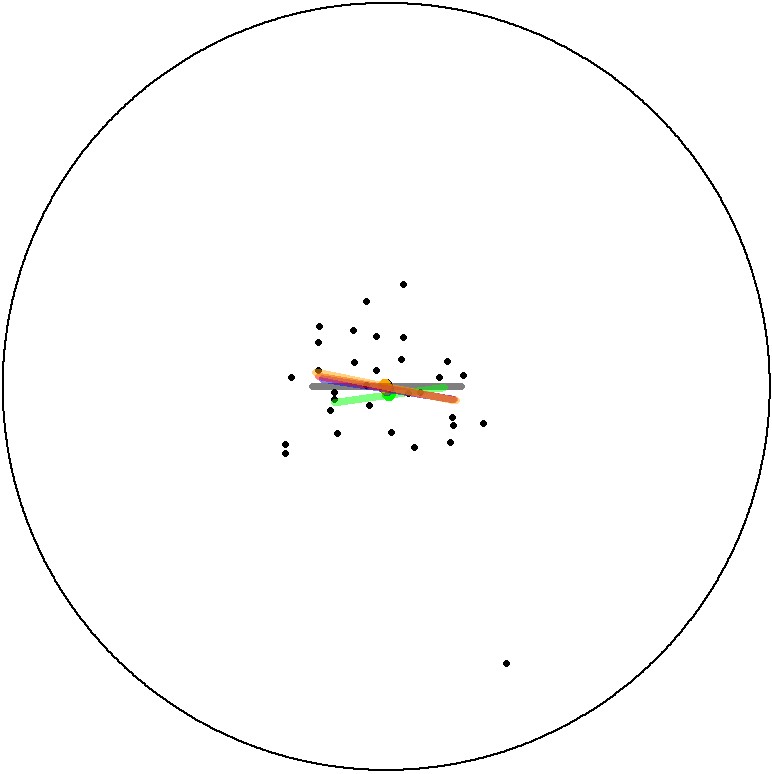}
		\caption{Type T errors on $\mathbb{H}^2$}
		\label{fig:hyperbolics2}
	\end{subfigure}
	\begin{subfigure}[b]{0.32\linewidth}
		\includegraphics[width=\linewidth]{./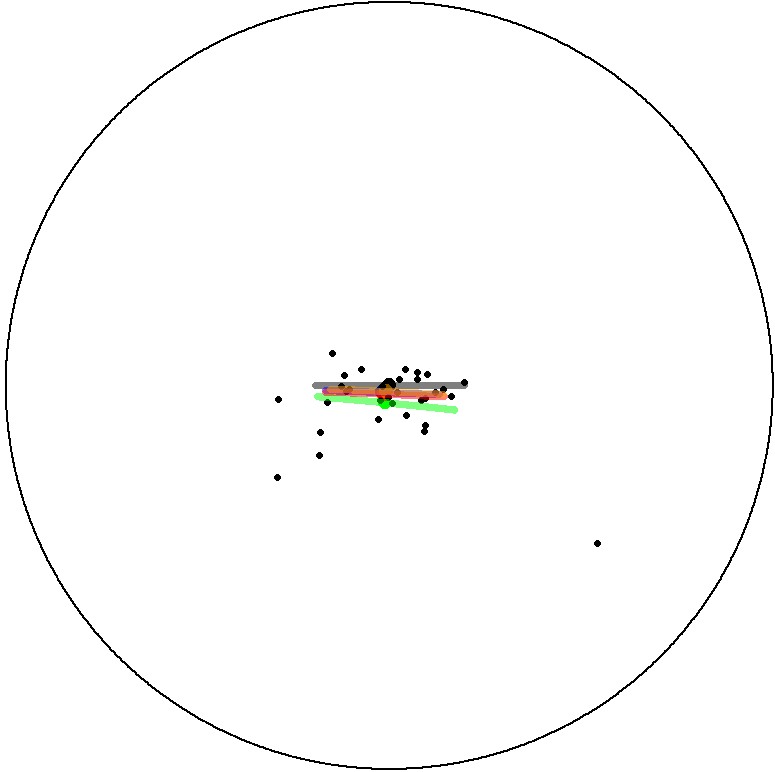}
		\caption{Type C errors on $\mathbb{H}^2$}
		\label{fig:hyperbolics3}
	\end{subfigure}
	\caption{Examples of simulations in the simple regression case on $S^2$ and $\mathbb{H}^2$ using different types of noise. The sample size is $2^5=32$ and the small black dots are the $y_i$. The images each show 5 geodesics from $\gamma(-\frac{1}{2})$ to $\gamma(\frac{1}{2})$; that is, from $\mathrm{Exp}(p,-\frac{1}{2}v)$ to $\mathrm{Exp}(p,\frac{1}{2}v)$. $\gamma(0)$, or $p$, is indicated by a large dot. The true geodesics are black, the $L_2$ solutions are green, the $L_1$ solutions blue, the Huber solutions red and the Tukey biweight solutions orange.} 
		\label{fig:examples}
\end{figure}

Figure \ref{fig:examples} shows an example simulation for each of the N, T, and C scenarios on $S^2$ and $\mathbb{H}^2$. Hyperbolic space has been visualized using the Poincar\'e ball model, which is briefly introduced in Appendix \ref{appenbhyp}. Note that in this model, distances increase exponentially as one approaches the boundary of the circle, which is at infinite, and that the geodesics, if extended, would appear as either arcs of circles that intersect the boundary at right angles or diameters. Figures \ref{fig:spheres2}, \ref{fig:spheres3}, \ref{fig:hyperbolics2} and \ref{fig:hyperbolics3}, in which the presence of outliers is clearly visible, illustrate the superior robustness properties of the other three estimators over the $L_2$ estimator, while Figure \ref{fig:spheres1} demonstrates that even in the normal case, the other estimators, with arguably the exception of the $L_1$ estimator, do not perform significantly worse than the $L_2$ estimator.

\subsection{Real Data Analysis: Corpus Callosum Shape Data} \label{analysis}

Mathematically, a shape refers to the geometry of an object after translation, scaling, and rotation have been removed. Kendall's two-dimensional shape space $\Sigma_2^K$ is the set of two-dimensional $K$-gon shapes, that is, the set of all possible non-coincident $K$-configurations in the two-dimensional plane modulo translation, scaling, and rotation, and is a compact symmetric space. For details on the structure of $\Sigma_2^K$, including the exponential map and its derivative, the logarithmic map and parallel transport, refer to Appendix \ref{appenbkendall}.

The corpus callosum, the largest white matter structure in the human brain, is a major nerve tract that connects the two cerebral hemispheres, facilitating interhemispheric communication. In this section, we perform simple geodesic regression with M-type estimators to analyze the relationship between the shape of the corpus callosum and age in older females with Alzheimer's disease (AD). We have used the preprocessed data provided by \cite{Cornea2017} on their website~ \url{http://www.bios.unc.edu/research/bias/software.html}. The planar shape data, obtained from the mid-sagittal slices of magnetic resonance images (MRI), are from the Alzheimer's disease neuroimaging initiative (ADNI) study. As mentioned above, the 88 female subjects with AD, whose ages range from 55 to 92, are the focus of this analysis, though the dataset contains data for both males and females with and without AD. Each shape is extracted from the MRI and segmented using the \texttt{FreeSurfer} and \texttt{CCseg} packages, resulting in a 50-by-2 matrix. The rows of this matrix give the planar coordinates of $K=50$ landmark points on the boundary of the shape, with enforced correspondences between the landmarks of different subjects.

Because the real dimension of the manifold is $2K-4=96\geq10$, the $L_1$ estimator is already efficient enough to make the Huber estimator unnecessary. Indeed, under the Euclidean, tangent space approximation, $\mathrm{ARE}_{L_1,L_2}=0.99481$. Therefore, we have only used the $L_2$, $L_1$, and Tukey biweight estimators to analyze this dataset. Using (\ref{xi}) and (\ref{tukeyare}), we calculated $\xi$ and $c_T$ to be 9.763 and 14.723, respectively. Geodesic regression is carried out six times. First, we apply the three estimators to the original data, giving $(\hat{p}_{L_2}, \hat{v}_{L_2})$, $(\hat{p}_{L_1}, \hat{v}_{L_1})$, and $(\hat{p}_T, \hat{v}_T)$; we use $(\hat{p}_{L_2}, \hat{v}_{L_2})$ as the baseline for comparison. Then we intentionally generate outliers by tampering with the data: for 20 of the 88 subjects, the shapes of their corpus callosums are flipped (reflected shapes are not considered equivalent in Kendall's shape space, for good reason). This causes the flipped points to be quite distant from the unflipped ones; the average distance between the 68 untampered points is 0.0802, while the average distance between those points and the 20 tampered ones is over 8 times larger at 0.6462. The three estimators are applied to this tampered dataset, resulting in $(\hat{p}^\prime_{L_2}, \hat{v}^\prime_{L_2})$, $(\hat{p}^\prime_{L_1}, \hat{v}^\prime_{L_1})$, and $(\hat{p}_T^\prime, \hat{v}_T^\prime)$.

\begin{figure}[!h]
	\centering
	\begin{subfigure}[b]{0.22\linewidth}
		\includegraphics[width=\linewidth]{./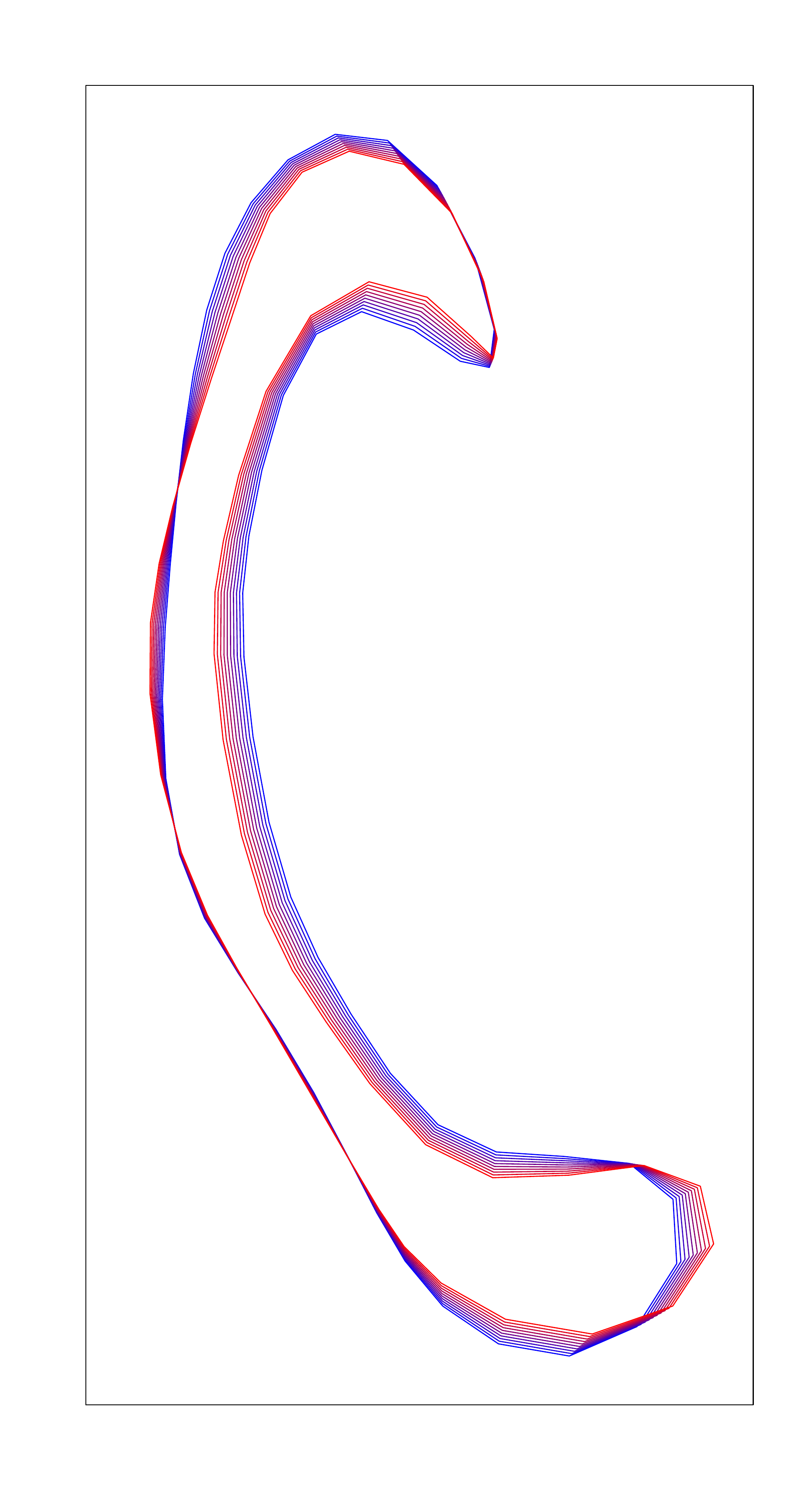}
		\caption{Untampered $L_2$}
		\label{fig:cc0}
	\end{subfigure}
	\begin{subfigure}[b]{0.22\linewidth}
		\includegraphics[width=\linewidth]{./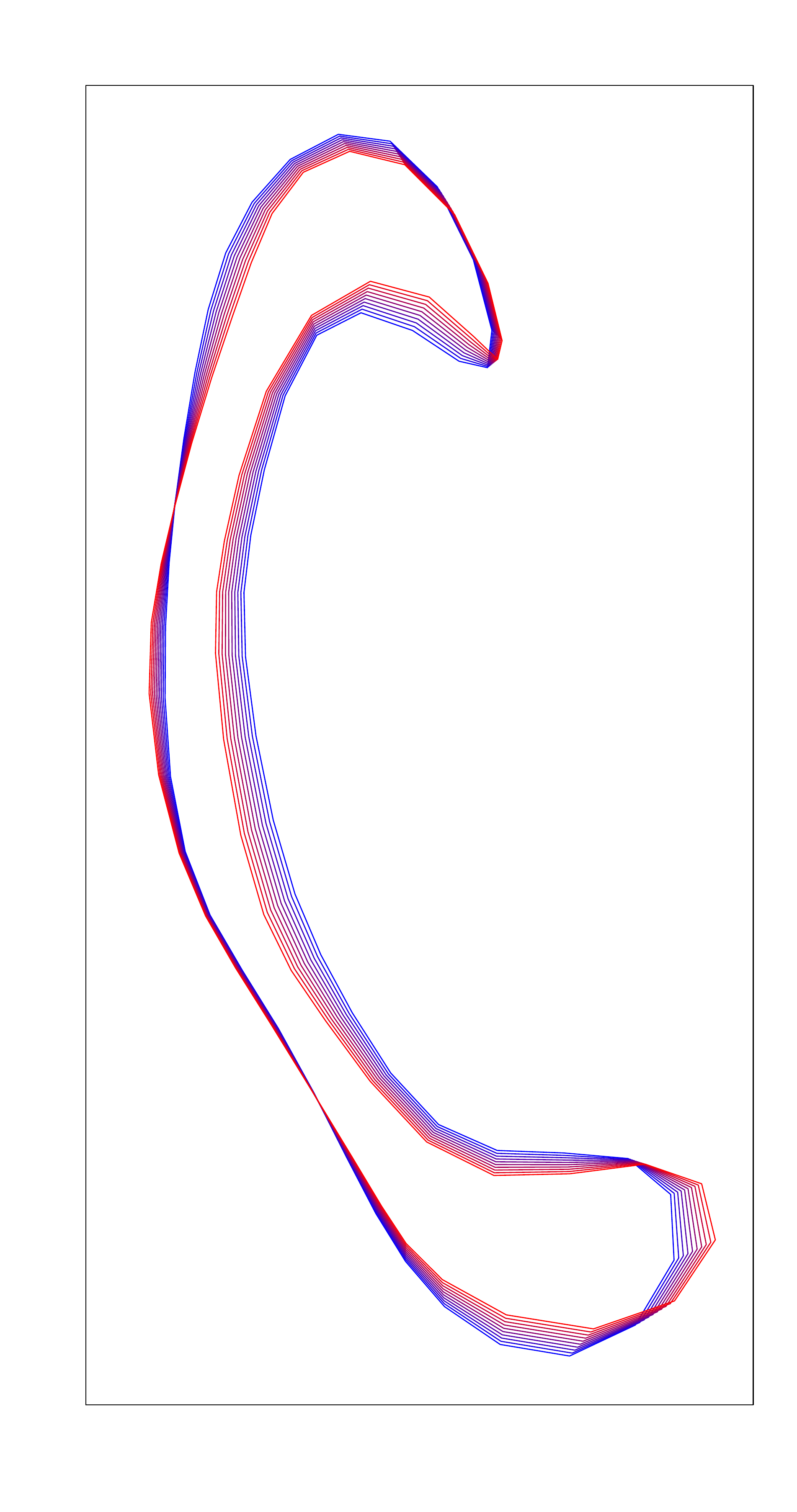}
		\caption{Untampered $L_1$}
		\label{fig:cc1}
	\end{subfigure}
	\begin{subfigure}[b]{0.22\linewidth}
		\includegraphics[width=\linewidth]{./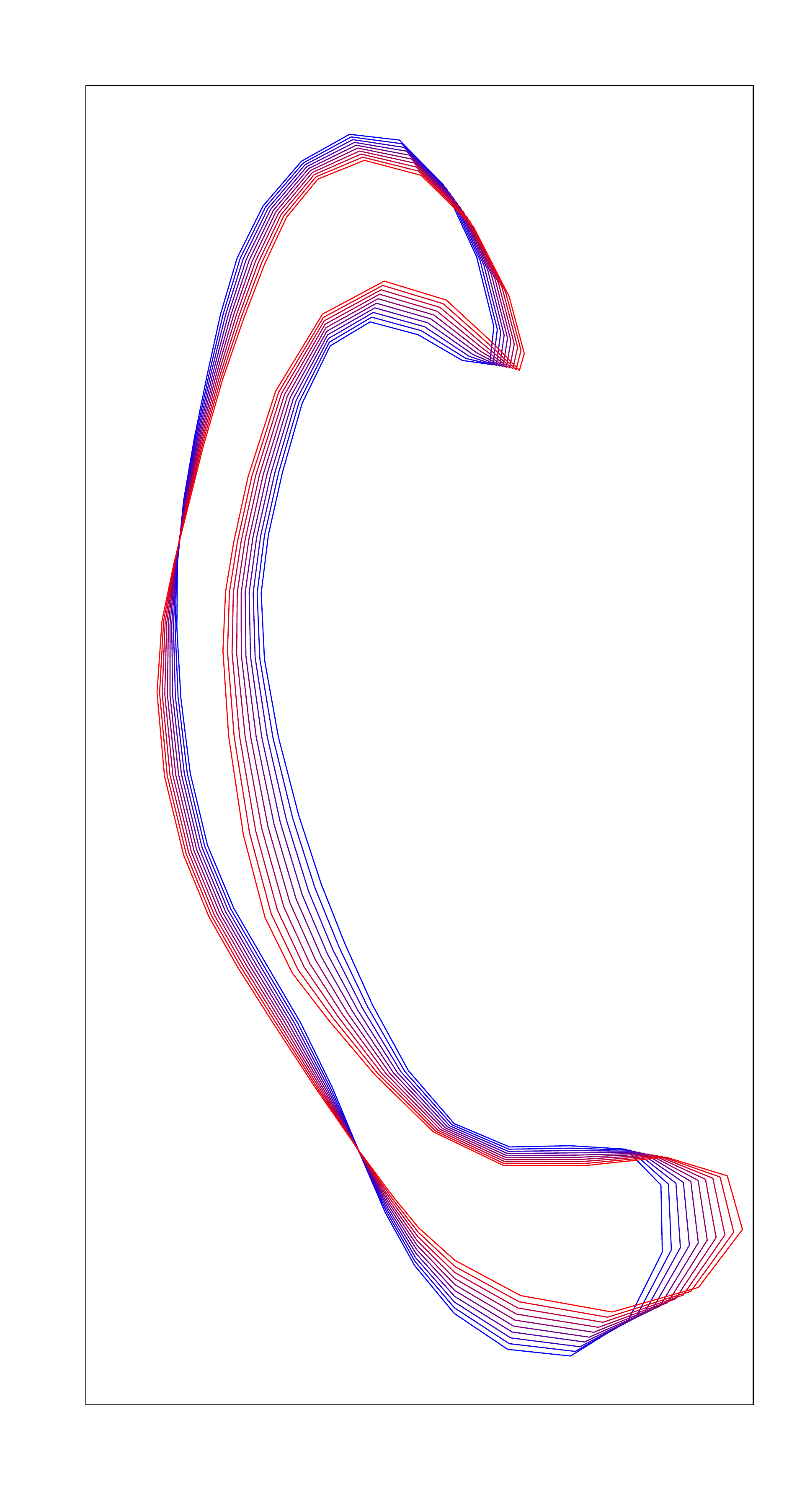}
		\caption{Untampered Tukey}
		\label{fig:cc2}
	\end{subfigure}

	\begin{subfigure}[b]{0.22\linewidth}
		\includegraphics[width=\linewidth]{./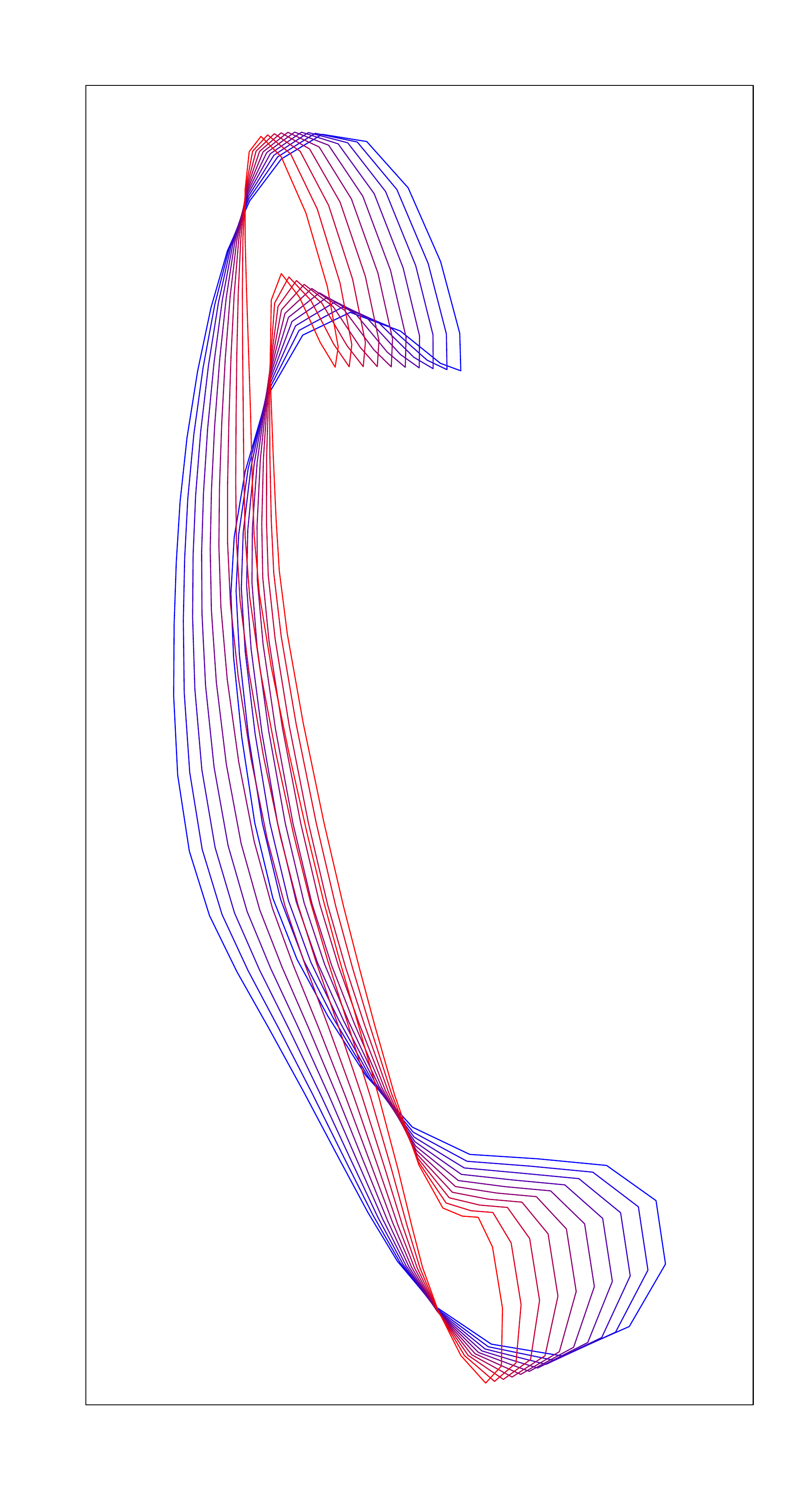}
		\caption{Tampered $L_2$}
		\label{fig:cc3}
	\end{subfigure}
	\begin{subfigure}[b]{0.22\linewidth}
		\includegraphics[width=\linewidth]{./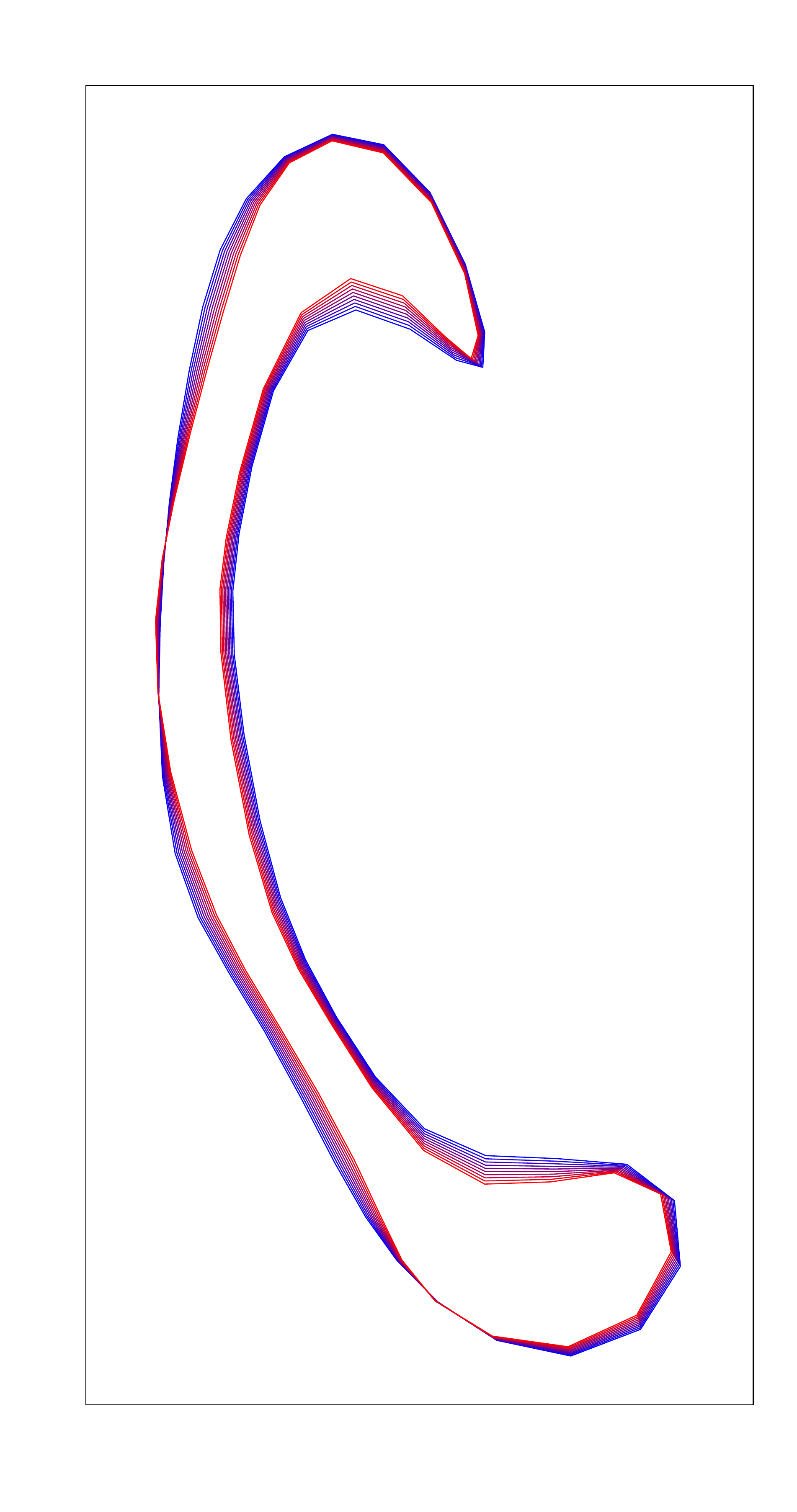}
		\caption{Tampered $L_1$}
		\label{fig:cc4}
	\end{subfigure}
	\begin{subfigure}[b]{0.22\linewidth}
		\includegraphics[width=\linewidth]{./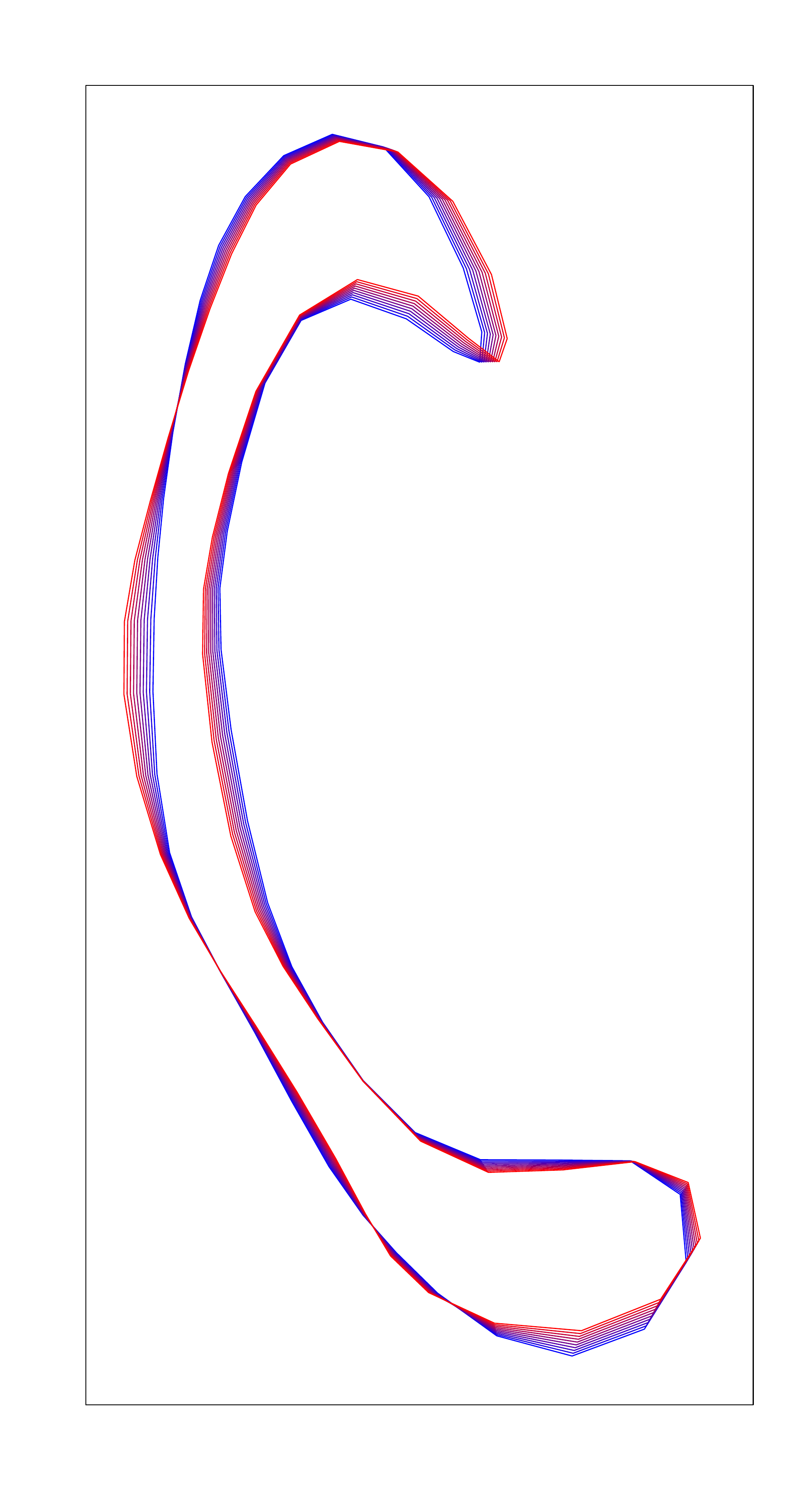}
		\caption{Tampered Tukey}
		\label{fig:cc5}
	\end{subfigure}
	\caption{The resulting geodesics displayed as a sequence of shapes. Each subfigure contains ten shapes, representing the estimated shape at every five years from age 50 (blue) to age 95 (red).}
	\label{fig:cc}
\end{figure}

\begin{table}[h]
	\centering
	\caption{Comparing the various regression parameter estimates against $(\hat{p}_{L_2}, \hat{v}_{L_2})$.}
	{\small 
		\begin{tabular}{|c|c||c|c|}\hline
			$d_{\Sigma_2^{50}}(\hat{p}_{L_1},\hat{p}_{L_2})$ & 0.0018924 & $\lVert\Gamma_{\hat{p}_{L_1}\rightarrow \hat{p}_{L_2}}(\hat{v}_{L_1})-\hat{v}_{L_2}\rVert$ & 0.0002177 \\ \hline
			$d_{\Sigma_2^{50}}(\hat{p}_T,\hat{p}_{L_2})$ & 0.0061325 & $\lVert\Gamma_{\hat{p}_T\rightarrow \hat{p}_{L_2}}(\hat{v}_T)-\hat{v}_{L_2}\rVert$ & 0.0011544 \\ \hline
			$d_{\Sigma_2^{50}}(\hat{p}^\prime_{L_2},\hat{p}_{L_2})$ & 0.1444551 & $\lVert\Gamma_{\hat{p}^\prime_{L_2}\rightarrow \hat{p}_{L_2}}(\hat{v}^\prime_{L_2})-\hat{v}_{L_2}\rVert$ & 0.0051700 \\ \hline
			$d_{\Sigma_2^{50}}(\hat{p}^\prime_{L_1},\hat{p}_{L_2})$ & 0.0182806 & $\lVert\Gamma_{\hat{p}^\prime_{L_1}\rightarrow \hat{p}_{L_2}}(\hat{v}^\prime_{L_1})-\hat{v}_{L_2}\rVert$ & 0.0009981 \\ \hline 
			$d_{\Sigma_2^{50}}(\hat{p}_T^\prime,\hat{p}_{L_2})$ & 0.0129771 & $\lVert\Gamma_{\hat{p}_T^\prime\rightarrow \hat{p}_{L_2}}(\hat{v}_T^\prime)-\hat{v}_{L_2}\rVert$ & 0.0008360 \\ \hline
		\end{tabular}
	}
	\label{cctable}
\end{table}

These results are displayed in Figure \ref{fig:cc} and Table \ref{cctable}. In Figure \ref{fig:cc}, each of the six geodesics are visualized as a sequence of ten shapes, $\mathrm{Exp}(\hat{p},(t-\bar{x})\hat{v})$, where $t=50,55,\ldots,90,95$, $\bar{x}$ is the mean age 74.75, and $(\hat{p}, \hat{v})$ is the regression estimate. Most of the figures look similar to Figure \ref{fig:cc0}, while Figure \ref{fig:cc3} is highly distorted. Table \ref{cctable} provides a more precise comparison through the actual parameter estimates. The first two rows show that the two robust estimators perform reasonably well on the untampered dataset, though the $L_1$ estimator performs significantly better. We observe in the last two rows that the reverse is true, to a much lesser extent, on the tampered dataset. The $L_2$ estimator, on the other hand, performs almost an order of magnitude worse than either robust estimator on the tampered data, as seen in the third row. All of these observations fall in line with our expectations about the three estimators on data with and without outliers in a very high-dimensional compact manifold; namely, that the $L_1$ and Tukey biweight estimators would be much more robust than the $L_2$ estimator, and that the $L_1$ estimator would fare better than the Tukey biweight estimator on data without outliers.

\section{Conclusion}

In this paper, we have proposed robust estimators for geodesic regression that are resistant to outliers. These methods adapted M-type estimators, including the $L_1$, Huber and Tukey biweight estimators, to a manifold setting. For the M-type estimators, we have developed a method, using tangent space approximations which are exact for dependent variables in $\mathbb{R}^n$, for calculating the tuning parameters that emphasizes efficiency in the case of normal errors while providing protection against outliers. We have also provided justification for a general preference for the $L_1$ estimator over the $L_2$ estimator and other estimators on high-dimensional spaces and derived the Riemannian normal distribution on the $n$-dimensional spheres and hyperbolic spaces, including a method for generating random points from this distribution. Finally, the proposed methods have been evaluated on synthetic and real data. 

This paper is only a first step into robust manifold statistics. We have mentioned the breakdown point on compact manifolds in Section \ref{symm}, but a study of the breakdown point on non-compact manifolds and the influence function, another tool for measuring the robustness of estimators, would be valuable. While our robust estimators use Euclidean approximations and have empirically been shown to be able to handle significant levels of curvature, further studies could also include the development of robust estimators derived with curvature, including negative curvature.

Beyond robust regression, a potentially fruitful avenue for future research is asymmetric loss functions on Riemannian manifolds. For example, quantile regression would require developing the notion of quantiles for manifold-valued data. One could also explore pseudo-quantiles, such as expectiles and M-quantiles, on manifolds.

\section*{Acknowledgement}
This research was supported by the National Research Foundation of Korea (NRF) funded by the Korea government (2020R1A4A1018207; 2021R1A2C1091357).

\begin{appendices}
\section{} \label{appena}
\subsection{Proofs of Propositions \ref{symmspaceprop} and \ref{constcurvspaceprop}} \label{symmspacepropsproof}
\subsubsection{Proof of Proposition \ref{symmspaceprop}}
\begin{proof}
We first note that the term in ($\ref{factor}$) is finite because $\rho$ is bounded below and so there exists some real $B>-\infty$ such that $\rho(t)>B$ for all $t\in\mathbb{R}$, which means that
\begin{equation}
D_M(\mu,b,\rho)\leq\int_M\mathrm{exp}\bigg(-\frac{B}{b}\bigg)dy=\mathrm{exp}\bigg(-\frac{B}{b}\bigg)\mbox{Vol}(M)<\infty, \notag
\end{equation}
where $\mbox{Vol}(M)$, the volume of $M$, is finite. So the function in (\ref{density}) is a well-defined density function.

The log-likelihood of the observations $\{(x_i,y_i)\}_{1,\ldots,N}$ under the distribution in ($\ref{density}$) is
\vspace{-3mm}
\begin{equation} \label{loglikelihood}
\sum_{i=1}^N\mathrm{log}\{D_M(\mathrm{Exp}(p,Vx_i),b,\rho)\}-\frac{1}{b}\sum_{i=1}^N\rho(d(\mathrm{Exp}(p,Vx_i),y_i)).
\end{equation}
Because $M$ is a symmetric space, it is also a homogeneous space, meaning that for any two points on the manifold, there exists an isometry which maps one to the other. Since the integral in  ($\ref{factor}$) depends only on the distance from $\mu$ to $y$, it is invariant to isometries, so the expression is independent of $\mu$. Therefore, the first sum in (\ref{loglikelihood}) is constant with respect to $p$ and $V$. Comparing the second sum to ($\ref{mestimator}$), we find that the parameters $(p,V)\in M\times (T_pM)^k$ that minimize  $L_\rho(p,V)$ also maximize the log-likelihood.
\end{proof}

\subsubsection{Proof of Proposition \ref{constcurvspaceprop}}
\begin{proof}
It is known that if $M$ is a complete and simply-connected Riemannian manifold of constant sectional curvature, it is isomorphic to either a sphere $S^n$, a Euclidean space $\mathbb{R}^n$, or a hyperbolic space $\mathbb{H}^n$, which are all symmetric spaces. The proposition is true on $S^n$ by Proposition \ref{symmspaceprop}. In $\mathbb{R}^n$, the $L_2$ estimator is equivalent to the isotropic $n$-variate distribution with variance $bI_n$. For the $L_1$ estimator,
\begin{equation}
D_{\mathbb{R}^n}(\mu,b,\rho)=\int_{\mathbb{R}^n}\mathrm{exp}\bigg(-\frac{\rho(d(\mu,y))}{b}\bigg)dy=\frac{2\pi^{\frac{n}{2}}}{\Gamma(\frac{n}{2})}\int_0^\infty r^{n-1}\mathrm{exp}\bigg(-\frac{\lvert r\rvert}{b}\bigg)dr \notag
\end{equation}
because the surface area of an $(n-1)$-sphere embedded in $\mathbb{R}^n$ is $((2\pi^{\frac{n}{2}}/\Gamma(\frac{n}{2})))r^{n-1}$. For any $b>0$, there exists some $R_b$ such that $r^{n-1}<\mathrm{exp}(r/2b)$ for all $r>R_b$, so 
\begin{align}
D_{\mathbb{R}^n}(\mu,b,\rho)&=\frac{2\pi^{\frac{n}{2}}}{\Gamma(\frac{n}{2})}\Bigg(\int_0^{R_b} r^{n-1}\mathrm{exp}\bigg(-\frac{r}{b}\bigg)dr+\int_{R_b}^\infty r^{n-1}\mathrm{exp}\bigg(-\frac{r}{b}\bigg)dr\Bigg)\nonumber\\
&\leq\frac{2\pi^{\frac{n}{2}}}{\Gamma(\frac{n}{2})}\Bigg(\int_0^{R_b}R_b^{n-1}\mathrm{exp}\bigg(-\frac{0}{b}\bigg)dr+\int_{R_b}^\infty \mathrm{exp}\bigg(\frac{r}{2b}\bigg)\mathrm{exp}\bigg(-\frac{r}{b}\bigg)dr\Bigg)\nonumber\\
&=\frac{2\pi^{\frac{n}{2}}}{\Gamma(\frac{n}{2})}\Bigg(\int_0^{R_b}R_b^{n-1}dr+\int_{R_b}^\infty \mathrm{exp}\bigg(-\frac{r}{2b}\bigg)dr\Bigg)\nonumber\\
&=\frac{2\pi^{\frac{n}{2}}}{\Gamma(\frac{n}{2})}\Bigg(R_b^{n-1}r\big|_0^{R_b}-2b\cdot \mathrm{exp}\bigg(-\frac{r}{2b}\bigg)\Big|_{R_b}^\infty\Bigg)\nonumber\\
&=\frac{2\pi^{\frac{n}{2}}}{\Gamma(\frac{n}{2})}\Bigg(R_b^n+2b\cdot \mathrm{exp}\bigg(-\frac{R_b}{2b}\bigg)\Bigg)\nonumber\\
&<\infty,\nonumber
\end{align}
so the density function is well-defined for all $b>0$.

As noted in Remark 3.1 in \cite{Cotton2002}, the surface area of an $(n-1)$-sphere of radius $x$ on $\mathbb{H}^n$ is
	\begin{equation} \label{hypsa}
	A_{\mathbb{H}^n}(x):=\frac{2\pi^{\frac{n}{2}}}{\Gamma(\frac{n}{2})}\mathrm{sinh}^{n-1}(x)
	\end{equation}
for $x\geq0$. We explicitly calculate the value of the corresponding normalizing constant corresponding to the $L_2$ estimator on $\mathbb{H}^n$ in (\ref{hypnormalconstant}) from Proposition \ref{hypnormalprop}(a) in Appendix \ref{hypnormal} with $\sigma^2$ replacing $b$. This expression is clearly finite for any $\sigma^2=b>0$. For the $L_1$ estimator, 
\begin{align} \label{l1hypconstant}
D_{\mathbb{H}^n}(\mu,b,\rho)&=\int_{\mathbb{H}^n} \mathrm{exp}\bigg(-\frac{\rho(d(y,\mu))}{b}\bigg)dy\nonumber\\
&=\int_0^\infty A_{\mathbb{H}}(r)\mathrm{exp}\bigg(-\frac{\lvert r\rvert}{b}\bigg)dr \nonumber\\
&=\frac{2\pi^{\frac{n}{2}}}{\Gamma(\frac{n}{2})}\int_0^\infty \mathrm{sinh}^{n-1}(r)\mathrm{exp}\bigg(-\frac{r}{b}\bigg)dr\nonumber\\
&=\frac{2\pi^{\frac{n}{2}}}{\Gamma(\frac{n}{2})}\int_0^\infty \bigg(\frac{(\mathrm{exp}(-r)-\mathrm{exp}(r))}{2}\bigg)^{n-1}\mathrm{exp}\bigg(-\frac{r}{b}\bigg)dr\nonumber\\
&=\frac{2\pi^{\frac{n}{2}}}{\Gamma(\frac{n}{2})}\int_0^\infty \frac{1}{2^{n-1}}\bigg(\sum_{j=0}^{n-1} {n-1 \choose j}(-1)^j\mathrm{exp}(-jr)\mathrm{exp}((n-1-j)r)\bigg)\mathrm{exp}\bigg(-\frac{r}{b}\bigg)dr\nonumber\\
&=\frac{2\pi^{\frac{n}{2}}}{\Gamma(\frac{n}{2})}\frac{1}{2^{n-1}}\bigg(\sum_{j=0}^{n-1} {n-1 \choose j}(-1)^j \int_0^\infty \mathrm{exp}\bigg((n-1-2j-\frac{1}{b})r\bigg)dr.
\end{align}
Because $\int_0^\infty \mathrm{exp}(cx)dx=(1/c)\mathrm{exp}(cx)|_0^\infty=-1/c$ is finite for any constant $c<0$, the expression in (\ref{l1hypconstant}) is finite if $n-1-2j-1/b<0$ for all $j=1,\ldots,n-1$. So (\ref{density}) is a well-defined density for any $b\in(0,1/(n-1))$.

When $\rho$ is Huber's loss, the finiteness of $D_M(\mu,b,\rho)$ for some $b>0$ on $M=\mathbb{R}^n$ or $\mathbb{H}^n$  easily follows from the above results and the definition of Huber's loss as a mixture of the $L_2$ and $L_1$ losses.
\end{proof}

\subsection{Derivations for Cutoff Parameters and Efficiency of the $L_1$ Estimator} \label{cutoffderivations}
This section expands upon Section \ref{calculating}, using the same notation and approximations. We make use of the beta function $B(x,y)$, the gamma function $\Gamma(a)$, the lower incomplete gamma function $\gamma(a,z)$, the upper incomplete gamma function $\Gamma(a,z)$, the lower and upper regularized gamma function $P(a,z)=\gamma(a,z)/\Gamma(z)$ and $Q(a,z)=\Gamma(a,z)/\Gamma(a)$, respectively, and the inverses of the two regularized gamma functions $P^{-1}(a,z)$ and $Q^{-1}(a,z)$. We also require partial derivatives of the upper and lower incomplete gamma functions: $\frac{\partial}{\partial a}\Gamma(a,z)=-a^{z-1}e^{-a}$ and $\frac{\partial}{\partial a}\gamma(a,z)=-\frac{\partial}{\partial a}\Gamma(a,z)=a^{z-1}e^{-a}$, respectively. We assume $k\geq2$. However, as mentioned in Section \ref{calculating}, the formulae for $\xi$ and the approximate AREs for the Tukey biweight and $L_1$ estimators, including their derivatives, turn out to still be valid in the $n=1$ case, and similarly for the Huber estimator if the second summands in (\ref{huberderivdiag}), (\ref{h1}), and (\ref{h3}) are set to zero. The main problem when $n=1$ in these summands is that the upper gamma function $\Gamma(a,z)$ is undefined when $a=0$.

\subsubsection{Identities} \label{identities}
Before proceeding, four identities related to integrals are derived. Recall that the density of a standard $k$-variate Gaussian random variable is defined as $\phi_n=(2\pi)^{-\frac{n}{2}}\mathrm{exp}(-\frac{1}{2}\sum_{j=1}^n(y^j)^2)$. Using the spherical coordinate system,  $r^2=\sum_{j=1}^n(y^j)^2$, $~y^1=r\mathrm{sin}(\theta_1)\cdots\mathrm{sin}(\theta_{n-2})\mathrm{sin}(\theta_{n-1})$ and $~y^j=r\mathrm{sin}(\theta_1)\cdots\mathrm{sin}(\theta_{n-j})\mathrm{cos}(\theta_{n-j+1})$ for $j=2,\ldots,n$, so that $dy=dy_1\cdots dy_n=r^{n-1}\mathrm{sin}^{n-2}(\theta_1)\cdots\mathrm{sin}(\theta_{n-2})d\theta_{n-1}\cdots d\theta_1$. Take a function $g:\mathbb{R}^+\rightarrow\mathbb{R}$. Letting $B_R\subset\mathbb{R}^n$ denote the $k$-ball centered at 0 of radius $R$, it follows that 
\vspace{-3mm}
\begin{align} \label{identity1}
& \int_{B_R}g(r)\phi_n(y)dy\nonumber\\ 
&= \int_0^R\int_0^\pi\cdots\int_0^\pi\int_0^{2\pi}g(r)\frac{1}{(2\pi)^{\frac{n}{2}}}{\mathrm{exp}\Big(-\frac{r^2}{2}\Big)}r^{n-1}\sin^{n-2}(\theta_1)\cdots\mathrm{sin}(\theta_{n-2})d\theta_{n-1}\cdots d\theta_1dr\nonumber\\
&=\frac{1}{(2\pi)^{\frac{n}{2}}}\Big(\int_0^Rg(r)r^{n-1}\mathrm{exp}\Big(-\frac{r^2}{2}\Big)dr\Big)\Big(\int_0^\pi\mathrm{sin}^{n-2}(\theta_1)d\theta_1\Big)\cdots \notag\\
&\qquad\cdots\Big(\int_0^\pi\mathrm{sin}^2(\theta_{n-3})d\theta_{n-3}\Big)\Big(\int_0^\pi\mathrm{sin}(\theta_{n-2})d\theta_{n-2}\Big)\Big(\int_0^{2\pi}d\theta_{n-1}\Big) \nonumber\\
&=\frac{1}{(2\pi)^{\frac{n}{2}}}\Big(\int_0^Rg(r)r^{n-1}\mathrm{exp}\Big(-\frac{r^2}{2}\Big)dr\Big)\Big(2\int_0^{\pi/2}\mathrm{sin}^{n-2}(\theta_1)d\theta_1\Big)\cdots \notag\\
&\qquad\cdots\Big(2\int_0^{\pi/2}\mathrm{sin}^2(\theta_{n-3})d\theta_{n-3}\Big)\Big(2\int_0^{\pi/2}\mathrm{sin}(\theta_{n-2})d\theta_{k-2}\Big)\Big(4\int_0^{\pi/2}d\theta_{n-1}\Big) \nonumber\\
&=\frac{1}{(2\pi)^{\frac{n}{2}}}\Big(\int_0^Rg(r)r^{n-1}\mathrm{exp}\Big(-\frac{r^2}{2}\Big)dr\Big)B\Big(\frac{n-1}{2},\frac{1}{2}\Big)\cdots B\Big(\frac{2}{2},\frac{1}{2}\Big)\cdot2B\Big(\frac{1}{2},\frac{1}{2}\Big) \nonumber\\
&=\frac{1}{(2\pi)^{\frac{n}{2}}}\Big(\int_0^Rg(r)r^{n-1}\mathrm{exp}\Big(-\frac{r^2}{2}\Big)dr\Big)\frac{\Gamma(\frac{n-1}{2})\Gamma(\frac{1}{2})}{\Gamma(\frac{k}{2})}\cdots\frac{\Gamma(\frac{2}{2})\Gamma(\frac{1}{2})}{\Gamma(\frac{3}{2})}\cdot2\frac{\Gamma(\frac{1}{2})\Gamma(\frac{1}{2})}{\Gamma(\frac{2}{2})} \nonumber\\
&=\frac{1}{(2\pi)^{\frac{n}{2}}}\Big(\int_0^Rg(r)r^{n-1}\mathrm{exp}\Big(-\frac{r^2}{2}\Big)dr\Big)\frac{2\pi^{\frac{n}{2}}}{\Gamma(\frac{n}{2})} \nonumber\\
&=2^{-\frac{n}{2}}\frac{2}{\Gamma(\frac{n}{2})}\Big(\int_0^Rg(r)r^{n-1}{\mathrm{exp}\Big(-\frac{r^2}{2}\Big)}dr\Big) \nonumber\\
&=2^{-\frac{n}{2}}\cdot\frac{n}{\Gamma(\frac{n+2}{2})}\Big(\int_0^Rg(r)r^{n-1}{\mathrm{exp}\Big(-\frac{r^2}{2}\Big)}dr\Big),\vspace{-3mm}
\end{align}
where $\Gamma(1/2)=\pi^{\frac{1}{2}},~\Gamma(1)=1$ and $\Gamma(z+1)=z\Gamma(z)$. The next two identities are derived in similar fashion:
\vspace{-3mm}
\begin{align} \label{identity2}
&\int_{B_R}g(r)(y^1)^2\phi_n(y)dy \notag\\
&=\int_0^R\int_0^\pi\cdots\int_0^\pi\int_0^{2\pi}g(r)(r\mathrm{sin}(\theta_1)\cdots\mathrm{sin}(\theta_{n-2})\mathrm{sin}(\theta_{n-1}))^2\frac{1}{(2\pi)^{\frac{n}{2}}}{\mathrm{exp}\Big(-\frac{r^2}{2}\Big)}r^{n-1}\notag\\
&\qquad\qquad\qquad\qquad\qquad \mathrm{sin}^{n-2}(\theta_1)\cdots\mathrm{sin}(\theta_{n-2})d\theta_{n-1}\cdots d\theta_1dr \notag\\
&=\int_0^R\int_0^\pi\cdots\int_0^\pi\int_0^{2\pi}g(r)\frac{1}{(2\pi)^{\frac{n}{2}}}{\mathrm{exp}\Big(-\frac{r^2}{2}\Big)}r^{n+1}\mathrm{sin}^{n}(\theta_1)\cdots\mathrm{sin}^2(\theta_{n-1})d\theta_{n-1}\cdots d\theta_1dr \notag\\
&=\frac{1}{(2\pi)^{\frac{n}{2}}}\Big(\int_0^Rg(r)r^{n+1}{\mathrm{exp}\Big(-\frac{r^2}{2}\Big)}dr\Big)\frac{\Gamma(\frac{n+1}{2})\Gamma(\frac{1}{2})}{\Gamma(\frac{n+2}{2})}\cdots\frac{\Gamma(\frac{4}{2})\Gamma(\frac{1}{2})}{\Gamma(\frac{5}{2})}\cdot2\frac{\Gamma(\frac{3}{2})\Gamma(\frac{1}{2})}{\Gamma(\frac{4}{2})} \notag\\
&=2^{-\frac{n}{2}}\cdot\frac{1}{\Gamma(\frac{n+2}{2})}\Big(\int_0^Rg(r)r^{n+1}{\mathrm{exp}\Big(-\frac{r^2}{2}\Big)}dr\Big)
\end{align}
and
\begin{align} \label{identity3}
&\int_{B_R}g(r)y^1y^2\phi_n(y)dy \notag\\
&=\int_0^R\int_0^\pi\cdots\int_0^\pi\int_0^{2\pi}g(r)(r\mathrm{sin}(\theta_1)\cdots\mathrm{sin}(\theta_{n-2})\mathrm{sin}(\theta_{n-1}))(r\mathrm{sin}(\theta_1)\cdots\mathrm{sin}(\theta_{n-2})\mathrm{cos}(\theta_{n-1}))\notag\\
&\qquad\qquad\qquad\qquad\qquad\frac{1}{(2\pi)^{\frac{n}{2}}}{\mathrm{exp}\Big(-\frac{r^2}{2}\Big)}r^{n-1}\mathrm{sin}^{n-2}(\theta_1)\cdots\mathrm{sin}(\theta_{n-2})d\theta_{n-1}\cdots d\theta_1dr \notag\\
&=\frac{1}{(2\pi)^{\frac{n}{2}}}\Big(\int_0^Rg(r)r^{n-1}{\mathrm{exp}\Big(-\frac{r^2}{2}\Big)}dr\Big)\Big(\int_0^\pi\mathrm{sin}^{n}(\theta_1)d\theta_1\Big)\cdots\notag\\
&\qquad\cdots\Big(\int_0^\pi\mathrm{sin}^3(\theta_{n-2})d\theta_{n-2}\Big)\Big(\int_0^{2\pi}\mathrm{sin}(\theta_{n-1})\mathrm{cos}(\theta_{n-1})d\theta_{n-1}\Big) \notag\\
&=0,
\end{align}
\vspace{-7mm}
because $\mathrm{sin}(\theta_{n-1})\mathrm{cos}(\theta_{n-1})=\mathrm{sin}(2\theta_{n-1})/2$, so the last factor is zero. The final identity uses the substitution $r^\prime=r^2/2$ and $dr=[(r^\prime)^{-\frac{1}{2}}/\sqrt2]dr^\prime$, 
\begin{align} \label{identity4}
\int_0^Rr^m{\mathrm{exp}\Big(-\frac{r^2}{2}\Big)}dr&=\int_0^{\frac{R^2}{2}}2^{\frac{m-1}{2}}(r^\prime)^{\frac{m-1}{2}}e^{-r^\prime}dr^\prime \notag\\
&=2^{\frac{m-1}{2}}\cdot\gamma\Big(\frac{m+1}{2},\frac{R^2}{2}\Big) \notag\\
&=2^{\frac{m-1}{2}}\cdot\Big[\Gamma\Big(\frac{m+1}{2}\Big)-\Gamma\Big(\frac{m+1}{2},\frac{R^2}{2}\Big)\Big].
\end{align}

\subsubsection{Detailed Steps} 
The first step uses $MAD=\mathrm{Median}(\lVert e_1\rVert,\ldots,\lVert e_N\rVert)$ to find a robust estimate of $\sigma$ in (\ref{normal}). In the manifold case, $e_i=\mathrm{Log}(\mathrm{Exp}(p,x_iv),y_i)$. For a random variable $Y^*$ distributed according to $f(y)=\phi_n(y)$, the goal is to find a factor $\xi$ such that $\mathrm{Pr}(\lVert Y^*\rVert<\xi)=1/2$. Letting  $g(r)=1$ in (\ref{identity1}) and $m=n-1$ in (\ref{identity4}), we have 
\begin{align*}
\mathrm{Pr}(\lVert Y^*\rVert<\xi)=\int_{B_\xi}\phi_n(y)dy &=2^{-\frac{n}{2}}\frac{n}{\Gamma(\frac{n+2}{2})}\Big(\int_0^\xi r^{n-1}{\mathrm{exp}\Big(-\frac{r^2}{2}\Big)}dr\Big) \notag\\
&=2^{-\frac{n}{2}}\frac{n}{\Gamma(\frac{n+2}{2})}\cdot 2^{\frac{n-2}{2}}\cdot\gamma\Big(\frac{n}{2},\frac{\xi^2}{2}\Big) \notag\\
&=2^{-1}\frac{2}{\Gamma(\frac{n}{2})}\cdot\gamma\Big(\frac{n}{2},\frac{\xi^2}{2}\Big) \notag\\
&=P\Big(\frac{n}{2},\frac{\xi^2}{2}\Big)=\frac{1}{2}. 
\end{align*}
The solution to this equation is given by (\ref{xi}). Finally, we obtain $\hat{\sigma}=MAD/\xi$.

The next step finds the multiple of $\sigma$ that gives an ARE to the sample mean of 95\%, assuming a normal distribution. It requires the four identities (\ref{identity1}), (\ref{identity2}), (\ref{identity3}) and (\ref{identity4}). We take a manifold-valued random variable $W\in M$ with intrinsic mean $\mu_W$. If $W^*:=\mathrm{Log}(\mu_W,W)$ has an isotropic Gaussian distribution in $\mathbb{R}^k$, i.e., its covariance $\Sigma_W=\sigma_W^2I_n$ is a multiple of the identity matrix, then
\vspace{-3mm}
\begin{equation}\label{chisquare}
\frac{1}{\sigma_W^2}\mathrm{E}(\lVert \mathrm{Log}(\mu_W,W)\rVert^2)=\mathrm{E}((W^*)^T\Sigma_W^{-1}W^*)=k \implies \mathrm{Var}(W)=n\sigma_W^2,
\end{equation}
as $(W^*)^T\Sigma_W^{-1}W^*\sim\chi^2(n)$. Recall that $Y_i$, $i=1,\ldots,N$, are distributed according to (\ref{normal}) and $Y_i^*:=\mathrm{Log}(\mu,\hat{Y})$. Let $\bar{Y}$ be the sample intrinsic mean of $Y_i$ and $\hat{Y}$ be an M-type estimator. Then we define $\bar{Y}^*=\mathrm{Log}(\mu,\bar{Y})$ and $\hat{Y}^*=\mathrm{Log}(\mu,\hat{Y})$. Assuming the latter two converge in distribution to $N(0,\sigma_1^2I_n)$ and $N(0,\sigma_2^2I_n)$, respectively, 
\vspace{-3mm}
\begin{equation} \label{are}
\mathrm{ARE}(\hat{Y},\bar{Y})\approx\frac{n\sigma_1^2}{n\sigma_2^2}=\frac{\sigma_1^2}{\sigma_2^2}
\end{equation}
by (\ref{chisquare}), so we just need to find $\sigma_1^2$ and $\sigma_2^2$.

The covariance matrix of an M-type estimator can be obtained using its related influence function. For a loss function $\rho:\mathbb{R}\rightarrow\mathbb{R}$, define $\lVert\rho\rVert:\mathbb{R}^n\rightarrow\mathbb{R}$ by $\Vert\rho\rVert(y)=\rho(\lVert y\rVert)$. Then for differentiable $\lVert\rho\rVert$, define $\psi:\mathbb{R}^n\rightarrow\mathbb{R}^n$ by $\psi(y)=\nabla\lVert\rho\rVert(y)$. Note that this coincides with the definition of $\psi$ as $\rho^\prime$ in the $n=1$ case for $\rho$ symmetric around 0. If $F$ is the distribution of $e$, and $T(F)$, the statistical functional at $F$ representing the M-type estimator, is the solution to $\mathrm{E}_F[\psi(y-T(F))]=0$, then the influence function at $y_0\in\mathbb{R}^k$ is defined as
\vspace{-3mm}
\[
IF(y_0;T,F)=\mathrm{E}\big(J_\psi(y-T(F)))^{-1}\psi(y_0-T(F)\big),
\]
where $J_\psi$ denotes the Jacobian matrix of $\psi$. Letting $\hat{F_N}$ represent the empirical distribution for $N$ independent samples from $F$, $T(\hat{F_N})$ is the sample M-estimator for these data points, and it is known by the central limit theorem that
\vspace{-3mm}
\[
\sqrt{N}\big(T(\hat{F_{N}})-T(F)\big)\Rightarrow N\Big(0,\int IF(y;T,F)IF(y;T,F)^TdF(y)\Big). 
\]
Taking our M-type estimator to be either Huber's or Tukey's estimator and $F$ to represent the multivariate normal distribution, $T(F)=\mu=0$ and the covariance of the sample M-type estimator $T(\hat{F_{N}})$ is asymptotically given by 
\begin{equation}\label{twoterms}
\Sigma_\psi=\frac{1}{N}\big(\mathrm{E}(J_\psi(y))^{-1}\big)^2\mathrm{E}\big[\psi(y)\psi(y)^T\big].
\end{equation}
The covariance of the sample mean $\bar{Y}^*=(1/N)\sum_{i=1}^NY_i^*$ is simply 
\vspace{-3mm}
\begin{equation} \label{meanvariance}
\frac{1}{N}\mathrm{Cov}(Y_1^*)=\frac{1}{N}I_k,
\end{equation}
so $\sigma_1^2=1/N$ in (\ref{are}).

\vskip 3mm
\noindent(a)~{\bf Huber estimator}:~
In the case of the Huber estimator, we have 
\begin{equation} \label{huberpsi}
\psi_H(y)=\begin{cases}
y &\mbox{if $\lVert y\rVert<c$} \\
c\cdot\frac{y}{\lVert y\rVert} &\mbox{otherwise},
\end{cases}
~~~\mbox{ and }~~~
J_{\psi_H}(y)=\begin{cases}
I_k &\mbox{if $\lVert y\rVert<c$} \\
c\big(\frac{1}{\lVert y\rVert}I_n-\frac{1}{\lVert y\rVert^3}yy^T\big) &\mbox{otherwise}.
\end{cases} 
\end{equation}

We first consider the first matrix term in (\ref{twoterms}). Using the identity of (\ref{identity3}), $\mathrm{E}(J_{\psi_H}(y))_{12}=-\int_{B_c^c}\frac{1}{\lVert y\rVert^3}(y^1)(y^2)\phi_kn(y)dy=0$. On the other hand, using the identities (\ref{identity1}), (\ref{identity2}), and (\ref{identity4}),
\begin{align} \label{huberderivdiag}
\mathrm{E}(J_{\psi_H}(y))_{11}&=\int_{B_c}\phi_n(y)dy+c\int_{B_c^c}\frac{1}{\lVert y\rVert}\phi_n(y)dy-c\int_{B_c^c}\frac{1}{\lVert y\rVert^3}(y^1)^2\phi_n(y)dy \notag&\\
&=2^{-\frac{n}{2}}\cdot\frac{n}{\Gamma\big(\frac{n+2}{2}\big)}\Big(\int_0^cr^{n-1}{\mathrm{exp}\Big(-\frac{r^2}{2}\Big)}dr\Big) \notag \\
&\qquad +c\cdot 2^{-\frac{n}{2}}\cdot\frac{n}{\Gamma\big(\frac{n+2}{2}\big)}\Big(\int_c^{\infty}\frac{1}{r}r^{n-1}{\mathrm{exp}\Big(-\frac{r^2}{2}\Big)}dr\Big) \notag \\
&\qquad-c\cdot 2^{-\frac{n}{2}}\cdot\frac{1}{\Gamma\big(\frac{n+2}{2}\big)}\Big(\int_c^{\infty}\frac{1}{r^3}r^{n-1}{\mathrm{exp}\Big(-\frac{r^2}{2}\Big)}dr\Big) \notag\\
&=\frac{1}{\Gamma\big(\frac{n+2}{2}\big)}\Bigg\{2^{-\frac{n}{2}}\cdot n\cdot 2^{\frac{n-2}{2}}\cdot\gamma\Big(\frac{n}{2},\frac{c^2}{2}\Big)+c\cdot 2^{-\frac{n}{2}}\cdot n\cdot 2^{\frac{n-3}{2}}\cdot\Big[\Gamma\Big(\frac{n-1}{2}\Big) \notag \\
&\qquad-\gamma\Big(\frac{n-1}{2},\frac{c^2}{2}\Big)\Big]-c\cdot 2^{-\frac{n}{2}}\cdot 2^{\frac{n-3}{2}}\cdot\Big[\Gamma\Big(\frac{n-1}{2}\Big)-\gamma\Big(\frac{n-1}{2},\frac{c^2}{2}\Big)\Big]\Bigg\} \notag\\
&=\frac{1}{\Gamma\big(\frac{n+2}{2}\big)}\Bigg\{\frac{n}{2}\gamma\Big(\frac{n}{2},\frac{c^2}{2}\Big)+2^{-\frac{3}{2}}c(n-1)\Gamma\Big(\frac{n-1}{2},\frac{c^2}{2}\Big)\Bigg\}.
\end{align}
By symmetry, $\mathrm{E}(J_{\psi_H}(y))_{jj}=\mathrm{E}(J_{\psi_H}(y))_{11}$ for $j=1,\ldots,n$, and $\mathrm{E}(J_{\psi_H}(y))_{lj}=\mathrm{E}(J_{\psi_H}(y))_{12}$ for all $j,~l=1,\ldots,n$, $l\neq j$, so the covariance of the sample mean is a scalar multiple of the identity matrix; namely, $\mathrm{E}(J_{\psi_H}(y))$ is $I_n$ multiplied by the result of (\ref{huberderivdiag}).

We now consider the second matrix term in (\ref{twoterms}). The non-diagonal terms can again be shown to be zero using identity (\ref{identity3}) and symmetry, and the diagonal terms can be shown to be equal by symmetry. Then with $\psi_H=(\psi_H^1,\ldots,\psi_H^n)$ in (\ref{huberpsi}), it follows that 
\begin{align} \label{huberdiag}
\mathrm{E}[\psi_H(y)\psi_H(y)^T]_{11}&=\mathrm{E}[(\psi_H^1(y))^2] \notag \\
&=\int_{B_c}(y^1)^2\phi_n(y)dy+c^2\int_{B_c^c}\frac{1}{\lVert y\rVert^2}(y^1)^2\phi_n(y)dy \notag \\
&=\frac{2^{-\frac{n}{2}}}{\Gamma\big(\frac{n+2}{2}\big)}\Big(\int_0^cr^{n+1}{\mathrm{exp}\Big(-\frac{r^2}{2}\Big)}dr\Big)+c^2\cdot\frac{2^{-\frac{n}{2}}}{\Gamma\big(\frac{n+2}{2}\big)}\Big(\int_c^{\infty}r^{kn-1}{\mathrm{exp}\Big(-\frac{r^2}{2}\Big)}dr\Big) \notag\\
&=\frac{1}{\Gamma\big(\frac{n+2}{2}\big)}\Bigg\{\gamma\Big(\frac{n+2}{2},\frac{c^2}{2}\Big)+\frac{c^2}{2} \Gamma\Big(\frac{n}{2},\frac{c^2}{2}\Big)\Bigg\},
\end{align}
using (\ref{identity2}) and (\ref{identity4}). Thus, the matrix $\mathrm{E}[\psi_H(y)\psi_H(y)^T]$ is the above expression multiplied by $I_n$, and the variance $\Sigma_\psi$ in (\ref{twoterms}) can be calculated using (\ref{huberderivdiag}) and (\ref{huberdiag}),
\begin{equation} \label{hubervar}
\Sigma_{\psi_H}=\frac{\mathrm{E}[\psi_H(y)\psi_H(y)^T]_{11}}{N(\mathrm{E}(J_{\psi_H}(y))_{11})^2}\cdot I_n,
\end{equation}
giving $\sigma_2^2$ in (\ref{are}). Hence, from (\ref{are}), (\ref{meanvariance}), (\ref{huberderivdiag}), (\ref{huberdiag}), and (\ref{hubervar}), the approximate ARE to the sample mean is given by (\ref{huberare})
\begin{equation} \label{h}
\mbox{ARE}_{H,L_2}(c,n)\approx A_H(c,n):=\frac{H_1^2}{\Gamma\big(\frac{n+2}{2}\big)H_2}, 
\end{equation}
where 
\begin{eqnarray}
H_1&=&\Gamma\Big(\frac{n+2}{2}\Big)\mathrm{E}(J_{\psi_H}(y))_{11}=\frac{n}{2}\gamma\Big(\frac{n}{2},\frac{c^2}{2}\Big)+2^{-\frac{3}{2}}c(n-1)\Gamma\Big(\frac{n-1}{2},\frac{c^2}{2}\Big), \label{h1}\\
H_2&=&\Gamma\Big(\frac{n+2}{2}\Big)\mathrm{E}[\psi_H(y)\psi_H(y)^T]_{11}=\gamma\Big(\frac{n+2}{2},\frac{c^2}{2}\Big)+\frac{c^2}{2} \Gamma\Big(\frac{n}{2},\frac{c^2}{2}\Big). \label{h2} 
\end{eqnarray}

Lastly, we apply the Newton-Raphson method to find the value of $c$ for which the ARE is approximately 95\%, that is, the solution in $c$ to the equation $A_H(c,n)-0.95=0$. This requires the partial derivative of $A_H(c,n)$ with respect to $c$, 
\begin{equation}
\frac{\partial}{\partial c}A_H(c,n)=\frac{2H_1H_3H_2-H_1^2H_4}{\Gamma\big(\frac{n+2}{2}\big)H_2^2}, \notag 
\end{equation}
where $H_1$ and $H_2$ are as above and
\begin{eqnarray}
H_3&=&\frac{\partial}{\partial c}H_1\notag \\
&=&\frac{cn}{2}\Big(\frac{c^2}{2}\Big)^{\frac{n-2}{2}}{\mathrm{exp}\Big(-\frac{c^2}{2}\Big)}+2^{-\frac{3}{2}}(n-1)\Gamma\Big(\frac{n-1}{2},\frac{c^2}{2}\Big)-2^{-\frac{3}{2}}c^2(n-1)\Big(\frac{c^2}{2}\Big)^{\frac{n-3}{2}}{\mathrm{exp}\Big(-\frac{c^2}{2}\Big)} \notag \\
&=&2^{-\frac{n}{2}}c^{k-1}{\mathrm{exp}\Big(-\frac{c^2}{2}\Big)}+2^{-\frac{3}{2}}(n-1)\Gamma\Big(\frac{n-1}{2},\frac{c^2}{2}\Big), \label{h3}\\
H_4&=& \frac{\partial}{\partial c}H_2 \notag \\
&=&c\Big(\frac{c^2}{2}\Big)^{\frac{n}{2}}{\mathrm{exp}\Big(-\frac{c^2}{2}\Big)}+c\Gamma\Big(\frac{n}{2},\frac{c^2}{2}\Big)-c\Big(\frac{c^2}{2}\Big)\Big(\frac{c^2}{2}\Big)^{\frac{n-2}{2}}{\mathrm{exp}\Big(-\frac{c^2}{2}\Big)} \notag \\
&=&c\Gamma\Big(\frac{n}{2},\frac{c^2}{2}\Big). \label{h4}
\end{eqnarray}

\vskip 3mm
\noindent(b)~{\bf Tukey biweight estimator}:~
For this estimator, it is easy to show that
\begin{equation}
\psi_B(y)=\begin{cases}
\Big[1-\big(\frac{\lVert y\rVert}{c}\big)^2\Big]^2\cdot y &~~\mbox{if $\lVert y\rVert<c$}\\
0 &~~ \mbox{otherwise},
\end{cases} \notag
\end{equation}
and
\begin{equation}
J_{\psi_B}(y)=\begin{cases}
\Big[1-\big(\frac{\lVert y\rVert^2}{c^2}\big)^2\Big]^2I_n-\frac{4}{c^2}\Big[1-\big(\frac{\lVert y\rVert^2}{c^2}\big)^2\Big]yy^T &~~\mbox{if $\lVert y\rVert<c$}\\
0 &~~ \mbox{otherwise}.
\end{cases} \notag
\end{equation}

By similar arguments to the ones used for the Huber estimator, we have $\mathrm{E}(J_{\psi_B}(y))_{12}=0$, $\mathrm{E}[\psi_H(y)\psi_H(y)^T]_{12}=0$, 
\begin{equation}
\mathrm{E}(J_{\psi_H}(y))_{11}=\frac{1}{\Gamma\big(\frac{n+2}{2}\big)}\Bigg\{\frac{2(n+4)}{c^4}\gamma\Big(\frac{n+4}{2},\frac{c^2}{2}\Big)-\frac{2(n+2)}{c^2}\gamma\Big(\frac{n+2}{2},\frac{c^2}{2}\Big)+\frac{n}{2}\gamma\Big(\frac{n}{2},\frac{c^2}{2}\Big)\Bigg\}, \label{tukeyderivdiag} 
\end{equation}
\begin{align}
\mathrm{E}[\psi_H(y)\psi_H(y)^T]_{11}&=\frac{1}{\Gamma\big(\frac{n+2}{2}\big)}\Bigg\{\gamma\Big(\frac{n+2}{2},\frac{c^2}{2}\Big)-\frac{8}{c^2}\gamma\Big(\frac{n+4}{2},\frac{c^2}{2}\Big)+\frac{24}{c^4}\gamma\Big(\frac{n+6}{2},\frac{c^2}{2}\Big)\notag\\
&\qquad\qquad\qquad-\frac{32}{c^6}\gamma\Big(\frac{n+8}{2},\frac{c^2}{2}\Big)+\frac{16}{c^8}\gamma\Big(\frac{n+10}{2},\frac{c^2}{2}\Big)\Bigg\}. \label{tukeydiag}
\end{align}
Thus, the variance $\Sigma_\psi$ in (\ref{twoterms}) can be calculated using (\ref{tukeyderivdiag}) and (\ref{tukeydiag}), 
\vspace{-3mm}
\begin{equation} \label{tukeyvar}
\Sigma_{\psi_B}=\frac{\mathrm{E}[\psi_B(y)\psi_B(y)^T]_{11}}{N(\mathrm{E}(J_{\psi_B}(y))_{11})^2}\cdot I_n.
\end{equation}
giving $\sigma_2^2$ in (\ref{are}). Therefore, from (\ref{are}), (\ref{meanvariance}), (\ref{tukeyderivdiag}), (\ref{tukeydiag}), and (\ref{tukeyvar}), the approximate ARE to the sample mean is given by (\ref{tukeyare}), 
\begin{equation}
\mbox{ARE}_{T,L_2}(c,n)\approx A_T(c,n):=\frac{T_1^2}{\Gamma\big(\frac{n+2}{2}\big)T_2}, \notag
\end{equation}
where 
\begin{eqnarray*}
T_1&=&\Gamma\Big(\frac{n+2}{2}\Big)\mathrm{E}(J_{\psi_T}(y))_{11}=\frac{2(n+4)}{c^4}\gamma\Big(\frac{n+4}{2},\frac{c^2}{2}\Big)-\frac{2(n+2)}{c^2}\gamma\Big(\frac{n+2}{2},\frac{c^2}{2}\Big) \\
&&\qquad\qquad\qquad\qquad\qquad\qquad+\frac{n}{2}\gamma\Big(\frac{n}{2},\frac{c^2}{2}\Big), \\
T_2&=&\Gamma\Big(\frac{n+2}{2}\Big)\mathrm{E}[\psi_T(y)\psi_H(y)^T]_{11}=\gamma\Big(\frac{n+2}{2},\frac{c^2}{2}\Big)-\frac{8}{c^2}\gamma\Big(\frac{n+4}{2},\frac{c^2}{2}\Big)+\frac{24}{c^4}\gamma\Big(\frac{n+6}{2},\frac{c^2}{2}\Big) \\
&&\qquad\qquad\qquad\qquad\qquad\qquad\qquad\qquad-\frac{32}{c^6}\gamma\Big(\frac{n+8}{2},\frac{c^2}{2}\Big)+\frac{16}{c^8}\gamma\Big(\frac{n+10}{2},\frac{c^2}{2}\Big).
\end{eqnarray*}

We solve for the root of the function $A_T(c,n)-0.95$ by utilizing $\frac{\partial}{\partial c}A_T(c,n)$ in the Newton-Raphson method, 
\begin{equation}
\frac{\partial}{\partial c}A_T(c,n)=\frac{2T_1T_3T_2-T_1^2T_4}{\Gamma\big(\frac{n+2}{2}\big)T_2^2}, \notag 
\end{equation}
where $T_1$ and $T_2$ are as above and
\begin{eqnarray*}
T_3&=&\frac{\partial}{\partial c}T_1 \\
&=&-\frac{8(n+4)}{c^5}\gamma\Big(\frac{n+4}{2},\frac{c^2}{2}\Big)+\frac{2(n+2)}{c^3}\Big(\frac{c^2}{2}\Big)^{\frac{n+2}{2}}{\mathrm{exp}\Big(-\frac{c^2}{2}\Big)}\\
&&+\frac{4(n+2)}{c^3}\gamma\Big(\frac{n+2}{2},\frac{c^2}{2}\Big)-\frac{2(n+2)}{c}\Big(\frac{c^2}{2}\Big)^{\frac{n}{2}}{\mathrm{exp}\Big(-\frac{c^2}{2}\Big)}+\frac{cn}{2}\Big(\frac{c^2}{2}\Big)^{\frac{n-2}{2}}{\mathrm{exp}\Big(-\frac{c^2}{2}\Big)} \\
&=&-\frac{8(n+4)}{c^5}\gamma\Big(\frac{n+4}{2},\frac{c^2}{2}\Big)+\frac{4(n+2)}{c^3}\gamma\Big(\frac{n+2}{2},\frac{c^2}{2}\Big)-2^{-\frac{n-2}{2}}c^{n-1}{\mathrm{exp}\Big(-\frac{c^2}{2}\Big)}, \\
T_4&=&\frac{\partial}{\partial c}T_2 \\
&=&c\Big(\frac{c^2}{2}\Big)^{\frac{n}{2}}{\mathrm{exp}\Big(-\frac{c^2}{2}\Big)}+\frac{16}{c^3}\gamma\Big(\frac{n+4}{2},\frac{c^2}{2}\Big)-\frac{8}{c}\Big(\frac{c^2}{2}\Big)^{\frac{n+2}{2}}{\mathrm{exp}\Big(-\frac{c^2}{2}\Big)} \\
&&\qquad-\frac{96}{c^5}\gamma\Big(\frac{n+6}{2},\frac{c^2}{2}\Big)d+\frac{24}{c^3}\Big(\frac{c^2}{2}\Big)^{\frac{n+4}{2}}{\mathrm{exp}\Big(-\frac{c^2}{2}\Big)}+\frac{192}{c^7}\gamma\Big(\frac{n+8}{2},\frac{c^2}{2}\Big) \\
&&\qquad-\frac{32}{c^5}\Big(\frac{c^2}{2}\Big)^{\frac{n+6}{2}}{\mathrm{exp}\Big(-\frac{c^2}{2}\Big)}-\frac{128}{c^9}\gamma\Big(\frac{n+10}{2},\frac{c^2}{2}\Big)+\frac{16}{c^7}\Big(\frac{c^2}{2}\Big)^{\frac{n+8}{2}}{\mathrm{exp}\Big(-\frac{c^2}{2}\Big)} \\
&=&\frac{16}{c^3}\gamma\Big(\frac{n+4}{2},\frac{c^2}{2}\Big)-\frac{96}{c^5}\gamma\Big(\frac{n+6}{2},\frac{c^2}{2}\Big)+\frac{192}{c^7}\gamma\Big(\frac{n+8}{2},\frac{c^2}{2}\Big)-\frac{128}{c^9}\gamma\Big(\frac{n+10}{2},\frac{c^2}{2}\Big).
\end{eqnarray*}

\subsubsection{Proof of Proposition \ref{l1prop}} \label{l1propproof}

\begin{proof}[Proof of Proposition \ref{l1prop}(a)] Using (\ref{h}), (\ref{h1}), (\ref{h2}), (\ref{h3}), and (\ref{h4}), and two applications of L'H\^{o}pital's rule, we obtain 
\begin{align*}
\lim_{c\rightarrow 0}A_H(c,n)&=\lim_{c\rightarrow 0}\frac{H_1^2}{\Gamma\big(\frac{n+2}{2}\big)H_2} =\lim_{c\rightarrow 0}\frac{2H_1H_3}{\Gamma\big(\frac{n+2}{2}\big)H_4} =\lim_{c\rightarrow 0}\frac{2H_3^2+2H_1\frac{\partial}{\partial c}H_3}{\Gamma\big(\frac{n+2}{2}\big)\frac{\partial}{\partial c}H_4} \\
&=\lim_{c\rightarrow 0}\frac{2\Big\{2^{-\frac{n}{2}}c^{n-1}{\mathrm{exp}\Big(-\frac{c^2}{2}\Big)}+2^{-\frac{3}{2}}(n-1)\Gamma\big(\frac{n-1}{2},\frac{c^2}{2}\big)\Big\}^2-2H_12^{-\frac{n}{2}}c^k{\mathrm{exp}\Big(-\frac{c^2}{2}\Big)}}{\Gamma\big(\frac{n+2}{2}\big)\Big\{\Gamma\big(\frac{n}{2},\frac{c^2}{2}\big)-2^{-\frac{n-2}{2}}c^{n-1}{\mathrm{exp}\Big(-\frac{c^2}{2}\Big)}\Big\}} \\
&=\frac{2\Big\{2^{-\frac{3}{2}}(n-1)\Gamma\big(\frac{n-1}{2}\big)\Big\}^2}{\Gamma\big(\frac{n+2}{2}\big)\Gamma\big(\frac{n}{2}\big)}=\frac{\big(\frac{n-1}{2}\big)^2\Gamma^2\big(\frac{n-1}{2}\big)}{\Gamma\big(\frac{n}{2}\big)\Gamma\big(\frac{n+2}{2}\big)}=\frac{\Gamma^2\big(\frac{n+1}{2}\big)}{\Gamma\big(\frac{n}{2}\big)\Gamma\big(\frac{n+2}{2}\big)}. \qedhere
\end{align*}
\end{proof}

\begin{lemma} \label{lemma} It follows that 
\[
\frac{\Gamma^2\big(\frac{n+1}{2}\big)}{\Gamma\big(\frac{n}{2}\big)\Gamma\big(\frac{n+2}{2}\big)}<\sqrt{\frac{n}{n+1}}. 
\]
\end{lemma}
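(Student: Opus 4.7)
My plan is to reinterpret the ratio of gamma functions on the left-hand side as a ratio of consecutive Wallis integrals, after which the bound will fall out of log-convexity together with the standard reduction formula for those integrals.

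Let $I_n := \int_0^{\pi/2}\sin^n\theta\,d\theta$. The substitution $t=\sin^2\theta$ in the Beta-function integral gives
$$I_n \;=\; \tfrac{1}{2}B\!\left(\tfrac{n+1}{2},\tfrac{1}{2}\right) \;=\; \frac{\sqrt{\pi}\,\Gamma\!\left(\frac{n+1}{2}\right)}{2\,\Gamma\!\left(\frac{n+2}{2}\right)},$$
so that the ratio $I_n/I_{n-1}$ telescopes into
$\Gamma^2\!\left(\tfrac{n+1}{2}\right)\big/\bigl[\Gamma\!\left(\tfrac{n}{2}\right)\Gamma\!\left(\tfrac{n+2}{2}\right)\bigr]$, which is precisely the quantity I want to bound. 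So it suffices to prove $I_n/I_{n-1}<\sqrt{n/(n+1)}$.

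Two classical facts about $I_n$ then finish the job. Integration by parts on $\int\sin^{n-1}\theta\,(\sin\theta\,d\theta)$ yields the Wallis recursion $I_{n+1}=\tfrac{n}{n+1}\,I_{n-1}$. Next, writing $\sin^n\theta=\sin^{(n-1)/2}\theta\cdot\sin^{(n+1)/2}\theta$ and applying the Cauchy-Schwarz inequality gives $I_n^2 \le I_{n-1}\,I_{n+1}$, strict because the two factors differ by the non-constant function $\sin\theta$ on $(0,\pi/2)$ and so are not proportional. Combining these,
$$I_n^2 \;<\; I_{n-1}\,I_{n+1} \;=\; \frac{n}{n+1}\,I_{n-1}^2,$$
and taking positive square roots is exactly the desired inequality.

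The one step that takes any thought is noticing the Wallis-integral identification at the outset; once the ratio is written as $I_n/I_{n-1}$, the remainder is standard and the strictness is automatic. I therefore do not anticipate any serious obstacle, and I would avoid the more intricate route of appealing to Gautschi-, Wendel-, or Kershaw-type gamma inequalities, which in my experience give the wrong constant here.
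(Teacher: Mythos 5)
Your proof is correct, and it takes a genuinely different route from the paper. The paper invokes Mortici's two-sided refinement of the Wallis-ratio inequality, $\Gamma(x+\tfrac12)/\Gamma(x+1)<(x+\tfrac14)^{-1/2}$ for $x\ge 1$, sets $x=n/2$ to get the intermediate bound $\Gamma^2(\tfrac{n+1}{2})/[\Gamma(\tfrac{n}{2})\Gamma(\tfrac{n+2}{2})]<\tfrac{2n}{2n+1}$, and then checks algebraically that $\tfrac{2n}{2n+1}<\sqrt{n/(n+1)}$. You instead identify the left-hand side as the ratio $I_n/I_{n-1}$ of consecutive Wallis integrals (your Beta-function computation and the telescoping are both correct), and combine the reduction formula $I_{n+1}=\tfrac{n}{n+1}I_{n-1}$ with the strict Cauchy--Schwarz inequality $I_n^2<I_{n-1}I_{n+1}$ (strict since $\sin^{(n-1)/2}\theta$ and $\sin^{(n+1)/2}\theta$ are not proportional on $(0,\pi/2)$). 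What your argument buys is self-containedness and generality: it avoids citing an external sharp gamma inequality, the strictness is automatic from log-convexity, and it works for all $n\ge 1$ rather than only $n\ge 2$ (the paper separately patches $n=1$). What the paper's route buys is the machinery it reuses elsewhere: the same Mortici bounds yield the companion lower estimate $(x+\tfrac13)^{-1/2}<\Gamma(x+\tfrac12)/\Gamma(x+1)$ and hence the two-sided squeeze used to prove Proposition~\ref{l1prop}(c), whereas your lemma alone does not supply that (though your ratios $r_n=I_n/I_{n-1}$ would: monotonicity of $r_n$ from log-convexity plus $r_nr_{n+1}=n/(n+1)$ gives $\sqrt{(n-1)/n}\le r_n<\sqrt{n/(n+1)}$, which also squeezes to $1$).
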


\begin{proof}
Theorem 3 in \cite{Mortici2012} states that, for $x\geq1$,
\begin{equation} \label{mortici}
\frac{1}{\sqrt{x\bigg(1+\frac{1}{4x-\frac{1}{2}+\frac{3}{16x+\frac{15}{4x}}}\bigg)}}<\frac{\Gamma\big(x+\frac{1}{2}\big)}{\Gamma\big(x+1\big)}<\frac{1}{\sqrt{x\bigg(1+\frac{1}{4x-\frac{1}{2}+\frac{3}{16x}}\bigg)}}.
\end{equation}
Because $x\geq1$, it follows that $4x-\frac{1}{2}+\frac{3}{16x}\leq4x-\frac{1}{2}+\frac{3}{16}=4x-\frac{5}{16}<4x$, so we have 
\begin{equation} \label{quarter}
\frac{1}{\sqrt{x\bigg(1+\frac{1}{4x-\frac{1}{2}+\frac{3}{16x}}\bigg)}}<\frac{1}{\sqrt{x\Big(1+\frac{1}{4x}\Big)}}=\frac{1}{\sqrt{x+\frac{1}{4}}}.
\end{equation}
Therefore, using (\ref{l1are}) and letting $x=\frac{n}{2}\geq1$ in (\ref{mortici}) and (\ref{quarter}), we obtain 
\[
\frac{\Gamma^2\big(\frac{n+1}{2}\big)}{\Gamma\big(\frac{n}{2}\big)\Gamma\big(\frac{n+2}{2}\big)}=\frac{\Gamma^2\big(\frac{n+1}{2}\big)}{\frac{2}{n}\Gamma\big(\frac{n+2}{2}\big)\Gamma\big(\frac{n+2}{2}\big)}=\frac{n}{2}\Bigg(\frac{\Gamma\big(\frac{n+1}{2}\big)}{\Gamma\big(\frac{n+2}{2}\big)}\Bigg)^2<\frac{\frac{n}{2}}{\frac{n}{2}+\frac{1}{4}}=\frac{2n}{2n+1}.
\]
Now, for $n\geq2$, it follows that $4n^3+4n^2<4n^3+4n^2+n$. Thus, we have 
\[
\frac{2n}{2n+1}<\sqrt{\frac{n}{n+1}}, 
\]
which completes the proof. 
\end{proof}

\begin{proof}[Proof of Proposition \ref{l1prop}(b)] 
By Lemma \ref{lemma} and (\ref{l1are}), we have 
\begin{align*}
A_{L_1}(n+1)&=\frac{\Gamma^2\big(\frac{n+2}{2}\big)}{\Gamma\big(\frac{n+1}{2}\big)\Gamma\big(\frac{n+3}{2}\big)} =\frac{\frac{n}{2}\Gamma\big(\frac{n}{2}\big)\Gamma\big(\frac{n+2}{2}\big)}{\Gamma\big(\frac{n+1}{2}\big)\cdot\frac{n+1}{2}\Gamma\big(\frac{n+1}{2}\big)}=\frac{n}{n+1}\frac{1}{A_{L_1}(n)} \\
&>\frac{n}{n+1}\sqrt{\frac{n+1}{n}}=\sqrt{\frac{n}{n+1}} \\
&>A_{L_1}(n)
\end{align*}
for $n\geq 2$.
\end{proof}

\begin{proof}[Proof of Proposition \ref{l1prop}(c)]
We again use (\ref{mortici}). Because $x\geq1>0$, it follows that $4x-\frac{1}{2}+\frac{3}{16x+\frac{15}{4x}}>4x-\frac{1}{2}>3x$, and so we have 
\begin{equation} \label{third}
\frac{1}{\sqrt{x\bigg(1+\frac{1}{4x-\frac{1}{2}+\frac{3}{16x+\frac{15}{4x}}}\bigg)}}>\frac{1}{\sqrt{x\Big(1+\frac{1}{3x}\Big)}}=\frac{1}{\sqrt{x+\frac{1}{3}}}.
\end{equation}
Combining (\ref{mortici}), (\ref{quarter}), and (\ref{third}), we obtain 
\begin{equation} \label{loose}
\frac{1}{\sqrt{x+\frac{1}{3}}}<\frac{\Gamma\big(x+\frac{1}{2}\big)}{\Gamma\big(x+1\big)}<\frac{1}{\sqrt{x+\frac{1}{4}}}
\end{equation}
for $x\geq1$. Taking the reciprocal of (\ref{loose}) and replacing $x$ with $x-\frac{1}{2}$ gives
\begin{equation*}
\sqrt{(x-\frac{1}{2})+\frac{1}{4}}<\frac{\Gamma\big((x-\frac{1}{2})+1\big)}{\Gamma\big((x-\frac{1}{2})+\frac{1}{2}\big)}<\sqrt{(x-\frac{1}{2})+\frac{1}{3}}
\end{equation*}
or
\begin{equation}\label{recip}
\sqrt{x+\frac{1}{2}}<\frac{\Gamma\big(x+\frac{1}{2}\big)}{\Gamma\big(x\big)}<\sqrt{x-\frac{1}{6}}
\end{equation}
for $x-\frac{1}{2}\geq1$, or $x\geq\frac{3}{2}$. Then multiplying (\ref{loose}) and (\ref{recip}) gives
\begin{equation} \label{limit}
\sqrt{\frac{x+\frac{1}{2}}{x+\frac{1}{3}}}<\frac{\Gamma^2\big(x+\frac{1}{2}\big)}{\Gamma\big(x\big)\Gamma\big(x+1\big)}<\sqrt{\frac{x-\frac{1}{6}}{x+\frac{1}{4}}}
\end{equation}
for $x\geq\frac{3}{2}$. The limits as $x\rightarrow \infty$ of the left- and right-hand expressions in (\ref{limit}) are both 1, and letting $x=\frac{n}{2}$, the central expression is (\ref{l1are}), completing the proof.
\end{proof}

\subsection{Riemannian Normal Distribution on $S^n$ and $\mathbb{H}^n$} \label{spherehypnormal}
\noindent
Here we derive the normalizing constant for the Riemannian normal distribution on $S^n$ and $\mathbb{H}^n$, which leads to a full description of the Riemannian normal density on those manifolds. We further describe how to randomly generate points from this distribution.

\subsubsection{Riemannian Normal Distribution on $S^n$}
\noindent
For $0\leq R\leq \pi$ and integer $m\geq 0$, define 
\begin{align} \label{gr}
& G_{m, \sigma^2}(R):=\bigg(\frac{i}{2}\bigg)^m\sqrt{\frac{\pi\sigma^2}{2}}\sum_{j=0}^{m} {m \choose j}(-1)^j \mathrm{exp}\Big(-\frac{(m-2j)^2\sigma^2}{2}\Big)\mathrm{erf}\Big(\frac{R}{\sqrt{2\sigma^2}}\notag\\
&\qquad\qquad\qquad+\sqrt{\frac{\sigma^2}{2}}\big(m-2j\big)i\Big),
\end{align}
where $i=\sqrt{-1}$ and $\mathrm{erf}(z):=(2/\sqrt{\pi})\int_0^z e^{-t^2} dt$ is the error function for complex $z$.
\begin{prop} \label{spherenormalprop}
(a) When $M=S^n$, the normalizing constant in (\ref{normal}) is given by
\begin{equation}
C_{S^n}(\mu,\sigma^2)=\frac{2\pi^{\frac{n}{2}}}{\Gamma(\frac{n}{2})} \big(G_{n-1,\sigma^2}(\pi)-G_{n-1,\sigma^2}(R)\big). \nonumber
\end{equation}
Substituting this normalizing constant into (\ref{normal}) gives the Riemannian normal density on $S^n$.
(b) The distribution function of $d(y,\mu)$ is
\begin{equation} \label{spheref}
	F_{S^n}(R):=\mathrm{Pr}(d(y,\mu)\leq R)=\left\{\begin{array}{ll} 0 &~~ \mbox{if $R<0$}\\
	\frac{G_{n-1,\sigma^2}(R)-G_{n-1,\sigma^2}(0)}{G_{n-1,\sigma^2}(\pi)-G_{n-1,\sigma^2}(0)} &~~ \mbox{if $0\leq R\leq \pi$}\\
	1 &~~  \mbox{if $R>\pi$}. \end{array}  \right.
\end{equation}
\end{prop}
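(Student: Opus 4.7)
The plan is to reduce the normalizing integral to a one-dimensional radial integral via geodesic polar coordinates centered at $\mu$, and then evaluate that integral by expanding $\sin^{n-1}r$ using complex exponentials and completing the square. Because $S^n$ is two-point homogeneous, the geodesic sphere of radius $r$ about $\mu$ is isometric to a standard round $(n-1)$-sphere of Euclidean radius $\sin r$, so its $(n-1)$-dimensional volume is $A_{S^n}(r)=\frac{2\pi^{n/2}}{\Gamma(n/2)}\sin^{n-1}r$ for $0\le r\le\pi$. Substituting this into $C_{S^n}(\mu,\sigma^2)=\int_{S^n}\exp(-d(y,\mu)^2/(2\sigma^2))\,dy$ reduces part (a) to establishing the one-variable identity
\[
\int_0^{R}\sin^{m}(r)\,\mathrm{exp}\!\left(-\frac{r^2}{2\sigma^2}\right)dr \;=\; G_{m,\sigma^2}(R)-G_{m,\sigma^2}(0)
\]
for $m=n-1$ and $R\in[0,\pi]$, after which setting $R=\pi$ yields the stated normalizing constant, and the same identity with general $R\in[0,\pi]$, combined with the fact that $d(y,\mu)$ takes values in $[0,\pi]$, yields part (b).

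To establish this identity I would first rewrite $\sin r = \tfrac{i}{2}(e^{-ir}-e^{ir})$, so by the binomial theorem $\sin^{m}r = (i/2)^{m}\sum_{j=0}^{m}\binom{m}{j}(-1)^{j}e^{-i(m-2j)r}$, and then integrate term by term. For each $j$, completing the square in the exponent gives
\[
-\frac{r^2}{2\sigma^2}-i(m-2j)r \;=\; -\frac{(r+i(m-2j)\sigma^2)^2}{2\sigma^2}-\frac{(m-2j)^2\sigma^2}{2}.
\]
The substitution $u=(r+i(m-2j)\sigma^2)/\sqrt{2\sigma^2}$ then converts the resulting integral into $\sqrt{\pi\sigma^2/2}\,\mathrm{exp}(-(m-2j)^2\sigma^2/2)[\mathrm{erf}(R/\sqrt{2\sigma^2}+(m-2j)\sqrt{\sigma^2/2}\,i)-\mathrm{erf}((m-2j)\sqrt{\sigma^2/2}\,i)]$, where $\mathrm{erf}$ is interpreted as the complex error function. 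Reassembling the $(i/2)^{m}$ prefactor, the binomial coefficients, and the Gaussian factors, the upper-endpoint erf contributions collect into $G_{m,\sigma^2}(R)$ and the lower-endpoint erf contributions collect into $G_{m,\sigma^2}(0)$, proving the identity.

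The main obstacle is purely technical: one must justify the contour deformation that lets $u$ range over a segment parallel to but off the real axis while still expressing the integral in terms of $\mathrm{erf}$, and then verify that the manifestly complex-looking quantity $G_{m,\sigma^2}(R)-G_{m,\sigma^2}(0)$ is real. The first point follows from Cauchy's theorem applied to the entire integrand $\exp(-z^2/(2\sigma^2)-i(m-2j)z)$ on a thin rectangular contour whose vertical sides shift the path from $[0,R]\subset\mathbb{R}$ up to the horizontal line $\mathrm{Im}(z)=(m-2j)\sigma^2$; the side contributions are the very erf differences appearing in the answer. The second point is automatic from reality of the starting integrand, and can also be seen directly by pairing the summands $j\leftrightarrow m-j$ and using $\mathrm{erf}(-z)=-\mathrm{erf}(z)$, with the $(i/2)^m$ prefactor conspiring to eliminate all imaginary units. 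Part (b) is then immediate: repeating the polar-coordinate reduction with upper limit $R\in[0,\pi]$ and dividing by $C_{S^n}(\mu,\sigma^2)$ gives the middle case of \ref{spheref}, while the cases $R<0$ and $R>\pi$ follow from the range of $d(y,\mu)$.
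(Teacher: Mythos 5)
Your proposal is correct and follows the same overall route as the paper: reduce $C_{S^n}(\mu,\sigma^2)$ to the radial integral $\frac{2\pi^{n/2}}{\Gamma(n/2)}\int_0^\pi \sin^{n-1}(r)\,e^{-r^2/(2\sigma^2)}\,dr$ via the area of a geodesic sphere, and identify $\int_0^R \sin^{m}(r)\,e^{-r^2/(2\sigma^2)}\,dr$ with $G_{m,\sigma^2}(R)-G_{m,\sigma^2}(0)$ (the paper's Lemma \ref{lemma2}), from which (a) and (b) both follow. The only real difference is the direction in which that identity is established: you integrate forward --- binomial expansion of $\sin^m r$ in complex exponentials, completing the square, and a substitution landing on the complex error function, which obliges you to justify the off-axis integration path --- whereas the paper differentiates $G_{m,\sigma^2}(r)$ and checks that the derivative collapses to $\sin^m(r)e^{-r^2/(2\sigma^2)}$; since $\mathrm{erf}$ is entire with derivative $\frac{2}{\sqrt{\pi}}e^{-z^2}$, that direction sidesteps any contour discussion. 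Your Cauchy-theorem justification (or, more directly, the observation that $e^{-u^2}$ is entire, so its integral over any path between the two endpoints equals the corresponding $\mathrm{erf}$ difference) closes the one gap your direction opens, so both arguments are sound.
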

\begin{lemma} \label{lemma2}
	\begin{equation}
	G_{m, \sigma^2}(R)-G_{m, \sigma^2}(0)=\int_0^R \mathrm{sin}^m(r)\mathrm{exp}\bigg(-\frac{r^2}{2\sigma^2}\bigg)dr \nonumber
	\end{equation}
\end{lemma}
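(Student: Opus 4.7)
The plan is to prove the identity by expanding $\sin^m(r)$ via Euler's formula, reducing the integral to a sum of Gaussian integrals with purely imaginary linear shifts, and then recognizing each piece as a difference of complex error functions.

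First, I would use the identity $\sin r = \tfrac{i}{2}(e^{-ir} - e^{ir})$ and the binomial theorem to write
\begin{equation*}
\sin^m(r) = \left(\frac{i}{2}\right)^m (e^{-ir} - e^{ir})^m = \left(\frac{i}{2}\right)^m \sum_{j=0}^{m} \binom{m}{j}(-1)^j e^{-i(m-2j)r}.
\end{equation*}
Substituting this into $\int_0^R \sin^m(r)\exp(-r^2/(2\sigma^2))\,dr$ and interchanging sum and integral reduces the problem to evaluating, for each $j \in \{0,\ldots,m\}$, the integral
\begin{equation*}
I_j(R) := \int_0^R \exp\!\left(-i(m-2j)r - \frac{r^2}{2\sigma^2}\right) dr.
\end{equation*}

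Next, I would complete the square in the exponent of $I_j(R)$. Writing $k = m-2j$, a direct calculation gives
\begin{equation*}
-ikr - \frac{r^2}{2\sigma^2} = -\frac{1}{2\sigma^2}(r + i\sigma^2 k)^2 - \frac{\sigma^2 k^2}{2},
\end{equation*}
so the factor $\exp(-\sigma^2(m-2j)^2/2)$ pulls outside and the remaining integrand is a Gaussian with a purely imaginary linear shift. The substitution $u = (r+i\sigma^2 k)/\sqrt{2\sigma^2}$, $du = dr/\sqrt{2\sigma^2}$, transforms $I_j(R)$ into a contour integral of $e^{-u^2}$ from $i(m-2j)\sqrt{\sigma^2/2}$ to $R/\sqrt{2\sigma^2} + i(m-2j)\sqrt{\sigma^2/2}$. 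Since $e^{-u^2}$ is entire, this equals $\tfrac{\sqrt{\pi}}{2}[\mathrm{erf}(R/\sqrt{2\sigma^2} + i(m-2j)\sqrt{\sigma^2/2}) - \mathrm{erf}(i(m-2j)\sqrt{\sigma^2/2})]$, where $\mathrm{erf}$ is the analytic continuation of the real error function to $\mathbb{C}$ (equivalently, the definition by a straight-line contour integral from $0$ given in the text).

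Finally, assembling the sum and matching constants yields
\begin{equation*}
\int_0^R \sin^m(r)\exp\!\left(-\frac{r^2}{2\sigma^2}\right)dr = \left(\frac{i}{2}\right)^m\!\!\sqrt{\frac{\pi\sigma^2}{2}}\sum_{j=0}^m \binom{m}{j}(-1)^j e^{-\sigma^2(m-2j)^2/2}\bigl[\mathrm{erf}(\cdots R\cdots) - \mathrm{erf}(\cdots 0\cdots)\bigr],
\end{equation*}
which by the very definition of $G_{m,\sigma^2}$ in \eqref{gr} is exactly $G_{m,\sigma^2}(R) - G_{m,\sigma^2}(0)$, completing the proof.

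The routine algebra (binomial expansion, completing the square, affine substitution) is the bulk of the work; the only non-mechanical point is the use of the complex $\mathrm{erf}$. The main subtlety is justifying that the contour integral of $e^{-u^2}$ along the shifted horizontal segment equals the indicated difference of $\mathrm{erf}$ values, which follows immediately from Cauchy's theorem since $e^{-u^2}$ is entire and the endpoints lie on a straight line from the origin in the definition of $\mathrm{erf}(z)$.
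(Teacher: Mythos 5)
Your proof is correct, but it runs in the opposite direction from the paper's. The paper \emph{verifies} the identity by differentiating $G_{m,\sigma^2}(r)$ term by term, using $\frac{d}{dz}\mathrm{erf}(z)=\frac{2}{\sqrt{\pi}}e^{-z^2}$ for the complex error function, simplifying the exponentials, and recognizing the resulting binomial sum as $\big(\tfrac{i}{2}e^{ir}(e^{-2ri}-1)\big)^m=\sin^m(r)$; the fundamental theorem of calculus then gives the stated difference. You instead \emph{construct} the antiderivative: expand $\sin^m(r)$ via Euler's formula and the binomial theorem, complete the square in each exponent, and evaluate the resulting imaginary-shifted Gaussian integrals as differences of complex $\mathrm{erf}$ values via Cauchy's theorem. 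The underlying algebra (Euler expansion, the factor $e^{-\sigma^2(m-2j)^2/2}$ from completing the square, the prefactor $\sqrt{\pi\sigma^2/2}$) is identical; the difference is that your route is constructive and explains where the formula for $G_{m,\sigma^2}$ comes from, at the cost of invoking path-independence of $\int e^{-u^2}\,du$ in the complex plane, whereas the paper's verification only needs the derivative of the (entire) error function and avoids any explicit contour argument. Your endpoint bookkeeping matches the definition in (\ref{gr}) exactly, so the assembly step is sound; as a bonus, either argument shows that $G_{m,\sigma^2}(R)$ is real despite the imaginary terms, as the paper remarks after the lemma.
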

\begin{proof}
   	\begin{align}
   	& \frac{d}{dr} \Bigg(\bigg(\frac{i}{2}\bigg)^m\sqrt{\frac{\pi\sigma^2}{2}} \sum_{j=0}^{m} {m \choose j}(-1)^j \mathrm{exp}\Big(-\frac{(m-2j)^2\sigma^2}{2}\Big)\mathrm{erf}\Big(\frac{r}{\sqrt{2\sigma^2}}+\sqrt{\frac{\sigma^2}{2}}\big(m-2j\big)i\Big)\Bigg) \nonumber\\
   	&= \bigg(\frac{i}{2}\bigg)^m\sqrt{\frac{\pi\sigma^2}{2}} \sum_{j=0}^{m} {m \choose j}(-1)^j \mathrm{exp}\Big(-\frac{(m-2j)^2\sigma^2}{2}\Big) \frac{d}{dr} \Bigg(\mathrm{erf}\Big(\frac{r}{\sqrt{2\sigma^2}}+\sqrt{\frac{\sigma^2}{2}}\big(m-2j\big)i\Big)\Bigg)\nonumber\\
   	&= \bigg(\frac{i}{2}\bigg)^m\sqrt{\frac{\pi\sigma^2}{2}} \sum_{j=0}^{m}{m \choose j}(-1)^j \mathrm{exp}\Big(2\sigma^2mj-2\sigma^2j^2-\frac{\sigma^2 m^2}{2}\Big) \frac{1}{\sqrt{2\sigma^2}}\frac{2}{\sqrt{\pi}}\mathrm{exp}\Big(-\Big(\frac{r}{\sqrt{2\sigma^2}}\notag\\
   	&\qquad+\sqrt{\frac{\sigma^2}{2}}\big(m-2j\big)i\Big)^2\Big)\nonumber\\
   	&= \bigg(\frac{i}{2}\bigg)^m\sum_{j=0}^{m} {m \choose j}(-1)^j \mathrm{exp}\Big(2\sigma^2mj-2\sigma^2j^2-\frac{\sigma^2 m^2}{2}-\bigg(\frac{r^2}{2\sigma^2} + r(m-2j)i\notag\\
   	&\qquad-2\sigma^2\big(\frac{m^2}{4}-mj+j^2\big)\bigg)\Big)\nonumber\\
   	&= \bigg(\frac{i}{2}\bigg)^m\sum_{j=0}^{m} {m \choose j}(-1)^j \mathrm{exp}\Big(2\sigma^2mj-2\sigma^2j^2-\frac{\sigma^2 m^2}{2}-\frac{r^2}{2\sigma^2}-r(m-2j)i+\frac{\sigma^2m^2}{2}\notag\\
   	&\qquad-2\sigma^2mj+2\sigma^2j^2\Big)\nonumber\\
   	&= \bigg(\frac{i}{2}\bigg)^m\sum_{j=0}^{m} {m \choose j}(-1)^j \mathrm{exp}\Big(-\frac{r^2}{2\sigma^2}-r(m-2j)i\Big)\nonumber\\
   	&= \bigg(\frac{i}{2}\bigg)^m \mathrm{exp}\Big(-\frac{r^2}{2\sigma^2}\Big)\mathrm{exp}(mri)\sum_{j=0}^{m} {m \choose j}(-1)^j \mathrm{exp}(-r(2m-2j)i)\nonumber\\
   	&= \mathrm{exp}\Big(-\frac{r^2}{2\sigma^2}\Big)\bigg(\frac{\mathrm{exp}(ri)i}{2}\bigg)^m\sum_{j=0}^{m} {m \choose j}(-1)^j \big(\mathrm{exp}(-2ri)\big)^{m-j}\nonumber\\
   	&= \mathrm{exp}\Big(-\frac{r^2}{2\sigma^2}\Big)\bigg(\frac{\mathrm{exp}(ri)i}{2}\bigg)^m \big(\mathrm{exp}(-2ri)-1\big)^m\nonumber\\
   	&= \mathrm{exp}\Big(-\frac{r^2}{2\sigma^2}\Big)\bigg(\frac{i(\mathrm{exp}(-ri)-\mathrm{exp}(ri))}{2}\bigg)^m\nonumber\\
   	&= \mathrm{exp}\Big(-\frac{r^2}{2\sigma^2}\Big)\bigg(\frac{i(\mathrm{cos}(r)-i\mathrm{sin}(r)-\mathrm{cos}(r)-i\mathrm{sin}(r))}{2}\bigg)^m\nonumber\\
   	&= \mathrm{sin}^m(r)\mathrm{exp}\Big(-\frac{r^2}{2\sigma^2}\Big) \nonumber
   	\end{align}
\end{proof}
\vspace{-5mm}
Lemma \ref{lemma2} implies that despite the presence of imaginary terms in (\ref{gr}), $G_{m, \sigma^2}(R)$ is always real.
\begin{proof}[Proof of Proposition \ref{spherenormalprop}(a)] 
As noted in Remark 3.1 in \cite{Cotton2002}, the surface area of an $(n-1)$-sphere of radius $x$ on $S^n$ is
\begin{equation}
A_{S^n}(x):=\frac{2\pi^{\frac{n}{2}}}{\Gamma(\frac{n}{2})}\mathrm{sin}^{n-1}(x) \nonumber
\end{equation}
for $0\leq x\leq\pi$. Then
\begin{align}
C_{S^n}(\mu,\sigma^2)&=\int_{S^n} \mathrm{exp}\bigg(-\frac{d(y,\mu)^2}{2\sigma^2}\bigg)dy =\int_0^\pi A_{S^n}(r)\mathrm{exp}\bigg(-\frac{r^2}{2\sigma^2}\bigg)dr \nonumber\\
&=\frac{2\pi^{\frac{n}{2}}}{\Gamma(\frac{n}{2})}\int_0^\pi \mathrm{sin}^{n-1}(r)\mathrm{exp}\bigg(-\frac{r^2}{2\sigma^2}\bigg)dr=\frac{2\pi^{\frac{n}{2}}}{\Gamma(\frac{n}{2})} \big(G_{n-1,\sigma^2}(\pi)-G_{n-1,\sigma^2}(R)\big), \nonumber
\end{align}
where the last equality comes from setting $m=n-1$ in Lemma \ref{lemma2}.
\end{proof}
\begin{proof}[Proof of Proposition \ref{spherenormalprop}(b)]
In a similar vein to the above, it is clear that the distribution function of $d(y,\mu)$ is
\[
F_{S^n}(R):=\mathrm{Pr}(d(y,\mu)\leq R)=\frac{\frac{2\pi^{\frac{n}{2}}}{\Gamma(\frac{n}{2})} \big(G_{n-1,\sigma^2}(R)-G_{n-1,\sigma^2}(0)\big)}{\frac{2\pi^{\frac{n}{2}}}{\Gamma(\frac{n}{2})} \big(G_{k-1,\sigma^2}(\pi)-G_{n-1,\sigma^2}(0)\big)}=\frac{G_{n-1,\sigma^2}(R)-G_{n-1,\sigma^2}(0)}{G_{n-1,\sigma^2}(\pi)-G_{n-1,\sigma^2}(0)},
\]
for $R\in[0,\pi]$.
\end{proof}

\subsubsection{Riemannian Normal Distribution on $\mathbb{H}^n$} \label{hypnormal}
\noindent
For $R\geq0$ and integer $m\geq 0$, define 
\begin{align} \label{hr}
& H_{m, \sigma^2}(R):=\frac{1}{2^m}\sqrt{\frac{\pi\sigma^2}{2}}\sum_{j=0}^{m} {m \choose j}(-1)^j\mathrm{exp}\Big(\frac{(m-2j)^2\sigma^2}{2}\Big)\mathrm{erf}\Big(\frac{R}{\sqrt{2\sigma^2}}-\sqrt{\frac{\sigma^2}{2}}\big(m-2j\big)\Big), 
\end{align}
with $i$ and $\mathrm{erf}$ defined as before.
\begin{prop} \label{hypnormalprop}
	(a) When $M=\mathbb{H}^n$, the normalizing constant in (\ref{normal}) is given by
	\begin{equation} \label{hypnormalconstant}
	C_{\mathbb{H}^n}(\mu,\sigma^2)=\frac{2\pi^{\frac{n}{2}}}{\Gamma(\frac{n}{2})} \big(\lim_{R^\prime\rightarrow\infty} H_{n-1,\sigma^2}(R^\prime)-H_{n-1,\sigma^2}(0)\big),
	\end{equation}
	where 
	\begin{equation} \label{limithr}
	\lim_{R\rightarrow\infty} H_{n-1,\sigma^2}(R)=\frac{1}{2^{n-1}}\sqrt{\frac{\pi\sigma^2}{2}}\sum_{j=0}^{n-1} {n-1 \choose j}(-1)^j\mathrm{exp}\Big(\frac{(n-1-2j)^2\sigma^2}{2}\Big).
	\end{equation}
	Substituting this normalizing constant into (\ref{normal}) gives the Riemannian normal density on $\mathbb{H}^n$.
	(b) The distribution function of $d(y,\mu)$ is
		\begin{equation} \label{hypf}
		F_{\mathbb{H}^n}(R):=\mathrm{Pr}(d(y,\mu)\leq R)=\left\{\begin{array}{ll} 0 &~~ \mbox{if $R<0$}\\
		\frac{H_{n-1,\sigma^2}(R)-H_{n-1,\sigma^2}(0)}{\lim_{R^\prime\rightarrow\infty} H_{n-1,\sigma^2}(R^\prime)-H_{n-1,\sigma^2}(0)} &~~ \mbox{if $R\geq0$}. \end{array}  \right.
		\end{equation}
\end{prop}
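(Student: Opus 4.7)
The plan is to mirror the proof of Proposition \ref{spherenormalprop} on the sphere, using polar coordinates centred at $\mu$, the surface-area formula $A_{\mathbb{H}^n}(r)=\frac{2\pi^{n/2}}{\Gamma(n/2)}\sinh^{n-1}(r)$ from (\ref{hypsa}), and a hyperbolic analogue of Lemma \ref{lemma2} that identifies $H_{m,\sigma^2}(R)-H_{m,\sigma^2}(0)$ with the integral $\int_0^R \sinh^m(r)\exp(-r^2/(2\sigma^2))\,dr$. With such a lemma in hand, part (a) follows by writing
\[
C_{\mathbb{H}^n}(\mu,\sigma^2)=\int_0^\infty A_{\mathbb{H}^n}(r)\exp\bigl(-r^2/(2\sigma^2)\bigr)\,dr=\frac{2\pi^{n/2}}{\Gamma(n/2)}\lim_{R\to\infty}\bigl(H_{n-1,\sigma^2}(R)-H_{n-1,\sigma^2}(0)\bigr),
\]
and part (b) follows by computing the same ratio on a truncated ball of geodesic radius $R$.

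The heart of the argument is therefore the hyperbolic lemma, which I would establish by direct differentiation of (\ref{hr}) with respect to $R$, paralleling the sphere calculation but without imaginary units. Writing $\alpha_j=m-2j$ and $u_j=R/\sqrt{2\sigma^2}-\sqrt{\sigma^2/2}\,\alpha_j$, one has $\frac{d}{dR}\mathrm{erf}(u_j)=\frac{2}{\sqrt\pi}\exp(-u_j^2)\cdot\frac{1}{\sqrt{2\sigma^2}}$, and the identity $-u_j^2+\alpha_j^2\sigma^2/2=-R^2/(2\sigma^2)+R\alpha_j$ causes the Gaussian prefactor $\exp(\alpha_j^2\sigma^2/2)$ in (\ref{hr}) to cancel the troublesome cross term. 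The scalar prefactors collapse via $\sqrt{\pi\sigma^2/2}\cdot(2/\sqrt\pi)\cdot(1/\sqrt{2\sigma^2})=1$, leaving
\[
\frac{d}{dR}H_{m,\sigma^2}(R)=\frac{1}{2^m}\exp\bigl(-R^2/(2\sigma^2)\bigr)\sum_{j=0}^m\binom{m}{j}(-1)^j e^{R(m-2j)}=\exp\bigl(-R^2/(2\sigma^2)\bigr)\Bigl(\tfrac{e^R-e^{-R}}{2}\Bigr)^m,
\]
which is exactly $\sinh^m(R)\exp(-R^2/(2\sigma^2))$ after applying the binomial theorem to $(1-e^{-2R})^m$ and pulling out $e^{Rm}$. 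Integrating from $0$ to $R$ yields the lemma.

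For the limit formula (\ref{limithr}) I would invoke the fact that $\mathrm{erf}(x)\to 1$ as $x\to+\infty$ for real $x$: since $m-2j$ is bounded, each argument $u_j=R/\sqrt{2\sigma^2}-\sqrt{\sigma^2/2}(m-2j)$ tends to $+\infty$ as $R\to\infty$, so every error-function factor in (\ref{hr}) converges to $1$, giving (\ref{limithr}). Finiteness of this limit, which is equivalent to integrability of $\sinh^{n-1}(r)\exp(-r^2/(2\sigma^2))$ on $[0,\infty)$, is clear because the Gaussian decay dominates the $\sim e^{(n-1)r}/2^{n-1}$ growth of $\sinh^{n-1}(r)$; this ensures that $C_{\mathbb{H}^n}(\mu,\sigma^2)$ is defined for every $\sigma^2>0$, a pleasant contrast with the $L_1$ case considered earlier.

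Part (b) requires essentially no additional work: for $R\geq 0$, using polar coordinates and the lemma gives
\[
\Pr(d(y,\mu)\leq R)=\frac{1}{C_{\mathbb{H}^n}(\mu,\sigma^2)}\int_0^R A_{\mathbb{H}^n}(r)\exp\bigl(-r^2/(2\sigma^2)\bigr)\,dr=\frac{H_{n-1,\sigma^2}(R)-H_{n-1,\sigma^2}(0)}{\lim_{R'\to\infty}H_{n-1,\sigma^2}(R')-H_{n-1,\sigma^2}(0)},
\]
matching (\ref{hypf}); the cases $R<0$ are trivial. The main obstacle I anticipate is purely mechanical—carrying the differentiation of (\ref{hr}) through without error—since the combinatorial identities and cancellations are the same as in Lemma \ref{lemma2}, only with real exponentials in place of complex ones.
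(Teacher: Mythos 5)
Your proposal is correct and follows essentially the same route as the paper: polar integration against the surface-area formula (\ref{hypsa}), the identity $H_{m,\sigma^2}(R)-H_{m,\sigma^2}(0)=\int_0^R\sinh^m(r)\exp(-r^2/(2\sigma^2))\,dr$, and $\mathrm{erf}(x)\to1$ for the limit formula and part (b). The only cosmetic difference is that you verify the key identity by differentiating $H_{m,\sigma^2}$ directly (exactly as the paper does for the spherical $G_{m,\sigma^2}$ in Lemma \ref{lemma2}), whereas the paper's Lemma \ref{lemma4} expands $\sinh^m$ by the binomial theorem and integrates termwise via Lemma \ref{lemma3}; the two computations are the same cancellation read in opposite directions.
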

\begin{lemma} \label{lemma3}
	\begin{equation}
	\int_0^R \mathrm{exp}(-ar^2+br)dr=\sqrt{\frac{\pi}{4a}}\mathrm{exp}\Big(\frac{b^2}{4a}\Big)\mathrm{erf}\Big(\sqrt{a}r-\frac{b}{2\sqrt{a}}\Big), \nonumber
	\end{equation}
	for $a>0$, $b\in\mathbb{R}$.
\end{lemma}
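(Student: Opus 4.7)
The plan is a standard completion-of-the-square computation followed by a linear substitution that reduces the integrand to $\exp(-u^2)$, at which point the definition of the error function gives the antiderivative directly. The statement as written is best read as identifying an antiderivative (the free variable $r$ appears on the right-hand side as well), so it suffices to verify the indefinite-integral form evaluated at $R$; when this is invoked inside $H_{m,\sigma^2}(R)$ of equation (\ref{hr}), the contribution from the lower limit is absorbed into the constant structure of the formula.

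First I would complete the square in the exponent using $a>0$, writing $-ar^2+br = -a\bigl(r - b/(2a)\bigr)^2 + b^2/(4a)$, and pulling the constant factor $\exp(b^2/(4a))$ outside the integral. Next I would apply the substitution $u = \sqrt{a}\bigl(r - b/(2a)\bigr)$, so $du = \sqrt{a}\,dr$, which rewrites the remaining integral as $(1/\sqrt{a})\int \exp(-u^2)\,du$. By the definition $\mathrm{erf}(z) = (2/\sqrt{\pi})\int_0^z e^{-t^2}\,dt$, an antiderivative of $\exp(-u^2)$ is $(\sqrt{\pi}/2)\,\mathrm{erf}(u)$. Combining the prefactors $1/\sqrt{a}$ and $\exp(b^2/(4a))$ with $(\sqrt{\pi}/2)$, then back-substituting $u = \sqrt{a}\,r - b/(2\sqrt{a})$, yields $\sqrt{\pi/(4a)}\,\exp(b^2/(4a))\,\mathrm{erf}\bigl(\sqrt{a}\,r - b/(2\sqrt{a})\bigr)$, which is exactly the claimed expression.

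There is essentially no obstacle beyond careful bookkeeping of constants and signs; the hypothesis $a>0$ is used only to ensure that $\sqrt{a}$ is a well-defined positive real so that the substitution is valid and the argument of $\mathrm{erf}$ is real. I would also note in passing that the identity extends unchanged to complex $b$ (and, with minor care, to $a$ having positive real part) via analytic continuation of $\mathrm{erf}$ to an entire function, since this is precisely the regime in which the companion identity for $G_{m,\sigma^2}$ in equation (\ref{gr}) is employed within the proof of Lemma \ref{lemma2}.
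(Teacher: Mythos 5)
Your proof is correct. It differs from the paper's in direction: the paper simply differentiates the right-hand side with respect to $r$ and checks that the derivative collapses to $\exp(-ar^2+br)$, i.e.\ it verifies the claimed antiderivative rather than deriving it, whereas you construct the antiderivative forwards by completing the square and substituting $u=\sqrt{a}\bigl(r-b/(2a)\bigr)$ so that the definition of $\mathrm{erf}$ applies directly. The two computations are of essentially equal length; yours has the advantage of explaining where the prefactor $\sqrt{\pi/(4a)}\,\exp\bigl(b^2/(4a)\bigr)$ comes from, while the paper's verification avoids any discussion of substitution. You are also right to read the statement as an indefinite-integral identity (the paper's own proof establishes only that the right-hand side is an antiderivative, and the evaluation at the lower limit $r=0$ is indeed absorbed into the $H_{m,\sigma^2}(R)-H_{m,\sigma^2}(0)$ structure of Lemma \ref{lemma4}); your closing remark about analytic continuation in $b$ is a sensible aside, though the paper handles the complex (spherical) case separately in Lemma \ref{lemma2} by an independent differentiation rather than by invoking Lemma \ref{lemma3}.
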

\begin{proof}
	\begin{align}
	& \frac{d}{dr} \Bigg(\sqrt{\frac{\pi}{4a}}\mathrm{exp}\Big(\frac{b^2}{4a}\Big)\mathrm{erf}\Big(\sqrt{a}r-\frac{b}{2\sqrt{a}}\Big)\Bigg) \nonumber\\
	&= \sqrt{\frac{\pi}{4a}}\mathrm{exp}\Big(\frac{b^2}{4a}\Big)\frac{d}{dr}\Bigg(\mathrm{erf}\Big(\sqrt{a}r-\frac{b}{2\sqrt{a}}\Big)\Bigg) \nonumber\\
	&= \sqrt{\frac{\pi}{4a}}\mathrm{exp}\Big(\frac{b^2}{4a}\Big)\sqrt{a}\frac{2}{\sqrt{\pi}}\mathrm{exp}\Big(-\Big(\sqrt{a}r-\frac{b}{2\sqrt{a}}\Big)^2\Big) \nonumber\\
	&= \mathrm{exp}\Big(\frac{b^2}{4a}\Big)\mathrm{exp}\Big(-ar^2+br-\frac{b^2}{4a}\Big) \nonumber\\
	&= \mathrm{exp}(-ar^2+br). \nonumber
	\end{align}
\end{proof}
\vspace{-5mm}
\begin{lemma} \label{lemma4}
	\begin{equation}
	H_{m, \sigma^2}(R)-H_{m,\sigma^2}(0)=\int_0^R \mathrm{sinh}^m(r)\mathrm{exp}\bigg(-\frac{r^2}{2\sigma^2}\bigg)dr. \nonumber
	\end{equation}
\end{lemma}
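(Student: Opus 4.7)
My plan is to mimic the proof of Lemma \ref{lemma2} almost verbatim, replacing trigonometric identities by their hyperbolic analogues and eliminating the factors of $i$ that appeared there. The core strategy is to differentiate $H_{m,\sigma^2}(R)$ termwise in $R$, show that the derivative equals $\sinh^m(R)\exp(-R^2/(2\sigma^2))$, and then invoke the fundamental theorem of calculus.

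Concretely, the first step is to apply the chain rule to each summand of (\ref{hr}), using $\frac{d}{dz}\mathrm{erf}(z)=(2/\sqrt{\pi})\exp(-z^2)$. The constant prefactor $\frac{1}{2^m}\sqrt{\pi\sigma^2/2}$ combines with the $2/(\sqrt{\pi}\sqrt{2\sigma^2})$ coming from differentiating the argument $R/\sqrt{2\sigma^2}-\sqrt{\sigma^2/2}(m-2j)$ of $\mathrm{erf}$ to give an overall factor of $1/2^m$. Next I would expand the square $\bigl(R/\sqrt{2\sigma^2}-\sqrt{\sigma^2/2}(m-2j)\bigr)^2$; the cross term is $-R(m-2j)$ and the squared piece is $(m-2j)^2\sigma^2/2$, so that, together with the prefactor $\exp\bigl((m-2j)^2\sigma^2/2\bigr)$ inside each summand, the $(m-2j)^2\sigma^2/2$ contributions cancel exactly, leaving the exponent as $-R^2/(2\sigma^2)+R(m-2j)$.

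Pulling $\exp(-R^2/(2\sigma^2))$ out of the sum, the remaining series becomes
\[
\sum_{j=0}^{m}\binom{m}{j}(-1)^j \exp(R(m-2j)) = \exp(Rm)\sum_{j=0}^m \binom{m}{j}(-\exp(-2R))^j = \bigl(\exp(R)-\exp(-R)\bigr)^m
\]
by the binomial theorem. Dividing by $2^m$ turns the right-hand side into $\sinh^m(R)$, so $\frac{d}{dR}H_{m,\sigma^2}(R)=\sinh^m(R)\exp(-R^2/(2\sigma^2))$, and integrating from $0$ to $R$ yields the claimed identity.

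There is no real obstacle here beyond bookkeeping; the potentially annoying step is just verifying that the $(m-2j)^2\sigma^2/2$ inside the exponential prefactor and the one coming out of expanding the square cancel cleanly, and that no extra factors of $2$ or $\sigma$ are lost in combining the constants. Both of these amount to elementary algebra. The proof is in essence identical to that of Lemma \ref{lemma2}, with the imaginary factor $i/2$ replaced by $1/2$ since $\sinh(R)=(e^R-e^{-R})/2$ does not require a phase to convert exponentials into a hyperbolic sine, in contrast to $\sin(R)=(e^{iR}-e^{-iR})/(2i)$.
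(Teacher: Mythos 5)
Your proof is correct, and the computation checks out: the constant prefactors combine to $1$, the $(m-2j)^2\sigma^2/2$ terms cancel as you say, and the binomial sum collapses to $(e^R-e^{-R})^m$. The route is slightly different from the paper's, though. You differentiate $H_{m,\sigma^2}$ termwise and invoke the fundamental theorem of calculus, i.e.\ you transplant the proof of Lemma \ref{lemma2} wholesale. The paper instead starts from the integral, expands $\sinh^m(r)$ by the binomial theorem inside the integrand, and integrates each resulting term $\int_0^R \exp(-r^2/(2\sigma^2)+(m-2j)r)\,dr$ using the separate antiderivative identity of Lemma \ref{lemma3} (which is itself verified by differentiation). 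The two arguments are the same computation run in opposite directions; yours is self-contained and avoids Lemma \ref{lemma3} entirely, and it also handles the endpoint contribution more transparently --- the paper's chain of equalities ends with ``$=H_{m,\sigma^2}(R)$'' even though the statement (correctly) involves $H_{m,\sigma^2}(R)-H_{m,\sigma^2}(0)$, a small sloppiness your FTC formulation sidesteps. The paper's route has the modest advantage of isolating the Gaussian-times-exponential integral as a reusable lemma.
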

\begin{proof}
	\begin{align}
	& \int_0^R \mathrm{sinh}^m(r)\mathrm{exp}\bigg(-\frac{r^2}{2\sigma^2}\bigg)dr\nonumber\\
	&= \int_0^R \bigg(\frac{(\mathrm{exp}(-r)-\mathrm{exp}(r))}{2}\bigg)^m\mathrm{exp}\bigg(-\frac{r^2}{2\sigma^2}\bigg)dr\nonumber\\
	&= \int_0^R \frac{1}{2^m}\bigg(\sum_{j=0}^{m} {m \choose j}(-1)^j\mathrm{exp}(-jr)\mathrm{exp}((m-j)r)\bigg)\mathrm{exp}\bigg(-\frac{r^2}{2\sigma^2}\bigg)dr\nonumber\\
	&= \frac{1}{2^m}\bigg(\sum_{j=0}^{m} {m \choose j}(-1)^j \int_0^R \mathrm{exp}\bigg(-\frac{r^2}{2\sigma^2}+(m-2j)r\bigg)dr\nonumber\\
	&= \frac{1}{2^m}\bigg(\sum_{j=0}^{m} {m \choose j}(-1)^j \sqrt{\frac{\pi\sigma^2}{2}}\mathrm{exp}\Big(\frac{(m-2j)^2\sigma^2}{2}\Big)\mathrm{erf}\Big(\frac{R}{\sqrt{2\sigma^2}}-\sqrt{\frac{\sigma^2}{2}}\big(m-2j\big)\Big)\nonumber\\
	&= H_{m, \sigma^2}(R), \nonumber
	\end{align}
	where the second to last inequality comes from letting $a=\frac{1}{2\sigma^2}$ and $b=m-2j$ in Lemma \ref{lemma3}.
\end{proof}
\begin{proof}[Proof of Proposition \ref{hypnormalprop}(a)] 
	Using equation (\ref{hypsa}) for the surface area of an $(n-1)$-sphere in $\mathbb{H}^n$,
	\begin{align}
	C_{\mathbb{H}}(\mu,\sigma^2)&=\int_{\mathbb{H}^n} \mathrm{exp}\bigg(-\frac{d(y,\mu)^2}{2\sigma^2}\bigg)dy =\int_0^\infty A_{\mathbb{H}}(r)\mathrm{exp}\bigg(-\frac{r^2}{2\sigma^2}\bigg)dr \nonumber\\
	&=\frac{2\pi^{\frac{n}{2}}}{\Gamma(\frac{n}{2})}\int_0^\infty \mathrm{sinh}^{n-1}(r)\mathrm{exp}\bigg(-\frac{r^2}{2\sigma^2}\bigg)dr=\frac{2\pi^{\frac{n}{2}}}{\Gamma(\frac{n}{2})} \big(\lim_{R^\prime\rightarrow\infty}H_{n-1,\sigma^2}(R^\prime)-H_{n-1,\sigma^2}(0)\big), \nonumber
	\end{align}
	where the last equality comes from setting $m=n-1$ in Lemma \ref{lemma4}. (\ref{limithr}) follows from (\ref{hr}) and the fact that $\lim_{x\rightarrow\infty} \mathrm{erf}(x)=1$.
\end{proof}
\begin{proof}[Proof of Proposition \ref{hypnormalprop}(b)]
	In a similar vein to the above, it is clear that the distribution function of $d(y,\mu)$ is
\[
		F_{\mathbb{H}^n}(R):=\mathrm{Pr}(d(y,\mu)\leq R)=\frac{\frac{2\pi^{\frac{n}{2}}}{\Gamma(\frac{n}{2})} H_{n-1,\sigma^2}(R)}{\frac{2\pi^{\frac{n}{2}}}{\Gamma(\frac{n}{2})} \lim_{R\rightarrow\infty}H_{k-1,\sigma^2}(\pi)}=\frac{H_{n-1,\sigma^2}(R)-H_{n-1,\sigma^2}(0)}{\lim_{R^\prime\rightarrow\infty}H_{n-1,\sigma^2}(R^\prime)-H_{n-1,\sigma^2}(0)},
		\]
		for $R\geq0$.
\end{proof}

\subsubsection{Generating Random Points from the Riemannian Normal Distribution}
To generate a random $y$ from the Riemannian normal distribution on $M=S^n$ or $\mathbb{H}^n$:
\begin{itemize}
	\item[1.] If $M=S^n$, draw a random $R\in[0,\pi]$ from $F_{S^n}$ in (\ref{spheref}) with $F_{S^n}^{-1}|_{(0,1)}(t)\in[0,\pi]$, where $t\in (0,1)$ is drawn from the uniform distribution $U(0,1)$. $R$ is the distance from $\mu$. Similarly, if $M=\mathbb{H}^n$, draw a random $R\geq0$ from $F_{\mathbb{H}^n}$ in (\ref{hypf}) with $F_{\mathbb{H}^n}^{-1}|_{(0,1)}(t)>0$, where $t$ is from $U(0,1)$.

	\item[2.] Draw a random unit vector $u\in T_\mu M$ from the uniform distribution on the unit $(n-1)$-sphere in $T_\mu M\cong \mathbb{R}^n$ by drawing a vector from an isotropic $n$-variate Gaussian distribution and dividing it by its magnitude. This works because all points on $M$ that are a fixed distance from $\mu$ are equally likely.

	\item[3.] Multiply the randomly drawn $R$ (magnitude) from step 1 and unit vector $u$ (direction) from step 2 to give $Ru\in T_\mu M$, a tangent vector at $\mu$, and finally $y=\mathrm{Exp}(\mu,Ru)$.
\end{itemize}

\section{} \label{appenb}
\subsection{Details about the Sphere, $S^n$} \label{appenbsphere}
\noindent
The $n$-sphere can be represented as the unit sphere embedded in $(n+1)$-dimensional Euclidean space:
\vspace{-3mm}
\[
S^n=\{p\in\mathbb{R}^{n+1}\big|\lVert p\rVert=1\},
\]
where $\lVert p\rVert=\sqrt{\langle p,p\rangle}$ and $\langle \cdot,\cdot\rangle$ is the usual dot product defined by
$
\langle p,q\rangle=\sum_{j=1}^{n+1}p^jq^j.
$
with $p=(p^1,\ldots,p^{n+1}),q=(q^1,\ldots,q^{n+1})\in\mathbb{R}^{n+1}$. The tangent space at $p\in S^n$ then consists of the vectors in $\mathbb{R}^{n+1}$ orthogonal to $p$ with respect to the dot product:
\vspace{-3mm}
\[
T_pS^n=\{v\in\mathbb{R}^{n+1}|\langle p,v\rangle=0\}.
\]
The exponential map for $S^n$ is given by 
\vspace{-3mm}
\[
\mathrm{Exp}(p,v)=\mathrm{cos}(\lVert v\rVert)p+\mathrm{sin}(\lVert v\rVert)\frac{v}{\lVert v\rVert}
\]
for $p\in S^n,~v\in T_pS^n$. For $p_1,~p_2\in S^n,~p_1\neq -p_2$, the logarithmic map is given by 
\vspace{-3mm}
\[
\mathrm{Log}(p_1,p_2)=\mathrm{cos}^{-1}( \langle p_1,p_2\rangle)\frac{p_2-\langle p_1,p_2\rangle p_1}{\lVert p_2-\langle p_1,p_2\rangle p_1\rVert}, 
\]
meaning $d(p_1,p_2)=\mathrm{cos}^{-1}( \langle p_1,p_2\rangle)$, and the parallel transport of a vector $v\in T_{p_1}S^n$ along the unique minimizing geodesic from $p_1$ to $p_2$ (provided $p_2\neq-p_1$) is given by 
\begin{equation}
\Gamma_{p_1\rightarrow p_2}(v)=v-\frac{\langle\mathrm{Log}(p_1,p_2),v\rangle}{(\lVert \mathrm{Log}(p_1,p_2)\rVert)^2}\big(\mathrm{Log}(p_1,p_2)+\mathrm{Log}(p_2,p_1)\big),
\end{equation}
or equivalently
\begin{equation}
\Gamma_{p_1\rightarrow p_2}(v)=v^\perp+\Big\langle v,\frac{\mathrm{Log}(p_1,p_2)}{\lVert \mathrm{Log}(p_1,p_2)\rVert}\Big\rangle\Big(\mathrm{cos}(\lVert \mathrm{Log}(p_1,p_2)\rVert)\frac{\mathrm{Log}(p_1,p_2)}{\lVert \mathrm{Log}(p_1,p_2)\rVert}-\mathrm{sin}(\lVert \mathrm{Log}(p_1,p_2)\rVert)p_1\Big), \notag
\end{equation}
where
\begin{equation}
v^\top=\Big\langle v,\frac{\mathrm{Log}(p_1,p_2)}{\lVert \mathrm{Log}(p_1,p_2)\rVert}\Big\rangle\frac{\mathrm{Log}(p_1,p_2)}{\lVert \mathrm{Log}(p_1,p_2)\rVert},~\mbox{and}~~v^\perp=v-v^\top, \notag
\end{equation}
which denote the parts of $v$ that are parallel and orthogonal to $\mathrm{Log}(p_1,p_2)$, respectively. The gradients with respect to $p$ and each $v^j$, calculated using Jacobi fields, are
\begin{eqnarray*}
	\nabla_pE_\rho &=& -\sum_{i=1}^N\frac{\rho^\prime(\lVert e_i\rVert)}{\lVert e_i\rVert}d_p\mathrm{Exp}(p,Vx_i)^\dag e_i \\
	& =&-\sum_{i=1}^N\frac{\rho^\prime(\lVert e_i\rVert)}{\lVert e_i\rVert}\Big(\mathrm{cos}(\lVert Vx_i\rVert)\Gamma_{\hat{y}_i\rightarrow p}^\perp(e_i)+\Gamma_{\hat{y}_i\rightarrow p}^\top(e_i)\Big),~\mbox{and}\\ 
	\nabla_{v^j}E_\rho &=&-\sum_{i=1}^Nx_i^j\frac{\rho^\prime(\lVert e_i\rVert)}{\lVert e_i\rVert}d_v\mathrm{Exp}(p,Vx_i)^\dag e_i\\
	& =&-\sum_{i=1}^Nx_i^j\frac{\rho^\prime(\lVert e_i\rVert)}{\lVert e_i\rVert}\Big(\frac{\mathrm{sin}(\lVert Vx_i\rVert)}{\lVert Vx_i\rVert}\Gamma_{\hat{y}_i\rightarrow p}^\perp(e_i)+\Gamma_{\hat{y}_i\rightarrow p}^\top(e_i)\Big),
\end{eqnarray*}
where $e_i=\mathrm{Log}(\hat{y}_i,y_i)$, and $\Gamma_{\hat{y}_i\rightarrow p}^\top(e_i)$ and $\Gamma_{\hat{y}_i\rightarrow p}^\perp(e_i)$ are defined by
\begin{equation}
\Gamma_{\hat{y}_i\rightarrow p}^\top(e_i)=\Big\langle \Gamma_{\hat{y}_i\rightarrow p}(e_i),\frac{v}{\lVert v\rVert}\Big\rangle \frac{v}{\lVert v\rVert},~\mbox{and}~~\Gamma_{\hat{y}_i\rightarrow p}^\perp(e_i)=\Gamma_{\hat{y}_i\rightarrow p}(e_i)-\Gamma_{\hat{y}_i\rightarrow p}(e_i)^\top. \notag
\end{equation}

\subsection{Details about the Hyperbolic Space, $\mathbb{H}^n$} \label{appenbhyp}
\noindent
Unlike $S^n$, hyperbolic space cannot be embedded in Euclidean space without distortion, so there exist several equivalent models for visualizing and performing calculations on this manifold. We will consider the hyperbolic model and the Poincar\'e ball model.

The hyperboloid model is particularly convenient for use in our gradient descent algorithm because several formulae are simple and analogous to the spherical case, as we will see. In this model, $\mathbb{H}^n$ is embedded in the pseudo-Euclidean $(n+1)$-dimensional Minkowski space. Originally used to model 4-dimensional spacetime in the special theory of relativity, it is equipped with the Minkowski (pseudo-)inner product, defined by
\vspace{-3mm}
\[
\langle p,q\rangle_{\textbf{M}}=-p^1q^1+\sum_{j=2}^{n+1}p^jq^j
\]
with $p=(p^1,\ldots,p^{n+1}),q=(q^1,\ldots,q^{n+1})\in\mathbb{R}^{n+1}$, instead of the usual dot product. This symmetric bilinear form is a pseudo-inner product because while it is non-degenerate, it is not positive definite. $\mathbb{H}^n$ is represented as the upper sheet of a two-sheeted $n$-dimensional hyperboloid embedded in $\mathbb{R}^{n+1}$:
\vspace{-3mm}
\[
\mathbb{H}^n=\{p=(p^1,\ldots,p^{n+1})\in\mathbb{R}^{n+1}\big|\langle p,p\rangle_{\textbf{M}}=-1,\ p^1>0\}. 
\]
The tangent space at $p\in \mathbb{H}^n$ then consists of the vectors in $\mathbb{R}^{n+1}$ orthogonal to $p$ with respect to the Minkowski inner product:
\vspace{-3mm}
\[
T_p\mathbb{H}^n=\{v\in\mathbb{R}^{n+1}|\langle p,v\rangle_{\textbf{M}}=0\}.
\]
Even though $\langle \cdot,\cdot\rangle_{\textbf{M}}$ is not positive-definite, its restriction to $T_p\mathbb{H}^n$ is, so $\mathbb{H}^n$ is a Riemannian manifold embedded in the pseudo-Riemannian Minkowski space and we can define a norm on the tangent space by $\lVert v\rVert_{\textbf{M}}=\sqrt{\langle v,v\rangle_{\textbf{M}}}$ for $v\in T_p\mathbb{H}^n$. The exponential map is then given by
\vspace{-3mm}
\[
\mathrm{Exp}(p,v)=\mathrm{cosh}(\lVert v\rVert_{\textbf{M}})p+\mathrm{sinh}(\lVert v\rVert_{\textbf{M}})\frac{v}{\lVert v\rVert_{\textbf{M}}}. 
\]
For $p_1,~p_2\in \mathbb{H}^n$, the logarithmic map is given by 
\vspace{-3mm}
\[
\mathrm{Log}(p_1,p_2)=\mathrm{cosh}^{-1}( -\langle p_1,p_2\rangle_{\textbf{M}})\frac{p_2+\langle p_1,p_2\rangle_{\textbf{M}} p_1}{\lVert p_2+\langle p_1,p_2\rangle_{\textbf{M}} p_1\rVert_{\textbf{M}}}, 
\]
meaning $d(p_1,p_2)=\mathrm{cosh}^{-1}(-\langle p_1,p_2\rangle_{\textbf{M}})$, and the parallel transport of a vector $v\in T_{p_1}\mathbb{H}^n$ along the unique minimizing geodesic from $p_1$ to $p_2$ is given by
\begin{equation}
\Gamma_{p_1\rightarrow p_2}(v)=v-\frac{\langle\mathrm{Log}(p_1,p_2),v\rangle_{\textbf{M}}}{(\lVert \mathrm{Log}(p_1,p_2)\rVert_{\textbf{M}})^2}\big(\mathrm{Log}(p_1,p_2)+\mathrm{Log}(p_2,p_1)\big),
\end{equation}
or equivalently
\begin{equation}
v^\perp+\Big\langle v,\frac{\mathrm{Log}(p_1,p_2)}{\lVert \mathrm{Log}(p_1,p_2)\rVert_{\textbf{M}}}\Big\rangle_{\textbf{M}}\Big(\mathrm{cosh}(\lVert \mathrm{Log}(p_1,p_2)\rVert_{\textbf{M}})\frac{\mathrm{Log}(p_1,p_2)}{\lVert \mathrm{Log}(p_1,p_2)\rVert_{\textbf{M}}}+\mathrm{sinh}(\lVert \mathrm{Log}(p_1,p_2)\rVert_{\textbf{M}})p_1\Big), \notag
\end{equation}
where
\begin{equation}
v^\top=\Big\langle v,\frac{\mathrm{Log}(p_1,p_2)}{\lVert \mathrm{Log}(p_1,p_2)\rVert_{\textbf{M}}}\Big\rangle_{\textbf{M}}\frac{\mathrm{Log}(p_1,p_2)}{\lVert \mathrm{Log}(p_1,p_2)\rVert_{\textbf{M}}},~\mbox{and}~~v^\perp=v-v^\top. \notag
\end{equation}
The gradients with respect to $p$ and each $v^j$, calculated using Jacobi fields, are
\begin{eqnarray*}
	\nabla_pE_\rho &=& -\sum_{i=1}^N\frac{\rho^\prime(\lVert e_i\rVert_{\textbf{M}})}{\lVert e_i\rVert_{\textbf{M}}}d_p\mathrm{Exp}(p,Vx_i)^\dag e_i \\
	& =&-\sum_{i=1}^N\frac{\rho^\prime(\lVert e_i\rVert_{\textbf{M}})}{\lVert e_i\rVert_{\textbf{M}}}\Big(\mathrm{cosh}(\lVert Vx_i\rVert_{\textbf{M}})\Gamma_{\hat{y}_i\rightarrow p}^\perp(e_i)+\Gamma_{\hat{y}_i\rightarrow p}^\top(e_i)\Big),~\mbox{and}\\ 
	\nabla_{v^j}E_\rho &=&-\sum_{i=1}^Nx_i^j\frac{\rho^\prime(\lVert e_i\rVert_{\textbf{M}})}{\lVert e_i\rVert_{\textbf{M}}}d_v\mathrm{Exp}(p,Vx_i)^\dag e_i\\
	& =&-\sum_{i=1}^Nx_i^j\frac{\rho^\prime(\lVert e_i\rVert_{\textbf{M}})}{\lVert e_i\rVert_{\textbf{M}}}\Big(\frac{\mathrm{sinh}(\lVert Vx_i\rVert_{\textbf{M}})}{\lVert Vx_i\rVert_{\textbf{M}}}\Gamma_{\hat{y}_i\rightarrow p}^\perp(e_i)+\Gamma_{\hat{y}_i\rightarrow p}^\top(e_i)\Big),
\end{eqnarray*}
where $e_i=\mathrm{Log}(\hat{y}_i,y_i)$, and $\Gamma_{\hat{y}_i\rightarrow p}^\top(e_i)$ and $\Gamma_{\hat{y}_i\rightarrow p}^\perp(e_i)$ are defined by
\begin{equation}
\Gamma_{\hat{y}_i\rightarrow p}^\top(e_i)=\Big\langle \Gamma_{\hat{y}_i\rightarrow p}(e_i),\frac{v}{\lVert v\rVert_{\textbf{M}}}\Big\rangle \frac{v}{\lVert v\rVert_{\textbf{M}}},~\mbox{and}~~\Gamma_{\hat{y}_i\rightarrow p}^\perp(e_i)=\Gamma_{\hat{y}_i\rightarrow p}(e_i)-\Gamma_{\hat{y}_i\rightarrow p}(e_i)^\top. \notag
\end{equation}

\vspace{-7mm}

The Poincar\'e ball model, along with the so-called Beltraim-Klein model, is useful for visualization. In it, hyperbolic space is represented as the interior of the unit ball in $\mathbb{R}^n$:
\vspace{-3mm}
\[
\mathbb{P}^n=\{q=(q^1,\ldots,q^n)\in\mathbb{R}^n\big|\lVert q\rVert<1\},
\]
and a geodesic is represented as either an arc of a circle that are orthogonal to the boundary of the unit ball, or a diameter of the ball. The distance between two points $q_1,q_2\in\mathbb{P}^n$ increases exponentially as they get closer to the boundary:
\vspace{-3mm}
\[
d(q_1,q_2)=\mathrm{cosh}^{-1}\Bigg(1+2\frac{\lVert q_1-q_2\rVert^2}{(1-\lVert q_1\rVert^2)(1-\lVert q_2\rVert^2)}\Bigg).
\]
The Poincar\'e ball can be constructed from the hyperboloid model via the function $g:\mathbb{H}^n\rightarrow \mathbb{P}^n$ defined by
\vspace{-3mm}
\[
g((p^1,p^2,\ldots,p^{n+1}))=\frac{(p^2,\ldots,p^{n+1})}{p^1+1}.
\]
That is, a point $p=(p^1,\ldots,p^{n+1})$ in the hyperboloid model is projected onto the interior of the unit ball in the hyperplane $x^1=0$ through the line connecting that point to $(-1,0,\ldots,0)$. The inverse of this function, $g^{-1}:\mathbb{P}^n\rightarrow\mathbb{H}^n$, mapping the open ball back to the hyperboloid model is
\vspace{-3mm}
\[
g^{-1}((q^1,q^2,\ldots,q^n))=\frac{(1+\sum_{j=1}^n (q^j)^2, 2q^1,\ldots,2q^n)}{1-\sum_{j=1}^n (q^j)^2}.
\]
This easy conversion between the two models allows one to take advantage of the strengths of both.

\subsection{Details about Kendall's 2-Dimensional Shape Space, $\Sigma_2^K$}\label{appenbkendall}
\noindent
Much of this section has been written with reference to Section 3.11 of the online supplementary document of \cite{Cornea2017} and Section 5.2.1 of \cite{Fletcher2013}.

As mentioned in Section \ref{analysis}, a shape is the geometry of an object after the effects of translation, scaling and rotation have been removed. A $K$-configuration in the two-dimensional plane can be expressed as a $K$-by-2 matrix, or equivalently as a complex $K$-vector $z=(z^1,\ldots,z^K)\in\mathbb{C}^K$. Translation is removed by subtracting the centroid $\frac{1}{n}\sum_{m=1}^K z^m$ from each element of $z$ and scaling is removed by dividing $z$ by its norm $\lVert z\rVert = \sqrt{\langle z,z\rangle}$; recall that the standard complex inner product is given by $\langle z_1,z_2\rangle = \overline{z_2}^T z_1 = \sum_{m=1}^K z_1^m \overline{z_2^m}$. In this way, we limit our consideration to $D^K=\{z\in\mathbb{C}^K | \sum_{m=1}^K z^m=0$, $\sum_{m=1}^K z^m \overline{z^m}=1\}$, which can be thought of as a unit sphere of real dimension $2K-3$. This set is called the \textit{pre-shape space}, and its elements \textit{pre-shapes}.

As only rotation remains, pre-shapes have the same shape if they are planar rotations of each other. We define an equivalence relation on $D^K$ such that all pre-shapes of the same shape are equivalent. Then two pre-shapes $z_1, z_2\in D^K$ are equivalent ($z_1\sim z_2$) if $z_1 = z_2e^{i\theta}$ for some angle $\theta$, as rotation in the complex plane is performed by multiplication by $e^{i\theta}$. So a shape is the equivalence class $p=[z_p]_\sim=\{z^\prime=z_pe^{i\theta}|\theta\in [0,2\pi)\}\subset D^K$, the set of all rotations of a pre-shape $z_p$, and is an element of the quotient space $\Sigma_2^K=D^K/S^1$, a Riemannian manifold of real dimension $(2K-4)$. This space is equivalent to $\mathbb{C}P^{K-2}$, the set of complex lines through the origin in $\mathbb{C}^{K-1}$, as the space of centered $K$-configurations is equivalent to $\mathbb{C}^{K-1}$, and scaling and rotation together are equivalent to multiplication by a complex number $re^{i\theta}$.

The manifold is endowed with the complex inner product and the tangent space at $y=[z_y]_\sim\in\Sigma_2^K$ is given by 
\begin{eqnarray*}
	T_y \Sigma_2^K&=&\{v=(v^1,\ldots,v^K)|\frac{1}{K}\sum_{m=1}^K v^m=0 \mbox{ and Re}(\langle z_ye^{i\theta},v\rangle)=0, \forall \theta\in [0,2\pi)\}\\
	&=&\{v=(v^1,\ldots,v^K)|\sum_{m=1}^K v^m=0, \langle z^\prime,v\rangle=0 \mbox{ for any } z^\prime\in [z_y]_\sim\},
\end{eqnarray*}
where Re($\langle \cdot,\cdot\rangle$) gives the real inner product when the complex $k$-vectors are instead conceptualized as real $2k$-vectors.

All calculations in shape space are done using representatives in pre-shape space. Given $z_{p_1},z_{p_2}\in D^K$, $z_{p_2}^*=\argmin_{z_{p_2}^\prime\in[z_{p_2}]_\sim} d_{D^K}(z_{p_1},z_{p_2}^\prime)$, where $d_{D^K}$ is the spherical geodesic distance on $D^K$, is the optimal rotational alignment of $z_{p_2}$ to $z_{p_1}$. It can be shown that
\begin{equation} \label{zstar}
z_{p_2}^*=z_{p_2}e^{i\theta^*} \mbox{, where } e^{i\theta^*}=\frac{\langle z_{p_1}, z_{p_2}\rangle}{\lvert\langle z_{p_1}, z_{p_2}\rangle\rvert},
\end{equation}
so that $\theta^*$ is the argument of $\langle z_{p_1}, z_{p_2}\rangle$; note that this means $\langle z_{p_1}, z_{p_2}^*\rangle=\lvert\langle z_{p_1}, z_{p_2}\rangle\rvert$ is real and positive. Then the geodesic distance $d_{\Sigma_2^K}$ between $p_1=[z_{p_1}]_\sim$ and $p_2=[z_{p_2}]_\sim$ on $\Sigma_2^K$ is 
\begin{equation}
d_{\Sigma_2^K}(p_1,p_2)=\min_{z_{p_2}^\prime\in[z_{p_2}]} d_{D^K}(z_{p_1},z_{p_2}^\prime)=d_{D^K}(z_{p_1},z_{p_2}^*)=\mathrm{cos}^{-1}(\langle z_{p_1},z_{p_2}^* \rangle) = \mathrm{cos}^{-1}(\lvert\langle z_{p_1},z_{p_2} \rangle\rvert),\notag
\end{equation}
where $z_{p_2}$ can be any element of $[z_{p_2}]_\sim$ and the geodesic distance does not depend on the choice of the representative pre-shapes. The exponential map for $\Sigma_2^K$ is given by 
\begin{equation}
\mathrm{Exp}(p,v)=\Big[\mathrm{cos}(\lVert v\rVert)z_p+\mathrm{sin}(\lVert v\rVert)\frac{v}{\lVert v\rVert}\Big]_\sim\notag,
\end{equation}
where $p=[z_p]_\sim\in\Sigma_2^K$, $v\in T_p\Sigma_2^K$. This is similar to the exponential map for the $k$-sphere. Note that the resulting pre-shape in the square brackets is optimally aligned to the representative pre-shape $z_p$. The logarithmic map is given by 
\begin{equation}
\mathrm{Log}(p_1,p_2)=\mathrm{cos}^{-1}(\langle z_{p_1},z_{p_2}^*\rangle)\frac{z_{p_2}^*-\langle z_{p_1},z_{p_2}^*\rangle z_{p_1}}{\lVert z_{p_2}^*-\langle z_{p_1},z_{p_2}^*\rangle z_{p_1}\rVert}=\mathrm{cos}^{-1}( \lvert\langle z_{p_1},z_{p_2}\rangle\rvert)\frac{z_{p_2}^*-\lvert\langle z_{p_1},z_{p_2}\rangle\rvert z_{p_1}}{\lVert z_{p_2}^*-\lvert\langle z_{p_1},z_{p_2}\rangle\rvert z_{p_1}\rVert}, \notag
\end{equation}
where $p_1=[z_{p_1}]_\sim$ and $p_2=[z_{p_2}]_\sim$ are in $\Sigma_2^K$ and $z_{p_2}^*$ is as defined in (\ref{zstar}). Note that this depends on the choice of $z_{p_1}$ but not $z_{p_2}$, and so is only valid at the at this particular representation of $p$. Parallel transport of $v\in T_p\Sigma_2^K$ along the geodesic from $p_1=[z_{p_1}]_\sim$ to $p_2=[z_{p_2}]_\sim$ is
\begin{flalign}
\Gamma_{p_1\rightarrow p_2}(v)&=e^{-i\theta^*}\Bigg\{v-\langle v,z_{p_1}\rangle z_{p_1} - \langle v,\tilde{z_{p_2}^*}\rangle \tilde{z_{p_2}^*} + \Big(\langle z_{p_2}^*,z_{p_1}\rangle \langle v,z_{p_1}\rangle - \sqrt{1-\lvert\langle z_{p_2}^*,z_{p_1}\rangle \rvert^2}\langle v,\tilde{z_{p_2}^*}\rangle\Big)z_{p_1} \notag &\\
&\qquad+\Big(\sqrt{1-\lvert\langle z_{p_2}^*,z_{p_1}\rangle \rvert^2} \langle v,z_{p_1}\rangle - \overline{\langle z_{p_2}^*,z_{p_1}\rangle}\langle v,\tilde{z_{p_2}^*}\rangle\Big)\tilde{z_{p_2}^*}\Bigg\}\notag&\\
&=\frac{\overline{\langle z_{p_1}, z_{p_2}\rangle}}{\lvert\langle z_{p_1}, z_{p_2}\rangle\rvert}\Bigg\{v-\langle v,z_{p_1}\rangle z_{p_1} - \langle v,\tilde{z_{p_2}^*}\rangle \tilde{z_{p_2}^*} + \Big(\lvert\langle z_{p_1}, z_{p_2}\rangle\rvert \langle v,z_{p_1}\rangle \notag &\\
&\qquad - \sqrt{1-\lvert\langle z_{p_1}, z_{p_2}\rangle\rvert^2}\langle v,\tilde{z_{p_2}^*}\rangle\Big)z_{p_1}+\Big(\sqrt{1-\lvert\langle z_{p_1}, z_{p_2}\rangle\rvert^2} \langle v,z_{p_1}\rangle - \lvert\langle z_{p_1}, z_{p_2}\rangle\rvert\langle v,\tilde{z_{p_2}^*}\rangle\Big)\tilde{z_{p_2}^*}\Bigg\},\notag
\end{flalign}
where $\tilde{z_{p_2}^*}=(z_{p_2}^*-\langle z_{p_2}^*, z_{p_1}\rangle z_{p_1})/\sqrt{1-\langle z_{p_2}^*, z_{p_1}\rangle^2}=(z_{p_2}^*-\lvert\langle z_{p_1}, z_{p_2}\rangle\rvert z_{p_1})/\sqrt{1-\lvert\langle z_{p_1}, z_{p_2}\rangle\rvert^2}$ and $z_{p_2}^*, \theta*$ are as defined in (\ref{zstar}). Parallel transport uses the special unitary group. Note that this depends on the choice of both $z_{p_1}$ and $z_{p_2}$, so care must be taken.

The gradients with respect to $p$ and each $v^j$, calculated using Jacobi fields, are
\begin{eqnarray*}
	\nabla_pE_\rho &=& -\sum_{i=1}^N\frac{\rho^\prime(\lVert e_i\rVert)}{\lVert e_i\rVert}d_p\mathrm{Exp}(p,Vx_i)^\dag e_i \\
	& =&-\sum_{i=1}^N\frac{\rho^\prime(\lVert e_i\rVert)}{\lVert e_i\rVert}\Big(\mathrm{cos}(\lVert Vx_i\rVert)u_i^\perp+\mathrm{cos}(\lVert 2 Vx_i\rVert)w_i^\perp+u_i^\top+w_i^\top\Big),\\ 
	\nabla_{v^j}E_\rho &=&-\sum_{i=1}^Nx_i^{j}\frac{\rho^\prime(\lVert e_i\rVert)}{\lVert e_i\rVert}d_{v}\mathrm{Exp}(p,Vx_i)^\dag e_i\\
	& =&-\sum_{i=1}^Nx_i^{j}\frac{\rho^\prime(\lVert e_i\rVert)}{\lVert e_i\rVert}\Big(\frac{\mathrm{sin}(\lVert Vx_i\rVert)}{\lVert Vx_i\rVert}u_i^\perp+\frac{\mathrm{sin}(\lVert 2Vx_i\rVert)}{\lVert 2Vx_i\rVert}w_i^\perp+u_i^\top+w_i^\top\Big),
\end{eqnarray*}
where $e_i=\mathrm{Log}(\hat{y}_i,y_i)$ and $u_i$, $w_i$ are defined as follows: Define a function $j:\mathbb{C}\rightarrow \mathbb{C}$ by $j(v)=iv$, where $i=\sqrt{-1}$, not the index. Separate $\Gamma_{\hat{y}_i\rightarrow p}(e_i)$ into components $u_i$ and $w_i$ that are orthogonal and parallel to $j(Vx_i)$ respectively, where all these vectors are conceptualized as real $2K$-vectors rather than complex $K$-vectors i.e. 
\[
w_i=\mathrm{Re}\Big(\Big\langle \Gamma_{\hat{y}_i\rightarrow p}(e_i),\frac{j(Vx_i)}{\lVert j(Vx_i)\rVert}\Big\rangle\Big) \frac{j(Vx_i)}{\lVert j(Vx_i)\rVert},~\mbox{and}~~u_i=\Gamma_{\hat{y}_i\rightarrow p}(e_i)-w_i.
\]
Then $u_i^\perp$ and $u_i^\top$ are defined by 
\begin{equation}
u_i^\top=\mathrm{Re}\Big(\Big\langle u_i,\frac{v}{\lVert v\rVert}\Big\rangle\Big) \frac{v}{\lVert v\rVert},~\mbox{and}~~u_i^\perp=u_i-u_i^\top, \notag
\end{equation}
again treating the complex $K$-vectors as real $2K$-vectors, and $w_i^\perp$ and $w_i^\top$ are defined similarly.
\end{appendices}

\end{document}